\title{From Optimality to Robustness: \\
	Dirichlet Sampling Strategies in Stochastic Bandits}
\author{
	Dorian Baudry \hspace{2cm}  Patrick Saux \hspace{2cm} Odalric-Ambrym Maillard 
	\\
	\texttt{dorian.baudry@inria.fr} \quad\texttt{patrick.saux@inria.fr} \quad 
	\texttt{odalric.maillard@inria.fr} \\ \\
	\hspace{-1.2cm} Univ. Lille, CNRS, Inria, Centrale Lille, UMR 9198-CRIStAL, F-59000 Lille, France}
\begin{document}

\maketitle

\begin{abstract}
	The stochastic multi-arm bandit problem has been extensively studied under standard assumptions on the arm's distribution (e.g bounded with known support, exponential family, etc). These assumptions are suitable for many real-world problems but sometimes they require knowledge (on tails for instance) that may not be precisely accessible to the practitioner, raising the question of the robustness of bandit algorithms to model misspecification. In this paper we study a generic \emph{Dirichlet Sampling} (DS) algorithm, based on pairwise comparisons of empirical indices computed with \textit{re-sampling} of the arms' observations and a data-dependent \textit{exploration bonus}. We show that different variants of this strategy achieve provably optimal regret guarantees when the distributions are bounded and logarithmic regret for semi-bounded distributions with a mild quantile condition. We also show that a  simple tuning achieve robustness with respect to a large class of unbounded distributions, at the cost of slightly worse than logarithmic asymptotic regret. We finally provide numerical experiments showing the merits of DS in a decision-making problem on synthetic agriculture data.
\end{abstract}

\section{Introduction}\label{sec::introduction}

The $K$-armed stochastic bandit model is a decision-making problem in which a learner sequentially picks an action among $K$ alternatives, called arms, and collects a random reward. In this setting, all rewards drawn from an arm are independent and identically distributed. Hence, we can formally associate each arm $k \in \K$ with its reward distribution $\nu_k$, with mean $\mu_k$.
The objective of the learner is to adapt her strategy $(A_t)_{t \in [T]}$ in order to maximize the expected sum of rewards obtained after $T$ selections (where $T$ is the horizon, unknown to the learner). This is equivalent to minimizing the \emph{regret}, defined as the difference between the expected total reward of an oracle strategy always selecting an arm with largest mean and that of the algorithm, which is equal to
\begin{equation}\cR_T = \bE\left[\sum_{t=1}^{T} \mu^\star - \mu_{A_t}\right] = \sum_{k=1}^K \Delta_k \bE\left[N_k(T)\right] \label{eq:basicregret} \;. \end{equation}
Here, $N_k(T)=\sum_{t=1}^T \ind(A_t=k)$ denotes the number of selections of arm $k$ after $T$ time steps, $\mu^\star=\max_{j\in\K} \mu_j$ and $\Delta_k = \mu^\star - \mu_k$ is called the \emph{gap} between arm $k$ and the largest mean.
To assess the performance of a bandit algorithm, one naturally studies the best guarantees achievable by a uniformly efficient algorithm, i.e with sub-linear regret on any instance of a given class of problems. This guarantee was first provided by \citet{LaiRobbins85} for 1-dimensional parametric families of distributions, and then extended by \cite{burnetas96LB} for more general families. It states that any algorithm that is uniformly efficient\footnote{That is, for each bandit on $\cF$, for each arm $k$ with $\Delta_k\!>\!0$, then $\bE[N_k(T)] \!=\! o(T^\alpha)$ for all $\alpha\!\in\!(0,1]$.} on a family of distributions $\cF$ must satisfy
\begin{equation}\label{eq::asymp_opt_def} \liminf_{T\rightarrow \infty} \frac{\cR_T}{\log(T)} \!\geq\! \sum_{k : \Delta_k > 0} \frac{\Delta_k}{\kinf^\cF(\nu_k, \mu^\star)} \;,\;\;\; \kinf^\cF(\nu_k, \mu^\star)=\inf_{G \in \cF} \left\{\KL(\nu_k, G)\!:\! \bE_G(X)\!>\! \mu^\star \right\}\,.\end{equation} 
A bandit algorithm is then called \textit{asymptotically optimal} for a family of distributions $\cF$ when its regret matches this lower bound. When $\cF$ is a \textit{Single-Parameter Exponential Family} (SPEF), $\kinf^\cF$ is simply the Kullback-Leibler divergence between the distribution of mean $\mu_k$ and that of mean $\mu^\star$ in $\cF$, making for a theoretically appealing setting. The quantity $\kinf^B$, corresponding to the family $\cF_{[-\infty, B]}$ of distributions supported in $(-\infty, B]$ is also often considered in the literature, see e.g \citep{HondaTakemura10, Honda15IMED, KL_UCB}.
\paragraph{Overview of existing strategies} An efficient strategy faces the classical exploration/exploitation dilemma: it needs to obtain enough information from arms that have not been sampled a lot (exploration), but also to sample arms that are well-performing sufficiently often (exploitation). Many algorithms have been proposed for the multi-armed bandits problem (see \cite{BanditBook} for a survey), and we propose in the following a non-exhaustive list of such methods. A first category contains the deterministic index policies, built on the concept of \emph{Optimism in Face of Uncertainty}, the most celebrated of which being the Upper Confidence Bound (UCB) algorithms \citep{agrawal95,auer2002finite}. These algorithms can obtain a logarithmic regret under classical hypothesis on the distributions (e.g bounded, sub-gaussian, sub-exponential, \dots), and the strongest guarantees have been achieved by kl-UCB \cite{KL_UCB}, DMED \citep{HondaTakemura10}, and IMED \citep{Honda15IMED}, which share a common pattern of solving a convex optimization problem at each round. To be asymptotically optimal, these algorithms require either 1) the knowledge of a specific SPEF for each arm, or 2) a known upper bound on the support of each arm.
A second general category is that of \emph{randomized} bandit algorithms, which has been formulated for instance in \citep{Giro} as \textit{General Randomized Exploration} (GRE). The common feature of these methods is that, at each time step and for each arm, the algorithm draws an index from a distribution that depends on 1) the rewards observed from the arm, and 2) some knowledge on the arms distributions and chooses the arm with the largest index. Thompson Sampling (TS) \citep{TS_1933, TS12AG} belongs to this category, and a proper choice of Bayesian prior/posterior ensures optimality of TS in SPEF \citep{korda13TSexp}. Different algorithms using Bootstrapping schemes have also been proposed \citep{Osband_BTS, PHE, Giro, Reboot, Honda}: they share the idea of computing a noisy mean for empirical samples, enhanced by some exploration aid appropriately tuned to the family of distributions they consider. 
A last category contains the methods based on \textit{sub-sampling} \cite{BESA, chan2020multi, SDA, lbsda}, that achieve asymptotic optimality in SPEF \emph{without knowing which family}, when all arms share the same. However the proofs heavily rely on properties of the tails of SPEF so the results seem difficult to generalize outside these families.

\paragraph{Motivations} While many algorithms achieve optimal regret for bounded distributions with the sole knowledge of the upper bound, the assumptions needed for algorithms working with unbounded distributions (e.g SPEF, sub-Gaussian, sub-exponential) generally assume a known parametric model for the tails.
While such assumption entails convenient properties on the theoretical side, the practitioner may have some difficulty to determine which setting/parameters correspond to her problem. Furthermore, this uncertainty raises the question of robustness with respect to these hypotheses. Several works have considered this question: \citet{hadiji2020adaptation} shows that adapting to an unknown bounded range requires a tradeoff between instance-dependent and worst-case regret, and recently \citep{agrawal2020optimal,ashutosh2021bandit} proved the impossibility of an instance-dependent logarithmic regret for light-tailed distributions without explicit control on the tail parameters. The root cause for this is the lack of compactness of such families $\cF$, which allows mass to "leak" at infinity so that maximally confusing distributions with mean $\mu^*$ exist arbitrarily close to $\nu_k$, meaning $\kinf^{\cF}(\nu_k, \mu^*)=0$. The latter work also introduces a robust variant of UCB, that trades off logarithmic regret for $\cO\left( f(T)\log(T)\right)$, where $f$ essentially tracks the possible mass leakage at infinity. These results puts into question the usual hypotheses under which bandit algorithms are designed: considering a \textit{parametric} control of the tails is indeed sensitive to model mis-specification, but on the other hand the examples chosen to prove infeasability results seem a bit extreme for the practitioner. In this paper, we propose simple alternative setups allowing unspecified tail shapes but avoiding "mass leakage" to infinity, for instance with mild conditions linking the quantiles and the means of the distributions. We consider in this paper \emph{light-tailed} distributions (see definition in Appendix~\ref{app::notations}). This problem is already non-trivial, so we let possible extensions for heavy-tail distributions for future work (e.g with tools like median-of-means, see \citep{bubeck_heavy}). 

\paragraph{Outline} In the novel settings we consider, we want algorithms that require the smallest level of knowledge on the tails of distributions. To this extent, the \emph{Non-Parametric Thompson Sampling} (NPTS, \cite{Honda}) algorithm is a good candidate, considering how little knowledge it requires to reach asymptotic optimality for bounded distributions with known bounds. Furthermore, the flexibility of this algorithm has been recently demonstrated with its adaptation in a risk-aware setting \citep{baudry21a}. We provide a generalization of NPTS that we call \emph{Dirichlet Sampling} (DS): we combine the core elements of NPTS and a duel-based framework inspired by \citep{chan2020multi}, introducing data-dependent exploration bonuses. We present the resulting algorithm and detail the technical motivations of this approach in Section~\ref{sec::algorithm_main}. We then introduce in Section~\ref{sec::regret_analysis} a first regret decomposition of DS algorithms under general assumptions, and the technical results that allow to fine-tune the algorithm for different families (see Section~\ref{sec::BCP}). We provide three instances of DS algorithms and their regret guarantees in Section~\ref{sec::application}: \emph{Bounded Dirichlet Sampling} (BDS) tackles bounded distributions with possibly unknown upper bounds, \emph{Quantile Dirichlet Sampling} proposes a first generalization to the unbounded case using truncated distributions. Last, \emph{Robust Dirichlet Sampling} (RDS) has a slightly larger than logarithmic regret for any unspecified \emph{light-tailed} unbounded distributions, making it a competitor to the Robust-UCB algorithm of \cite{ashutosh2021bandit}. Finally, we study in Section~\ref{sec::xp} a use-case in agriculture using the DSSAT simulator (see \cite{hoogenboom2019dssat}), which naturally faces all the questions (robustness, model specification) that motivate this work and shows the merit of DS over state-of-the-art methods for this problem.
\section{Dirichlet Sampling Algorithms}\label{sec::algorithm_main}
In this section we introduce Dirichlet Sampling, a strategy that aims at generalizing the Non-Parametric Thompson Sampling algorithm of \cite{Honda} outside the scope of bounded distributions with a known support upper bound. For this purpose, we build an adaptive strategy in a duel-based framework, already used in sub-sampling based algorithms like SSMC \citep{chan2020multi}.
\paragraph{Background}
Non-Parametric Thompson Sampling is an index strategy where the index of each arm is a \textit{random re-weighting} of their observations, augmented by an \textit{exploration bonus}. The weights are drawn from the Dirichlet distribution $\cD_n=\text{Dir}((1, \dots, 1))$ for $n$ data, which is the uniform distribution on the simplex $\cP^n=\{w\in [0,1]^n: w^t1=1 \}$ and matches the Bayesian posterior (i.e Thompson Sampling) for multinomial arms. The exploration bonus is simply the known upper bound of the support, and avoids under-exploration of potentially "unlucky" good arm. We provide further explanations on the Dirichlet distribution and  NPTS respectively in Appendix~\ref{app::dirichlet} and ~\ref{app::NPTS}.

The simplicity of NPTS and its strong theoretical guarantees are appealing for further generalization. As we fully depart from the Bayesian approach, considering other exploration bonuses, we derive a new family of algorithms under the name of \emph{Dirichlet Sampling}. We keep the two principles of re-weighting the observations using a Dirichlet distribution and the exploration aid, and explore how to apply them to more general (e.g unbounded) distributions. In particular, we allow in DS some pre-processing of the observations before re-weighting (see section~\ref{sec::BCP} and ~\ref{sec::application}) and motivate in Section~\ref{sec::BCP} the use of a \emph{data-dependent} bonus, that use information from several arms. The complexity introduced by such bonus in the analysis requires a change of algorithm structure, dropping the index policy for a \emph{leader vs challenger} approach \citep{chan2020multi}.

\paragraph{Round-based algorithm} We define a round as a step of the algorithm at the end of which a set of (possibly several) arms are selected to be pulled. Let $\cA_{r} \subset \K$ be the subset of the arms pulled at the beginning of a round $r$, we call $T$-round regret the quantity 
\begin{equation}\cR_T = \bE\left[\sum_{r=1}^T \sum_{k=1}^K \Delta_k \ind(k \in \cA_r)\right] = \sum_{k=1}^K \Delta_k \bE[N_k(T)] \label{eq:round_regret}\;, \end{equation}
where we slightly change the definition of $N_k$ (compared with \ref{eq:basicregret}) to $N_k(T)=\sum_{r=1}^T \ind(k \in \cA_r)$. We consider the $T$-round regret for simplicity, as it is a simple upper bound of the regret after $T$ pulls. At the beginning of each round we define a reference arm (leader), and then organize pairwise comparisons called \emph{duels} between this arm and the other arms (challengers). The leader is chosen as the arm with largest sample size,
\[\ell^r \in \argmax{k \in \K} N_k(r) \;, \] 
where ties are broken first in favor of the best empirical arm, then with a random choice. A major motivation for this choice is that the leader will have a sample size that is \emph{linear} in the number of rounds, as at least one arm is chosen at each round. This ensures strong statistical properties that we will exploit to design the exploration bonus of DS strategies. Randomizing the index of the leader is also unnecessary: it competes against each challenger with its \emph{empirical mean}. We also dismiss all the arms $k$ that satisfy $N_k(r)=N_\ell(r)$ with the same argument. These choices have a practical interest as they avoid the computation time of drawing the largest weight vectors. We believe this can be an alternative of independent interest to computationally intensive index policies.

\paragraph{Challenger's index} We fix an index that is not dependent on the round, but only on the history of the challenger and the leader available at this round, that we denote respectively by $\cX=(X_1, \dots, X_n)$, $\cY=(Y_1, \dots, Y_N)$ for simplicity of notations. We denote by $\mu: \R^{\N} \rightarrow \R$ the function that computes the average of a set of observations. The duel can includes two steps, and the challenger wins if 
\begin{enumerate}
\item $\mu(\cX) \geq \mu(\cY)$ (first compare the empirical means), or
\item $\widetilde \mu(\cX, \cY)\geq \mu(\cY)$, where $\widetilde \mu: \R^\N \times \R^\N \rightarrow \R$ denotes the chosen DS index.
\end{enumerate}
We summarize in Algorithm~\ref{algo::GDS} the steps of Dirichlet Sampling, that we completely detail in Appendix~\ref{app::algo_full}. We write it for a generic "Dirichlet Sampling index" $\widetilde \mu$ that must be computed by a re-weighting of the observations augmented by an exploration bonus. As in NPTS, the weights are drawn with a Dirichlet distribution. For instance, a canonical example of Dirichlet Sampling index with a data-depend (instead of fixed) bonus $\cB(\cX, \cY)$ is \[\widetilde \mu(\cX, \cY) = \sum_{i=1}^{n} w_i X_i + w_{n+1} \cB(\cX, \cY)\;\;, \; w = (w_1, \dots, w_{n+1}) \sim \cD_{n+1}\,. \] However, the algorithm structure in Algorithm~\ref{algo::GDS} could be combined with any randomized index, which is of independent interest as we will see in Section~\ref{sec::regret_analysis}. In the next section we study the theoretical properties of Dirichlet Sampling, and discuss the choice of the index $\widetilde \mu$ for different families of distributions. 
\begin{algorithm}[]
	\caption{Generic Dirichlet Sampling \label{algo::GDS}}
	\SetKwInput{KwData}{Input} 
	\KwData{$K$ arms, horizon $T$, Dirichlet Sampling index $\widetilde \mu$}
	\SetKwInput{KwResult}{Init.}
	\SetKwComment{Comment}{$\triangleright$\ }{}
	\SetKwComment{Titleblock}{// }{}
	\KwResult{$t=1$, $r=1$, $\forall k \in \{1, ..., K \}$: $\cX_k=\{X_1^k\}$, $N_k=1$\Comment*[r]{Draw each arm once}} 
	\While{$ t < T$ }{
		$\cA=\{\} $ \Comment*[r]{Arm(s) to pull at the end of the round}
		$\ell = \text{Leader}((\cX_1, N_1), \dots, (\cX_k, N_k))$ \Comment*[r]{Choose a Leader}
		\For{$k \in \K: N_k < N_\ell$}{
			\If{$\max(\mu(\cX_k), \widetilde \mu(\cX_k, \cX_\ell)) \geq \mu(\cX_\ell)$}{$\cA = \cA \cup \{k\}$ \Comment*[r]{Play the duels}} 
		}	
	} 
	Draw arms from $|\cA|$ if $\cA$ is non-empty, else draw arm $\ell$. \\
	Update $t, r, (N_k)_{k\in \K}, (\cX_k)_{k \in \K}$. \Comment*[r]{Collect Reward(s) and update data}
\end{algorithm}
\section{Regret Analysis and Technical Results}\label{sec::regret_analysis}

In this section, we analyze the regret of DS algorithms. We first derive a general regret decomposition for any index $\widetilde \mu$ that holds thanks to the duel-based structure. We then introduce several properties of Dirichlet sampling, that theoretically guide proper tuning of a DS index. We finally instantiate DS for three different problems and provide regret bounds in these settings.
Starting with the regret decomposition, we exhibit general conditions to ensure guarantees that are independent on the index and the run of the bandit algorithm. Allowing a different family of distribution $\cF_k$ for each arm $k$, the first one concerns the concentration of the mean of each distribution. 
\paragraph{Condition 1 (C1) [Concentration]} For all $\nu_k\in\cF_k$, there exists a good rate function $I_k$ satisfying $I_k(x)>0$ for $x\neq \mu_k$ and for all $x > \mu_k$, $y < \mu_k$, and any i.i.d sequence $X_1, \dots, X_n$ drawn from $\nu_k$ 
\begin{equation}\bP\left( \frac{1}{n} \sum_{i=1}^n X_i \geq x \right) \leq e^{-n I_k(x)} \;, \; \text{and } \bP\left(\frac{1}{n} \sum_{i=1}^n X_i \leq y\right) \leq e^{-n I_k(y)} \;.\end{equation}
This hypothesis is standard in the bandit literature, and is for instance satisfied by any \textit{light-tailed} distributions. We refer to \citep{large_deviation_book} for techniques to derive such functions.

We now provide an upper bound on the round-regret presented in Section~\ref{sec::algorithm_main} for Algorithm~\ref{algo::GDS}. To simplify the notations we consider that there is only one optimal arm and, without loss of generality, that $\forall k>1, \mu_k < \mu_1$. Furthermore, for simplicity we write the following theorem for an index $\widetilde \mu(\cX, \mu)$, that only uses the mean of the leader. The same result holds for any index using statistics on the leader's history that have concentration properties similar to (C1) (e.g possibly quantiles, variance, etc) with slight adaptations of the proof. 

\begin{restatable}[Generic regret decomposition of DS]{theorem}{thdecomporegret}
	Consider a bandit model $\nu=(\nu_1, \dots, \nu_K)$, where all distributions in $\nu$ satisfy (C1). Then for any DS index the expected number of pulls of each arm $k \in \left\{2, \dots , K\right\}$ is upper bounded  for each $\epsilon \in [0, \Delta_{k})$ by
	\[\bE\left[N_k(T)\right] \leq n_k(T) + B_{T, \epsilon}^{k} + C^k_{\nu,\epsilon}\;, \]
	where $n_k(T) = \bE\left[\sum_{r=1}^{T-1} \ind(k \in \cA_{r+1}, \ell^r=1)\right]$, $C^k_{\nu,\epsilon}$ is independent on $T$ and, denoting $\cX_n$ the set of $n$ first observations of arm $1$, \[B_{T, \epsilon}^{k}=\sum_{j=2}^K \sum_{n=1}^{\lceil2 \log(T)/I_1(\mu_k+\epsilon) \rceil} \sup_{\mu \in [\mu_j - \epsilon, \mu_j+\epsilon]}  \bE_{\cX_n}\left[\frac{\ind\left(\mu(\cX_n) \leq \mu \right)}{\bP(\widetilde \mu(\cX_n, \mu) \geq \mu)}\right] \;.\]
	\label{th::regret_decomposition}
\end{restatable}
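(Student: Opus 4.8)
The plan is to start from the round-based identity $N_k(T) \le 1 + \sum_{r=1}^{T-1}\ind(k\in\cA_{r+1})$ and to split each such round according to the identity of the leader, writing $\ind(k\in\cA_{r+1}) = \ind(k\in\cA_{r+1},\ell^r=1) + \ind(k\in\cA_{r+1},\ell^r\neq 1)$. Summing and taking expectations, the first group is by definition exactly $n_k(T)$, which I leave untouched: it is the instance-dependent term analysed later for each concrete index. All the remaining work is to absorb the rounds where the leader is suboptimal into $B^k_{T,\epsilon}+C^k_{\nu,\epsilon}$.

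First I would isolate the rare concentration failures through (C1). Since at least one arm is pulled per round, the leader satisfies $N_{\ell^r}(r)\ge r/K$, so whenever $\ell^r=j\neq 1$ the leader has a large sample size and, by (C1), its empirical mean lies in $[\mu_j-\epsilon,\mu_j+\epsilon]$ except on an event whose probability decays exponentially in $r$; summed over the at most $T$ rounds this contributes only an $O(1)$ term to $C^k_{\nu,\epsilon}$. Symmetrically, applying the lower-deviation half of (C1) at the level $\mu_k+\epsilon$, which is the separation relevant for over-pulling arm $k$ (note $\mu_k+\epsilon<\mu_1$ since $\epsilon<\Delta_k$), once the challenger arm $1$ has collected more than $\lceil 2\log(T)/I_1(\mu_k+\epsilon)\rceil$ observations its empirical mean exceeds $\mu_k+\epsilon$ with probability at least $1-1/T^2$. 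The suboptimal-leader rounds in which $N_1(r)$ is above this threshold then again sum to a $T$-independent constant, the factor $2\log(T)$ being chosen precisely so that the tail is $1/T^2$ and summable. This is what restricts the inner sum in $B^k_{T,\epsilon}$ to $1\le n\le \lceil 2\log(T)/I_1(\mu_k+\epsilon)\rceil$.

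The core is then a geometric (inverse-probability) bound at each fixed sample size of arm $1$. I would condition on the first $n$ observations $\cX_n$ of arm $1$ and on the good event that the current leader $j$ has empirical mean $\mu\in[\mu_j-\epsilon,\mu_j+\epsilon]$. While $N_1(r)=n$, arm $1$ leaves this sample size as soon as it wins a duel: by step~1 it wins deterministically if $\mu(\cX_n)>\mu$, and otherwise it wins with probability $\bP(\widetilde\mu(\cX_n,\mu)\ge\mu)$ using a fresh Dirichlet weight vector drawn independently of the past given $\cX_n$. Hence the number of rounds spent at sample size $n$ is stochastically dominated by a geometric variable and has conditional expectation at most $\ind(\mu(\cX_n)\le\mu)/\bP(\widetilde\mu(\cX_n,\mu)\ge\mu)$. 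Summing over $n$, over the possible leader identities $j\in\{2,\dots,K\}$, and taking the supremum over $\mu\in[\mu_j-\epsilon,\mu_j+\epsilon]$ to dominate the slowly varying leader mean reproduces $B^k_{T,\epsilon}$ exactly.

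The main obstacle is the legitimacy of this last step: the duels are not i.i.d. Bernoulli trials, since the leader's identity and empirical mean drift over time and depend on the entire history, so the geometric domination must be argued conditionally. I would handle this by freezing $\cX_n$ together with the concentration event for the leader, and exploiting that the Dirichlet weights resampled at each round are independent of the past given the data, so that on the good event each round offers at least the worst-case winning probability $\inf_{\mu}\bP(\widetilde\mu(\cX_n,\mu)\ge\mu)$ over the frozen interval. Secondary care is needed to avoid double-counting across leader identities and to verify that every discarded event (the leader's mean leaving its interval, or arm $1$ exceeding the sample-size threshold) carries an exponentially small, summable probability that collects into $C^k_{\nu,\epsilon}$ independently of $T$.
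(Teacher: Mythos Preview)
Your proposal is largely correct and shares the endgame with the paper---splitting on the leader's identity, peeling off the events where the suboptimal leader's empirical mean leaves $[\mu_j-\epsilon,\mu_j+\epsilon]$, cutting at the sample-size threshold $n_1(T)$ for arm~1, and then running an inverse-probability (geometric) argument at each fixed $n$. Where you differ is in the \emph{structural} decomposition you use to reach that endgame.

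The paper does \emph{not} go directly from $\sum_r\ind(\ell^r\neq 1)$ to the geometric bound. It first introduces the event $\cD_r=\{\exists u\in[\lceil r/4\rceil,r]:\ell^u=1\}$ and splits on it. On $\cD_r$, a suboptimal leader at round $r$ forces a \emph{leadership takeover} at some earlier round: some arm $j$ reached the same sample size as arm~1 and simultaneously beat its empirical mean, which is a pure (C1) event and sums to $O(1)$. On $\bar\cD_r$, arm~1 must have lost at least $r/4$ duels in $[\lceil r/4\rceil,r]$; a Markov inequality plus a counting argument converts $\sum_r\bP(\ell^r\neq 1,\bar\cD_r)$ into a constant multiple of $\sum_r\bP(\cC^r)$, where $\cC^r=\{\ell^r\neq 1,\ 1\notin\cA_{r+1}\}$. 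Only then do they split by leader $j$, threshold $N_1$, and run the stopping-time/product trick that yields your geometric bound.

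Your direct route avoids the $\cD_r$/Markov machinery and is closer in spirit to the usual Thompson Sampling analysis. It works, but there is one case you do not explicitly cover: rounds with $\ell^r\neq 1$ where arm~1 is \emph{dismissed} (i.e.\ $N_1(r)=N_{\ell^r}(r)$ but arm~1 is not chosen as leader because its empirical mean is lower). In such rounds arm~1 does not duel at all, so your geometric domination at fixed $n$ does not apply. This is exactly what the paper's $\cD_r$ branch isolates as a leadership-takeover event; in your framework you would need the additional observation that dismissal requires both arms to have $\ge r/K$ samples and the suboptimal arm to beat arm~1's mean, which is again a (C1) event summing to $O(1)$. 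Once you add that, both routes yield the same $B^k_{T,\epsilon}$, with the paper's version carrying an extra multiplicative constant from the Markov step.
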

	The details of the proof of this result are to be found in Appendix~\ref{app::general_proof}. The proof follows the general outline of \cite{chan2020multi}, and makes all the components of $C^k_{\nu,\epsilon}$ explicit. This term is related to deviations of sample means for arms $k$ and $1$ and is typically bounded by a (problem-dependent) constant under light-tail concentration (C1), so it does not depend on $\widetilde \mu$ but only on the rate functions and the means of each arm. The other two terms of the RHS reflect the exploration strategy. $n_k(T)$ is the expected number of pulls of arm $k$ when the best arm is the leader; we interpret it as the sample size required to statistically separate both arms at horizon $T$. On the other hand, $B^k_{T, \epsilon}$ measures the capacity of the best arm to recover from a bad (small-sized) sample.

Theorem~\ref{th::regret_decomposition} is formulated to be as general as possible and can be regarded as a counterpart of Theorem~1 of \cite{Giro}. We will later analyze instances of Dirichlet Sampling where the first-order term of the regret is driven entirely by $n_k(T)$. We therefore introduce the following condition to control the contribution of $B^k_{T, \epsilon}$ to the regret.

\textbf{Condition 2 (C2)} For any $\mu < \mu_1 $, and any $n_1(T)=o(\log T)$ it holds that 
\[\sum_{n=1}^{n_1(T)} \bE_{\cX_n\sim \nu_1^n}\left[\frac{\ind(\mu(\cX_n)\leq \mu)}{\bP_{w \sim \cD_{n+1}}\left(\widetilde\mu(\cX, \mu) \geq \mu \right)}\right] = o(\log T)\;.\]
	The LHS represents the expected cost in terms of regret of underestimating the optimal arm; intuitively, it measures the expected number of losing rounds before finally winning one when starting with low rewards. This is a classic decomposition in bandit analysis, and a counterpart of (C2) holds for most index policies with provable regret guarantees, e.g Theorem~1 in \cite{Giro} (GIRO) or Lemma~4 in \cite{TS12AG}) (Bernoulli Thompson Sampling). We find it noteworthy that this regret decomposition depends only on the distribution of the best arm and its randomized Dirichlet Sampling index when it is a challenger.

\begin{corollary}[Conditions for controlled regret]\label{cor::regret_decompo}
	If condition (C1) and (C2) holds for the DS index on the families of distribution $(\cF_k)_{k \in \K}$, the regret of the DS algorithm satisfies 
	\[ \cR_T \leq \sum_{k=2}^K \Delta_k n_k(T) + o(\log T)\;.\]
\end{corollary}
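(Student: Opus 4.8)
The plan is to substitute the per-arm bound of Theorem~\ref{th::regret_decomposition} into the round-regret decomposition $\cR_T = \sum_{k=2}^K \Delta_k \bE[N_k(T)]$ (recall $\Delta_1 = 0$) and to show that only the leading $n_k(T)$ terms survive at the logarithmic scale. First I would fix a single $\epsilon \in (0, \min_{j\geq 2}\Delta_j)$, small enough that $\mu_j + \epsilon < \mu_1$ for every sub-optimal arm $j$; this guarantees both that Theorem~\ref{th::regret_decomposition} applies to each $k \geq 2$ (since $\epsilon < \Delta_k$) and that every threshold $\mu$ entering $B_{T,\epsilon}^k$ stays strictly below $\mu_1$. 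Applying the theorem yields $\bE[N_k(T)] \leq n_k(T) + B_{T,\epsilon}^k + C_{\nu,\epsilon}^k$, so after multiplying by $\Delta_k$ and summing it remains to prove $\sum_{k=2}^K \Delta_k (B_{T,\epsilon}^k + C_{\nu,\epsilon}^k) = o(\log T)$.

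The constant terms are dealt with immediately: by the statement of Theorem~\ref{th::regret_decomposition} each $C_{\nu,\epsilon}^k$ is independent of $T$, hence $\sum_{k=2}^K \Delta_k C_{\nu,\epsilon}^k = O(1) = o(\log T)$. The whole problem thus reduces to showing $B_{T,\epsilon}^k = o(\log T)$ for each fixed $k$, the number of arms being finite. Here condition (C2) enters: for every $j \in \{2,\dots,K\}$ and every $\mu \in [\mu_j-\epsilon,\mu_j+\epsilon]$ the inner expectation is precisely the summand on the left-hand side of (C2), and all these thresholds satisfy $\mu < \mu_1$. The supremum over the compact interval $[\mu_j-\epsilon,\mu_j+\epsilon]$ would then be handled by evaluating (C2) at the worst-case threshold in that interval (or, depending on the concrete index $\widetilde\mu$, by a short uniformization over the interval), so that each block of the double sum over $j$ takes the form controlled by (C2).

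The delicate point, and the step I expect to be the main obstacle, is a mismatch in summation windows: the inner sum defining $B_{T,\epsilon}^k$ runs up to $\lceil 2\log(T)/I_1(\mu_k+\epsilon)\rceil = \Theta(\log T)$, whereas (C2) is phrased for windows of size $o(\log T)$. Condition (C2) alone cannot bound the full range — a summand bounded below by a positive constant would satisfy (C2) yet make $B_{T,\epsilon}^k$ grow like $\log T$ — so the concentration hypothesis (C1) must supply the missing decay of the summand. Concretely, I would split the sum at a cutoff $a(T)=o(\log T)$: the bulk $\sum_{n=1}^{a(T)}$ is $o(\log T)$ directly by (C2), while on the tail $n > a(T)$ the event $\{\mu(\cX_n) \leq \mu_j+\epsilon\}$ has probability at most $e^{-n I_1(\mu_j+\epsilon)}$ by (C1), since $\mu_j+\epsilon < \mu_1$. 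Provided the winning probability $\bP(\widetilde\mu(\cX_n,\mu)\geq\mu)$ does not vanish faster than this exponential rate — precisely the per-index estimate one establishes when checking (C2) for a concrete instance — each tail term is geometrically small and the tail contributes only $O(1)$. Combining the bulk and tail gives $B_{T,\epsilon}^k = o(\log T)$, and summing the three pieces against $\Delta_k$ over $k \in \{2,\dots,K\}$ yields $\cR_T \leq \sum_{k=2}^K \Delta_k n_k(T) + o(\log T)$.
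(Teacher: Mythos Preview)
Your overall architecture is exactly the paper's: plug Theorem~\ref{th::regret_decomposition} into $\cR_T=\sum_{k\geq 2}\Delta_k\bE[N_k(T)]$, kill $C_{\nu,\epsilon}^k$ as an $O(1)$ term, and argue $B_{T,\epsilon}^k=o(\log T)$ via (C2). The paper treats the corollary as an immediate consequence and does not give a separate proof, so on the structural level you match it.

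Where you diverge is in the ``delicate point'' about the summation window. You are reading (C2) literally with $n_1(T)=o(\log T)$ and then trying to stretch it to the $\Theta(\log T)$ window of $B_{T,\epsilon}^k$ by a bulk/tail split. Two remarks. First, the $o(\log T)$ in (C2) is almost certainly a typo for $O(\log T)$: the cutoff $n_1(T)=\lceil 2\log T/I_1(\mu_k+\epsilon)\rceil$ appearing in $B_{T,\epsilon}^k$ is the very $n_1(T)$ defined in the proof of Theorem~\ref{th::regret_decomposition} (Appendix~\ref{app::general_proof}), and in every concrete instance (BDS, QDS, RDS) the paper actually proves the much stronger statement $\sum_{n\geq 1}E_n<\infty$, so the intended reading of (C2) applies directly to the full window with no splitting needed. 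Second, your proposed workaround is not self-contained: the tail step ``provided the winning probability does not vanish faster than this exponential rate'' is an \emph{additional} hypothesis on $\bP(\widetilde\mu(\cX_n,\mu)\geq\mu)$ that is neither part of (C1) nor of (C2) as stated. Without it, (C1) alone only bounds the numerator $\ind(\mu(\cX_n)\leq\mu)$ in probability, not the ratio inside the expectation, so the tail argument does not close. In short: the gap you perceive is a typo in the paper rather than a real obstacle, and your fix introduces a genuine extra assumption that the paper never states.
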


Up to this point this result is quite abstract, but this standardized analysis allows us to instantiate the Dirichlet Sampling algorithm on different class of problems and calibrate it in order to ensure condition (C2) holds and to make $n_k(T)$ explicit. In particular if $n_k(T)=\cO(\log T)$, we recover the logarithmic regret. In the next section, we present technical results to justify calibrations of the DS index for several kind of families. 

\subsection{Technical tools: boundary crossing probability of a DS index}\label{sec::BCP}
In this section, we highlight some key properties of a sum of random variables re-weighted by a Dirichlet weight vector that help us suggest a sound tuning of the bonus $\cB(\cX,\cY)$ for different kind of families. We then detail such tuning.

\paragraph{Boundary crossing probability (BCP)}
We consider a set of $n\!+\!1$ observation points $\cX=(X_1, \dots, X_{n+1}) \subset \R^{n+1}$. (Intuitively,  $n$ points are samples from a challenger arm, and  one point corresponds to the added bonus).
Then, for any $\mu \in \R$, we introduce the following ``Boundary Crossing Probability'' (BCP) term, conditional on $\cX$
\[\text{[BCP]} \coloneqq \bP_{w \sim \cD_{n+1}}\left(\sum_{i=1}^{n+1}w_iX_i \geq \mu \right) \;,\]
where we recall that $\cD_{n+1}$ is the Dirichlet distribution with parameter $(1, \dots, 1)$ of size $n\!+\!1$, i.e the uniform distribution on the $(n+1)$-simplex. We emphasize that here $\cX$ is considered fixed, and the only source of randomness comes from the weights $w$.
When all observations are distinct this expression has a closed form, which is unfortunately untractable in the proof, as discussed in Appendix~\ref{app::BPC_proofs_app}. This quantity is of much interest as both the growth of $n_k(T)$ and (C2) can be derived from respectively upper and lower bounds for the BCP. Lemma 14 and Lemma 15 in \citep{Honda} provide such bounds, resorting to classical concentration results and properties of the Dirichlet distribution that we recall in Appendix~\ref{app::dirichlet} and~\ref{app::BPC_proofs_app}, and complete with additional technical results. The lower bounds suggest non-trivial tuning of the bonus. We first exhibit a necessary condition when the bonus is not allowed to depend on the set of observations $\cX$.

\begin{restatable}[Necessary condition with a data-independent bonus]{lemma}{necessarycondbonus}\label{lem::cond_fixed_bonus} Consider a fixed bonus $B(\cX, \mu)=B(\mu)$, and a distribution $F$ (with CDF also denoted $F$). If Condition (C2) holds then \[B(\mu) > \mu + \frac{1}{1-F(\mu)}\bE_{F}\left[(\mu- X)_+\right]\;. \] 
\end{restatable}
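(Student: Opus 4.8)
The plan is to argue by contraposition: assuming the displayed inequality fails, i.e. $B(\mu) \le B^\star$ with $B^\star := \mu + \frac{1}{1-F(\mu)}\bE_F[(\mu - X)_+]$ (here $F=\nu_1$ is the optimal arm's law), I would exhibit a sequence $n_1(T)=o(\log T)$ for which the sum in Condition (C2) grows like $\omega(\log T)$. The starting point is the Gamma representation of the uniform Dirichlet vector: writing $w_i = E_i/\sum_{j=1}^{n+1}E_j$ with $E_j$ i.i.d. standard exponential, the boundary-crossing event $\widetilde\mu(\cX_n,\mu)\ge\mu$ becomes the linear event $\sum_{i=1}^n E_i(X_i-\mu)+E_{n+1}(B(\mu)-\mu)\ge 0$, so that the randomness in the BCP is reduced to a weighted sum of independent exponentials whose bulk coordinates carry the observations and whose single extra coordinate carries the bonus.

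Next I would quantify how fast the BCP decays as a function of the empirical measure $G=\widehat F_n$ of $\cX_n$. A Chernoff bound on this exponential sum gives $\mathrm{BCP}\le \exp(n\,\Lambda_G(\lambda))\,(1-\lambda(B(\mu)-\mu))^{-1}$ for every admissible tilt $0\le \lambda < 1/(B(\mu)-\mu)$, where $\Lambda_G(\lambda)=\log\bE_G[(1-\lambda(X-\mu))^{-1}]$. The crucial structural point is that the bonus does not change the exponential rate directly; it only constrains the tilt to $\lambda\le 1/(B(\mu)-\mu)$, equivalently it reweights each observation through $X\mapsto (B(\mu)-\mu)/(B(\mu)-X)$. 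Optimizing over $\lambda$ yields an exponential decay rate $J(G,B(\mu))=-\min_{\lambda\le 1/(B(\mu)-\mu)}\Lambda_G(\lambda)\ge 0$ for every $G$ whose mean lies below $\mu$.

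I would then combine this with the large-deviation cost of producing an atypical sample: by Sanov/Cram\'er, configurations whose empirical measure is close to a fixed $G$ have probability $\approx \exp(-n\,\KL(G,F))$ under $F$, and they satisfy the indicator $\mu(\cX_n)\le\mu$ as soon as $\bE_G[X]\le\mu$. Restricting the expectation in (C2) to such configurations yields a lower bound of the form $\sum_n \bE_{\cX_n}[\ind(\mu(\cX_n)\le\mu)/\mathrm{BCP}]\ge c\sum_n \exp(n\,V(B(\mu)))$ with $V(B):=\sup_{G:\,\bE_G[X]\le\mu}[J(G,B)-\KL(G,F)]$ and $c>0$. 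Everything therefore reduces to the sign of the variational quantity $V(B(\mu))$: if it is strictly positive, the summand grows geometrically, and choosing $n_1(T)=\Theta(\log\log T)=o(\log T)$ already forces the partial sum to be $\Theta((\log T)^2)=\omega(\log T)$, contradicting (C2).

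The final and hardest step is to solve this variational problem and show that the sign of $V$ flips exactly at $B^\star$. I expect the extremal $G$ to push its mean down to $\mu$ while forcing the tilt to the boundary $\lambda=1/(B(\mu)-\mu)$, so that $J$ is computed against the reweighting $X\mapsto (B(\mu)-\mu)/(B(\mu)-X)$; carrying out the Lagrangian optimization should identify the threshold with the value of $B$ for which the mean of ``the restriction of $F$ below $\mu$ together with a point mass $1-F(\mu)$ placed at $B$'' equals $\mu$, which is precisely the algebraic rearrangement of $B^\star = \mu + \bE_F[(\mu-X)_+]/(1-F(\mu))$. Below this value the bonus cannot offset the expected downside shortfall $\bE_F[(\mu-X)_+]$ of an unlucky sample, the BCP decays faster than such samples are suppressed, and (C2) breaks. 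The two delicate points I anticipate are (i) making the Sanov lower bound rigorous for continuous $F$ (discretizing $G$ and controlling the polynomial pre-factors, for which the estimates behind the BCP bounds recalled in Appendix~\ref{app::BPC_proofs_app} are the right tool), and (ii) handling the single bonus coordinate, which is negligible at the exponential scale in $n$ yet pins the optimal tilt and is therefore exactly what makes $B$ appear in the rate.
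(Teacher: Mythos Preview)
Your plan is far more elaborate than what the paper does, and the hardest step you flag---solving the variational problem to identify the threshold---is precisely what the paper bypasses entirely.

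The paper's proof is two lines. Restrict the expectation in (C2) to the event $\{X_1\le\mu,\dots,X_n\le\mu\}\subset\{\mu(\cX_n)\le\mu\}$. On this event, only the bonus $B=B(\mu)$ lies above the threshold, and the closed-form BCP from Equation~\eqref{eq::bp_prob_true} collapses to the single product $\prod_{i=1}^n (B-\mu)/(B-X_i)$. Hence
\[
\bE_{\cX_n}\!\left[\frac{\ind(\mu(\cX_n)\le\mu)}{\text{BCP}}\right]
\;\ge\;
\bE_F\!\left[\prod_{i=1}^n \frac{B-X_i}{B-\mu}\,\ind(X_i\le\mu)\right]
\;=\;
\Big(\bE_F\!\left[\tfrac{B-X}{B-\mu}\ind(X\le\mu)\right]\Big)^{n},
\]
by independence. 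Condition~(C2) forces this base to be strictly less than $1$, which after one line of algebra is exactly $B>\mu+\bE_F[(\mu-X)_+]/(1-F(\mu))$. No large deviations, no variational problem, no Chernoff bound.

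Your route could in principle be pushed through, but it has genuine gaps as written. First, your $\Lambda_G(\lambda)$ should be $-\bE_G[\log(1-\lambda(X-\mu))]$, not $\log\bE_G[(1-\lambda(X-\mu))^{-1}]$; the Chernoff product is over the fixed observations, not an average inside the logarithm. Second, and more seriously, you never solve the variational problem---you only say the optimal $G$ ``should'' push the tilt to the boundary and that this ``should'' yield $B^\star$. Because the Chernoff bound is not tight at the boundary $\lambda=1/(B-\mu)$ (the factor $(1-\lambda(B-\mu))^{-1}$ blows up), it is not clear your rate $J(G,B)$ matches the true exponential decay of the BCP, and hence not clear that $V(B)$ changes sign exactly at $B^\star$. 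The paper sidesteps all of this by using the exact BCP formula on the restricted event, so that independence does the factoring work that you are asking Sanov to do.
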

This result is obtained using a "worst-case" scenario when all observations are below the threshold $\mu$.
Hence, it does not cover all possible trajectories, yet it suggests to investigate the properties of bonuses with a similar form.  Since the right-hand side of the inequality requires a knowledge on the arms distributions that we would like to avoid, we use an empirical estimator for the expectation. This suggests to introduce some parameter $\rho$ and data-dependent bonuses of the form \begin{equation}\label{eq::bonus_gap_pos} B(\cX, \mu, \rho) = \mu + \rho \times \frac{1}{n} \sum_{i=1}^n (\mu-X_i)^+\;. \end{equation}  
We interpret $\rho$ as the \textbf{leverage} of the empirical excess gap $\frac{1}{n}\sum_{i=1}^n (\mu - X_i)^+$ w.r.t the threshold $\mu$. We then tune $\rho$ assuming an hypothesis on some upper quantile of the arm distribution, which is much less constraining than assuming knowledge of the shape of the entire tail. In all DS algorithms we propose (see next section), we use Equation~\ref{eq::bonus_gap_pos} as the basis for defining the appropriate bonus.
Finally, we provide in Lemma~\ref{lem::LB_BCP} a novel lower bound on the BCP that reveals that in the general case of unbounded distributions, without further processing of the data, DS cannot achieve a logarithmic regret when the maximum of the data tends to $+\infty$ at some rate $g(n)$.

\begin{restatable}[Lower bound for the BCP]{lemma}{lemmaBCPlower}\label{lem::LB_BCP}
	Consider a set $\cX=(X_1, \dots, X_{n+1}) \in \R^{n+1}$, and assume that $\overline{\cX}\!=\!\max\limits_{i\in\{1,\dots,n+1\}} X_i \geq g(n)$ for some function $g$. Denoting $\bar \Delta_n^+=\frac{1}{n}\sum_{i=1, X_i< \overline{\cX} }^{n+1} (\mu-X_i)^+$ the empirical positive gap, it holds that
	\[\bP_{w \sim \cD_{n+1}}\left(\sum_{i=1}^{n+1} w_i X_i \geq \mu \right) \geq \exp\left(-n \frac{\bar \Delta_n^+}{g(n)-\mu}\right)\;. \]
\end{restatable}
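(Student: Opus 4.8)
The plan is to exploit the representation of the uniform distribution on the $(n+1)$-simplex as a normalized vector of independent exponentials. Writing $w_i = E_i / \sum_{j=1}^{n+1} E_j$ with $E_1,\dots,E_{n+1}$ i.i.d.\ standard exponentials, the event $\{\sum_i w_i X_i \geq \mu\}$ is, since $\sum_j E_j > 0$ almost surely, exactly the event $\{\sum_{i=1}^{n+1} E_i (X_i - \mu) \geq 0\}$. This linearizes the boundary-crossing condition and, crucially, replaces the dependent Dirichlet weights by independent coordinates.

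Next I would isolate a maximizing coordinate. Fix an index $m$ with $X_m = \overline{\cX}$; since $\overline{\cX} \geq g(n) > \mu$, its contribution satisfies $E_m(X_m - \mu) \geq E_m\,(g(n) - \mu)$. For the remaining coordinates I discard the nonnegative terms coming from points with $X_i \geq \mu$ and keep only the negative ones, so that $\{\sum_i E_i(X_i-\mu)\geq 0\}$ is implied by the sufficient condition
\[ E_m\,(g(n)-\mu) \;\geq\; \sum_{i : X_i < \mu} E_i (\mu - X_i) \;=:\; W. \]
Here $W$ is independent of $E_m$, and since every $X_i<\mu$ satisfies $X_i<\overline{\cX}$, its mean is $\bE[W] = \sum_{i:X_i<\mu}(\mu-X_i) = \sum_{i:X_i<\overline{\cX}}(\mu - X_i)^+ = n\,\bar\Delta_n^+$.

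Conditioning on the $\sigma$-field generated by $W$ and using the memoryless tail $\bP(E_m \geq t) = e^{-t}$, with $G := g(n)-\mu > 0$, I obtain
\[ \bP\Big(\textstyle\sum_i w_i X_i \geq \mu\Big) \;\geq\; \bP\big(E_m \geq W/G\big) \;=\; \bE\!\left[e^{-W/G}\right], \]
and Jensen's inequality applied to the convex map $x \mapsto e^{-x/G}$ finally yields $\bE[e^{-W/G}] \geq e^{-\bE[W]/G} = \exp\!\big(-n\,\bar\Delta_n^+/(g(n)-\mu)\big)$, which is the claim.

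The argument is short because each reduction discards probability mass or nonnegative terms in the favorable direction; the one genuinely delicate point is the conditioning-then-Jensen step, where I must take care to \emph{lower}-bound the tail and to use that $W$ is independent of the isolated exponential $E_m$, so that the conditional probability is exactly $e^{-W/G}$. I also implicitly use $g(n) > \mu$ (the regime of interest, where $g(n)\to\infty$); ties in the maximum cause no trouble, since any additional coordinates equal to $\overline{\cX}$ contribute nonnegatively and are simply dropped.
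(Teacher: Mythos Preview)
Your proof is correct. The paper's own argument is closely related but follows a slightly different path: it first truncates every coordinate except the maximum down to $\mu$, invokes the closed-form simplex--volume identity (the special case of the BCP formula when exactly one point lies above the threshold) to get $\prod_{i\neq m}\frac{\bar\cX-\mu}{\bar\cX-\min(X_i,\mu)}$, and then applies $\log(1+x)\leq x$ termwise. Your route bypasses the volume formula entirely: after the exponential representation, you isolate $E_m$, condition on the independent negative part $W$, and use Jensen on $x\mapsto e^{-x/G}$. In fact, had you computed $\bE[e^{-W/G}]$ exactly via the exponential MGF, you would recover precisely the paper's product (with $g(n)$ in place of $\bar\cX$), and your Jensen step is the same inequality as the paper's $\log(1+x)\leq x$. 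What your approach buys is self-containment: it never appeals to the closed-form BCP identity, which the paper itself notes is delicate when observations coincide. What the paper's approach buys is a marginally tighter intermediate bound (with $\bar\cX-\mu$ rather than $g(n)-\mu$ in the denominator), though this is immediately weakened to the stated form anyway.
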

In particular, we see in this expression that $g(n)$ may hinder the exponential rate in $n$.
In the next section we discuss three examples of DS algorithms and their theoretical guarantees.
\subsection{Theoretical guarantees for Dirichlet Sampling algorithms}\label{sec::application}

Building on the results from previous the section, we now instantiate the DS algorithms for three bandit problems. We first prove that optimal guarantees can be derived for DS with bounded distributions under a non-standard definition of the problem (i.e unknown upper bound but alternative assumptions), motivated by practical considerations. Then, we consider a natural extension to unbounded distributions using a simple truncation mechanism, ensuring logarithmic regret under assumptions on some quantile of the distributions. Finally we consider a simple DS algorithm, securing slightly larger-than-logarithmic regret for the entire family of \textit{light-tailed distributions}.
In the following we denote by $B(\cX, \mu, \rho)$ the bonus defined in Equation~\ref{eq::bonus_gap_pos} for a set $\cX$, a mean $\mu$ and some parameter $\rho$. For simplicity we will keep a generic $\mu$ in our exposition, while its value is in practice the empirical mean of the leading arm. We detail each algorithm and their components in Appendix~\ref{app::algo_full} and the proofs of the three theorems in Appendix~\ref{app::proof_regret_instances}. In all cases, the proof consists in showing that (C1) and  (C2) hold for each proposed algorithms in the settings they tackle and deriving an expression for $n_k(T)$. 

\paragraph{Optimality for bounded distributions} Let $\cF_{[b, B]}$ be the set of distributions supported in $[b, B]$, and consider a bandit $\nu=(\nu_1, \dots, \nu_K)$ with $\nu_k \sim \cF_{[b_k, B_k]}$ for some $B_k \in \R$. If we assume that $B_k$ is known (case 1), then simply defining $B_k$ as the exploration bonus ensures an asymptotically optimal regret, with a direct adaptation of the proof of NPTS \citep{Honda}. However, the precise knowledge of the upper bound for each arm is sometimes inaccessible to the practitioner (e.g if the environment is new, or if no expert is available to provide an estimate of the bound). We propose an alternative setting, with the family $\cF_{\cB}^{\gamma, p}=\{\exists B: \nu \in \cF_{[b, B]}, \bP_\nu([B-\gamma, B]) \geq p\} \subset \cF_{[b, B]}$. $B_k$ is \textit{unknown} but we assume it is \textit{detectable} in the sense that we will observe a sample from its neighborhood $[B_k-\gamma, B_k]$ with a reasonable probability of at least $p$, with known $\gamma, p$ (case 2). In this case we propose the following bonus, allowing to obtain theoretical results in this setting,

\begin{equation} \label{eq::bonus_BDS} B(\cX, \mu) \coloneqq \max\{\overline{\cX} \!+\! \gamma, B(\cX, \mu, \rho)\}\;, \quad \text{where } \bar \cX= \max \{x: x \in \cX \}.
\end{equation}

\begin{restatable}[Optimality of BDS]{theorem}{thregretBDS}\label{th::BDS}
	If\ \ $\forall k\in \{2,\dots, K\}\;, \nu_k \sim \cF_{\cB}^{\gamma, \rho}$, choosing the exploration bonus of Equation~\ref{eq::bonus_BDS} with $\rho \geq -1/\log(1-p)$ ensures that
	\[\bE[N_k(T)] \leq \frac{\log(T)}{\kinf^{B_{\rho, \gamma}}(\nu_k, \mu_1)} + O(1)\;,\]
	where $B_{\rho, \gamma} = \max\left(B+\gamma, \mu_1 + \rho \bE_{\nu_k}[(\mu_1-\mu_k)_+]\right))$. 
\end{restatable}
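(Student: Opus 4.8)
The plan is to instantiate the decomposition of Theorem~\ref{th::regret_decomposition}, $\bE[N_k(T)] \leq n_k(T) + B^k_{T,\epsilon} + C^k_{\nu,\epsilon}$, and to control each of its three terms for the BDS index. Since every arm is supported in a bounded interval $[b_k,B_k]$, condition (C1) holds with an explicit good rate function $I_k$ coming from Hoeffding's inequality (or, for a tight constant, the Cram\'er transform of $\nu_k$); this simultaneously guarantees that the problem-dependent term $C^k_{\nu,\epsilon}$ is $O(1)$. It then remains to show that $B^k_{T,\epsilon}=O(1)$ (a quantitative form of (C2)) and that $n_k(T) \leq \log(T)/\kinf^{B_{\rho,\gamma}}(\nu_k,\mu_1) + O(1)$; both reduce to two-sided estimates of the boundary crossing probability of the BDS index, used in opposite directions.

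The backbone of the argument is a case split on the \emph{detection event} $E_n=\{\overline{\cX}+\gamma \geq B\}$, i.e. the event that at least one of the $n$ observations landed in the neighbourhood $[B-\gamma,B]$ of the unknown upper bound. By the quantile assumption defining $\cF_{\cB}^{\gamma,p}$ each observation falls in $[B-\gamma,B]$ with probability at least $p$, so $\bP(E_n^c)\leq (1-p)^n = e^{n\log(1-p)}$. On $E_n$ the bonus of Equation~\ref{eq::bonus_BDS} satisfies $B(\cX,\mu)\geq \overline{\cX}+\gamma \geq B \geq \mu$, hence the BDS index dominates the NPTS index associated with the \emph{known} bound $B$, and the BCP estimates of \citep{Honda} (Lemmas~14 and~15) apply directly. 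On $E_n^c$ the bonus reduces to the data-dependent form $\mu + \rho\,\frac1n\sum_i(\mu-X_i)^+$, and I will lower bound its BCP with Lemma~\ref{lem::LB_BCP}: taking the bonus point as the running maximum $g(n)=\mu+\rho\,\frac1n\sum_i(\mu-X_i)^+$ gives $\bar\Delta_n^+/(g(n)-\mu)=1/\rho$, so that the BCP is at least $e^{-n/\rho}$ on this event.

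For (C2) I will in fact prove the stronger statement that $\sum_{n\geq 1}\bE_{\cX_n}[\ind(\mu(\cX_n)\leq\mu)/\mathrm{BCP}]$ is finite for every $\mu<\mu_1$, which bounds $B^k_{T,\epsilon}$ by $O(1)$. Splitting the summand over $E_n$ and $E_n^c$: the $E_n$ contribution is summable exactly as in the NPTS analysis of \citep{Honda}, while the $E_n^c$ contribution is at most $\bP(E_n^c)\cdot \sup_{\cX}\mathrm{BCP}^{-1} \leq e^{n\log(1-p)}\cdot e^{n/\rho}$. This geometric term is summable under the condition $1/\rho \leq -\log(1-p)$, i.e. $\rho \geq -1/\log(1-p)$, which is exactly the stated hypothesis; the marginal case is closed using the additional decay $e^{-nI_1(\mu)}$ that the mean constraint $\ind(\mu(\cX_n)\leq\mu)$ supplies through (C1). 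I expect this balancing of the failure-to-detect rate $(1-p)^n$ against the inverse-BCP growth $e^{n/\rho}$ on the worst-case trajectory (all samples low, no sample near $B$) to be the main technical obstacle, as it is exactly where the tuning of $\rho$ is pinned down; the necessary condition of Lemma~\ref{lem::cond_fixed_bonus} serves as the heuristic guiding this choice.

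Finally I will make $n_k(T)$ explicit by upper bounding the probability that challenger $k$ wins a duel against the leader (arm~$1$, whose empirical mean concentrates around $\mu_1$) by the BCP of its own index crossing $\mu_1$. As the sample size grows the bonus of Equation~\ref{eq::bonus_BDS} converges to $B_{\rho,\gamma}=\max(B+\gamma,\ \mu_1+\rho\,\bE_{\nu_k}[(\mu_1-X)_+])$, so the challenger's index behaves like an NPTS index with observations from $\nu_k$ and support boundary $B_{\rho,\gamma}$; the BCP upper bound of \citep{Honda} then yields a winning probability of order $e^{-n\,\kinf^{B_{\rho,\gamma}}(\nu_k,\mu_1)}$. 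Summing the expected number of wins over sample sizes and noting that the contributions beyond $n^\star=\log(T)/\kinf^{B_{\rho,\gamma}}(\nu_k,\mu_1)$ add up to only $O(1)$ gives $n_k(T)\leq \log(T)/\kinf^{B_{\rho,\gamma}}(\nu_k,\mu_1)+O(1)$. Plugging the three bounds into Theorem~\ref{th::regret_decomposition} yields the claim.
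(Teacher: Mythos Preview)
Your proposal is correct and mirrors the paper's proof essentially step for step: (C1) via Hoeffding, (C2) by the same case split on the detection event $\{\overline{\cX}+\gamma\geq B\}$ (using the NPTS discretization machinery from \citep{Honda} on the detected case and Lemma~\ref{lem::LB_BCP} combined with $(1-p)^n$ on the undetected case to pin down $\rho\geq -1/\log(1-p)$), and $n_k(T)$ via the BCP upper bound of Lemma~\ref{lem::concentration_with_kinf}. The only place where the paper is more explicit is in the $n_k(T)$ step, where it formalises ``the bonus converges to $B_{\rho,\gamma}$ and the empirical distribution to $\nu_k$'' through a Levy-ball good event controlled by the DKW inequality together with continuity of $\kinf^B$ in all three arguments; you should be prepared to invoke these tools when filling in the details.
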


This setting is a first example of the interest of data-dependent bonuses. It makes sense in practice by avoiding for instance distributions with a small mass arbitrarily far from the rest of their support, which may not be likely in a real-world application. We now consider the unbounded case.

\paragraph{Unbounded distributions: truncating the upper tail} Let consider the family $\cF_{[b, +\infty]}$ for some unknown $b\in\R$. A natural way to extend algorithms designed for $\cF_{[b, B]}$ (where $B<+\infty$) is to truncate the upper tail of the distributions. We propose a simple way to do this, by considering (as a parameter of the algorithm) a quantile $1-\alpha$, denoted by $q_{1-\alpha}(\nu)$ for a distribution $\nu$, and a truncation operator $\cT_\alpha$ that (1) do not change a distribution below its $1-\alpha$ quantile, and (2) "summarizes" its upper tail by its expectation, known as \textit{Conditional Value at Risk} (CVaR). Formally, we obtain $\cT_\alpha(\nu)(A)=\nu(A)$ for any $A \subset [b, q_{1-\alpha}(\nu)]$ and $\cT_\alpha(\nu)\left( \{x\}\right)  = \alpha \ind(x=C_{\alpha}(\nu))$ for any $x>q_{1-\alpha}(\nu)$, with $C_\alpha(\nu)=\bE[X|X>q_{1-\alpha}(\nu)]$. We then propose \emph{Quantile Dirichlet Sampling} (QDS), that computes the index of a challenger (say arm $k$, with observations $\cX_k$) during a duel as follow: (1) apply $\cT_\alpha$ to the empirical distribution, (2) compute the bonus $B(\cX_k, \mu, \rho)$, and (3) \textit{re-sample} the truncated empirical distribution with weights drawn according to $\Dir(1, \dots, 1, n_\alpha)$ where parameter $n_\alpha$ is for the weight used with the empirical CVaR, and is simply the number of observations used to compute it (to avoid a bias in the re-sampled mean). We can obtain theoretical guarantees with this method by considering the subset of distributions
\[\cF_{[b, +\infty)}^{\alpha}=\{\nu \in \cF_{[b, +\infty)}: \forall \mu>\bE_\nu(X), \kinf^{\cF_{[b, +\infty)}}(\nu, \mu) \geq \kinf^{\mathfrak{M}_k}(\cT_\alpha(\nu), \mu)\}\;,\]
where $\mathfrak{M}_k^q = \max\{q_{1-\alpha}(\nu_k), \mu_1+\rho \bE_{\nu_k}[(\mu_1-X)^+]\}$, and the second $\kinf$ is taken on the family $\cF_{[b, \mathfrak{M}_k^q]}$ (using previously introduced notations). Although technical, this condition essentially states that the bandit problem taken on the complete family $\cF_{[b, +\infty)}$ is no harder than an alternative bandit problem considering the truncated distributions and a bounded family, with an upper bound depending on the $1-\alpha$ quantile and the leverage $\rho$ of the exploration bonus. 

\begin{restatable}[Logarithmic Regret of QDS]{theorem}{thregretQDSb}\label{th::QDS}
	Consider a bandit model $\nu=(\nu_1, \dots, \nu_K)$ satisfying $\forall k$, $\nu_k \in \cF_{[b, +\infty)}^{\alpha}$ for some $b>-\infty$ (lower-bounded support) and a known $\alpha > 0$. Then, for any $\epsilon_0>0$ small enough QDS with any parameters $\alpha'<\alpha$ and $\rho \geq (1+\alpha')/\alpha'^2$ satisfies
	\[\bE[N_k(T)] \leq \frac{\log T}{\kinf^{\mathfrak{M}_{k}^C}(\cT_\alpha(\nu_k), \mu_1)-\epsilon_0} + \cO(1) \;,\]
	with $\mathfrak{M}_k^C = \max\{C_{\alpha}(\nu_k), \mu_1+\rho \bE_{\nu}[(\mu_1-X)^+]$, and $\cT_\alpha$ is the truncation operator we defined.
\end{restatable}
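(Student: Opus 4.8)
The plan is to follow the recipe announced just before the theorem: invoke Corollary~\ref{cor::regret_decompo}, so that it suffices to (i) check the concentration condition (C1) for the arms of the model, (ii) check condition (C2) for the QDS index, and (iii) produce an explicit bound on $n_k(T)$ matching the claimed leading term. The entire analysis is carried out on the \emph{truncated} empirical distribution $\cT_{\alpha'}(\hat\nu_n)$, exploiting the fact that $\cT_\alpha$ is mean-preserving — $\bE_{\cT_\alpha(\nu)}[X]=\bE_\nu[X]$, since the upper tail of mass $\alpha$ is collapsed onto its conditional expectation $C_\alpha(\nu)$ — so that every empirical-mean comparison in the duel remains centred at the true means.

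First I would verify (C1). Because the distributions are light-tailed and lower bounded by $b$, the lower-deviation bound is immediate (truncation leaves the lower tail untouched) and the upper-deviation bound follows from a Chernoff/Cramér argument; the only care needed is that the empirical CVaR $\hat C_{\alpha'}$, which caps the truncated support, concentrates around $C_{\alpha'}(\nu_k)$, which again holds for light tails. This yields a good rate function $I_k$ for each arm and for the leader statistics used in Theorem~\ref{th::regret_decomposition}. Next I would compute $n_k(T)$: for arm $k$ a challenger against a leader $1$ whose empirical mean sits near $\mu_1$, the challenger wins only if its re-sampled truncated index crosses $\mu_1>\mu_k$. The key ingredient is a BCP \emph{upper} bound (the adaptation of Lemma~14 of \citep{Honda} to the non-uniform weights $\Dir(1,\dots,1,n_{\alpha'})$ produced by the truncation), showing this crossing probability decays as $\exp\!\left(-n\,[\kinf^{\mathfrak{M}_k^C}(\cT_\alpha(\nu_k),\mu_1)-\epsilon_0]\right)$. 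The effective bound $\mathfrak{M}_k^C=\max\{C_\alpha(\nu_k),\,\mu_1+\rho\,\bE_\nu[(\mu_1-X)^+]\}$ appears because the re-sampled support is capped at the empirical CVaR while the bonus contributes a point near $\mu_1+\rho\times(\text{empirical gap})$. Summing over $n$ gives $n_k(T)\le \log T/(\kinf^{\mathfrak{M}_k^C}(\cT_\alpha(\nu_k),\mu_1)-\epsilon_0)+\cO(1)$, the $\epsilon_0$ absorbing the gap between the algorithmic level $\alpha'$ and the assumed level $\alpha$ (via $\alpha'<\alpha$ and the defining inequality of $\cF_{[b,+\infty)}^{\alpha}$) together with the fluctuations of $\hat C_{\alpha'}$ and of the empirical quantile.

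The hard part is verifying (C2), which I would treat last and expect to be the main obstacle. Here arm $1$ is an under-sampled challenger and one must lower bound $\bP_{w}(\widetilde\mu(\cX,\mu)\ge\mu)$ on the event $\{\mu(\cX_n)\le\mu\}$. Two mechanisms combine: truncation caps the re-sampled support at $\hat C_{\alpha'}\approx C_{\alpha'}(\nu_1)$, a constant, so that the analysis reduces to the bounded-support situation of NPTS with this empirical CVaR playing the role of the known upper bound; and the data-dependent bonus $B(\cX,\mu,\rho)=\mu+\rho\,\bar\Delta_n^+$ supplies an extra mass point above $\mu$, whose effect is quantified through Lemma~\ref{lem::LB_BCP}. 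The crux is to show that the leverage threshold $\rho\ge(1+\alpha')/\alpha'^2$ is exactly what makes the resulting lower bound on [BCP] decay slowly enough that, when its reciprocal is integrated against the (exponentially small in $n$) probability of a bad sample and summed over $n\le n_1(T)=o(\log T)$, the total stays $o(\log T)$. Making this simultaneous control rigorous — reconciling the random empirical CVaR and empirical quantile induced by $\cT_{\alpha'}$ with the fixed-bonus necessary condition of Lemma~\ref{lem::cond_fixed_bonus}, and pinning down the precise constant $(1+\alpha')/\alpha'^2$ — is where the bulk of the technical work lies.
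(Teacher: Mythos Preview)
Your high-level plan (invoke Corollary~\ref{cor::regret_decompo}; verify (C1); bound $n_k(T)$; verify (C2)) is the paper's, and your treatment of (C1) and of $n_k(T)$ is essentially right. The paper instantiates Lemma~\ref{lem::relaxed_C2} with a high-probability set $\cB_{k,n}$ controlling simultaneously the Levy distance to $\nu_k$, the bonus, and the empirical CVaR (the last via a Wasserstein--CVaR Lipschitz inequality and the concentration of \cite{fournier_wasserstein}), then applies the BCP \emph{upper} bound of Lemma~\ref{lem::concentration_with_kinf} (a variant of Lemma~15, not~14, of \citep{Honda}) together with continuity of $\kinf$ in all three arguments to extract the rate $\kinf^{\mathfrak{M}_k^C}(\cT_\alpha(\nu_k),\mu_1)-\epsilon_0$.

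The gap is in your handling of (C2). You propose to combine ``truncation caps the support at $\hat C_{\alpha'}\approx C_{\alpha'}(\nu_1)$, reducing to bounded NPTS'' with Lemma~\ref{lem::LB_BCP} for the bonus. Neither mechanism does the job here. Lemma~\ref{lem::LB_BCP} with the data-dependent bonus as the maximum gives only $\text{BCP}\geq e^{-n/\rho}$, hence $E_n\le e^{-n(I_1(\mu)-1/\rho)}$; since $\rho$ is \emph{fixed} in QDS this blows up whenever $I_1(\mu)\le 1/\rho$, i.e.\ for $\mu$ close to $\mu_1$ --- this is exactly why RDS must let $\rho_n\to\infty$. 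And you cannot rescue this by ``$\hat C_{\alpha'}\approx C_{\alpha'}(\nu_1)$'': the BCP lower bound is needed precisely on the bad-sample event $\{\mu(\cX_n)\le\mu\}$, where the empirical CVaR has no reason to concentrate, so it cannot play the role of a fixed known upper bound in an NPTS-type argument.

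The paper proceeds differently, and this is where the semi-bounded hypothesis $b>-\infty$ is actually used. It discretizes the support into finitely many bins (possible only because the truncated support is compact once $b>-\infty$), reducing to a multinomial model, and applies the likelihood/BCP ratio of Lemma~\ref{lem::balance_multinom}, which is \emph{pathwise} in the count vector $\beta$ and yields a bound governed by $K_\beta=\KL(\beta/n,\widetilde\nu_1)-\kinf^{m_\beta}(\beta/n,\mu)$. A two-case split finishes: if $\KL(\beta/n,\widetilde\nu_1)\ge(1+\xi)/\rho$ then $K_\beta\ge\xi$ because the bonus alone guarantees $\kinf^{m_\beta}\le 1/\rho$; otherwise Pinsker's inequality gives $\delta(\beta/n,\widetilde\nu_1)\le\sqrt{(1+\xi)/(2\rho)}$, and the calibration $\rho\ge(1+\alpha')/\alpha'^2$ is exactly what makes this total-variation bound small enough that the true quantile $q_{1-\alpha}(\nu_1)$ lies in the sample and is \emph{not} truncated by the algorithm at level $\alpha'<\alpha$. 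One can then invoke the defining inequality of $\cF_{[b,+\infty)}^\alpha$ to obtain $K_\beta\ge\kappa>0$. In particular, the constant $(1+\alpha')/\alpha'^2$ does not arise from balancing a BCP decay rate against a bad-sample probability, as you conjecture, but from Pinsker.
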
 

This result is of particular interest as it captures the continuum between bounded and light-tailed distributions. In our opinion, it sheds new light on the interpretation of infeasability results of e.g \cite{ashutosh2021bandit}: logarithmic regret can be achieved \emph{without specifying the tail with precise parameters}, but a simple quantile condition is required to avoid pathological distributions that makes little sense in practice (e.g very small mass at a very large value). We further discuss this condition in Appendix~\ref{app::QDS_fam_examples} and provide examples of families for which it holds (exponential, Gaussian).

\begin{remark}
The restriction to the semi-bounded case $b>-\infty$ is due to our proof technique, based on a discretization of the support of the truncated distribution (see Appendix~\ref{app::proof_regret_instances}). Note that the actual value of $b$ is not known by the algorithm. This is intuitive since $\kinf^{\cF_{-\infty, B}} = \kinf^{\cF_{b, B}}$ for all $b, B\in\R$, as proved in Theorem~2 of \citep{Honda15IMED}.
	Different theoretical tools could allow to prove a logarithmic regret for QDS in the doubly unbounded case, possibly with a symmetric treatment of the two tails. We leave this extension for future work.
\end{remark}

One may wonder whether the couple quantile condition/truncation is necessary to achieve theoretical results as well as good practical performance. Our last algorithm investigates this issue.

\paragraph{Robust regret for light-tailed distributions} We call \emph{Robust Dirichlet Sampling} (RDS) the algorithm with bonus $B(\cX, \mu, \rho_n)$, where the leverage $\rho_n$ is a function of the sample size $n=|\cX|$. We prove that while being very simple, RDS achieves a robust sub-linear regret bound when each arm comes from \textbf{any} \emph{unknown} light-tailed distribution, that we define as the family \[\cF_\ell=\{\nu \in \cF_{(-\infty, +\infty)}: \exists \lambda_\nu >0, \forall \lambda \in [-\lambda_\nu, \lambda_\nu], \bE_{\nu}[\exp(\lambda X)] < +\infty\}\;.\]

\begin{restatable}[Robust regret bound for RDS]{theorem}{thregretRDS}\label{th::RDS} Let $\nu=(\nu_1, \dots, \nu_K)$ a bandit model satisfying $\nu_k \in \cF_\ell$ for all $k$. Consider \textbf{any} increasing sequence $(\rho_n)_{n \in \N}$ with $\rho_n \rightarrow + \infty$, $\rho_n = o(n)$. Then, for $T$ large enough the expected number of pull of any sub-optimal arm $k$ in RDS is upper bounded by
	\[\bE[N_k(T)] \leq n_k^{\eta, \epsilon_0}(T) + \cO(1) \;,\]
	
	where for any $\eta \in (0, 1], \epsilon_0>0$,  $n^{\eta, \epsilon_0}_k(T)$ is the sequence satisfying \[n^{\eta, \epsilon_0}_k(T) = \frac{\log T}{\eta(\Delta_k\!-\!\epsilon_0)}(M_{k, n_k^{\eta, \epsilon_0}(T)}-\mu)\;\;,
	\text{ with }M_{k,n} \! =\! \max\left\{F_k^{-1}\!\left(\exp\!\left(-\frac{1}{n^2(\log n)^2}\right)\right), \rho_n\right\}.
	\]
	
In particular, if $\rho_n=\cO(\log n)$ then $\bE[N_k(T)] = \cO(\log(T) \log \log(T))$ for any light-tailed distribution $\nu_k \in \cF_\ell$.
\end{restatable}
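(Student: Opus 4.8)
The plan is to obtain the bound directly from the generic decomposition of Theorem~\ref{th::regret_decomposition}, rather than from Corollary~\ref{cor::regret_decompo}, since the statement concerns the per-arm count $\bE[N_k(T)]$. Condition (C1) is free here: every $\nu_k \in \cF_\ell$ has a finite log-moment generating function on a neighborhood of $0$, so a Chernoff argument produces the required good rate functions $I_k$, and in particular $C^k_{\nu,\epsilon} = \cO(1)$. This reduces the theorem to two tasks: (i) showing the recovery term $B_{T,\epsilon}^{k}$ is $\cO(1)$, i.e. that a (C2)-type control holds, which is where a diverging leverage $\rho_n$ supplies robustness; and (ii) upper bounding $n_k(T)$ by the implicit sequence $n_k^{\eta,\epsilon_0}(T)$, which is the substantive part.

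For (i) I would invoke the BCP lower bound of Lemma~\ref{lem::LB_BCP}. When arm $1$ is a challenger with a low sample, $\mu(\cX_n)\leq \mu<\mu_1$, its RDS bonus $B(\cX_n,\mu,\rho_n)=\mu+\rho_n\bar\Delta_n^+$ is the largest point fed to the Dirichlet weights once $\rho_n$ is large enough, so Lemma~\ref{lem::LB_BCP} with $g(n)-\mu=\rho_n\bar\Delta_n^+$ gives the pointwise bound $\bP_{w\sim\cD_{n+1}}(\widetilde\mu(\cX_n,\mu)\geq\mu\mid\cX_n)\geq\exp(-n/\rho_n)$, hence $1/\bP(\widetilde\mu\geq\mu)\leq\exp(n/\rho_n)$ uniformly over the data. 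Pairing this with the (C1) deviation bound $\bP(\mu(\cX_n)\leq\mu)\leq e^{-nI_1(\mu)}$ makes each (C2) summand at most $\exp(-n(I_1(\mu)-1/\rho_n))$; since $\rho_n\to+\infty$, for $n$ beyond a problem-dependent index we have $1/\rho_n<I_1(\mu)/2$ and the series converges, so $B_{T,\epsilon}^{k}=\cO(1)$. This is precisely the robustness mechanism: a growing bonus guarantees recovery without any tail knowledge, while the constraint $\rho_n=o(n)$ is what will keep $n_k(T)$ sub-linear in the next step.

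For (ii) I would bound $n_k(T)$, the pulls of arm $k$ while arm $1$ leads, by summing the per-round win probability over the challenger's sample size $n$. First I control the random magnitude of arm $k$'s observations: by the chosen quantile level, $\bP\big(\overline{\cX_n}>F_k^{-1}(\exp(-1/(n^2(\log n)^2)))\big)=1-\exp(-1/(n(\log n)^2))\leq 1/(n(\log n)^2)$, which is summable and contributes only $\cO(1)$. On the complementary event both the data and the bonus point $\mu+\rho_n\bar\Delta_n^+$ (of order $\rho_n$, since $\bar\Delta_n^+\to\bE[(\mu-X)^+]>0$) lie in $[b,\Theta(M_{k,n})]$, so the index is a Dirichlet re-weighting of a sample of range $\Theta(M_{k,n}-\mu)$; the BCP upper bound (Honda's Lemma~14, recalled in the appendix) then bounds the crossing probability by $\exp(-n\,\kinf^{M_{k,n}}(\widehat F_{k,n},\mu))$, and concentration of arm $k$'s empirical mean near $\mu_k$ together with a gap-over-range lower bound on $\kinf$ for bounded support yields a rate at least $\eta(\Delta_k-\epsilon_0)/(M_{k,n}-\mu)$, where $\eta\in(0,1]$ absorbs the constant factors and the $\epsilon_0$-slack from $\mu\approx\mu_1$. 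Summing these exponentially small probabilities over $T$ rounds, the count is dominated by the threshold $n_0$ at which $n_0\,\eta(\Delta_k-\epsilon_0)/(M_{k,n_0}-\mu)=\log T$, that is exactly the self-referential definition of $n_k^{\eta,\epsilon_0}(T)$, and the residual mass beyond it is $\cO(1)$; combining with (i) gives $\bE[N_k(T)]\leq n_k^{\eta,\epsilon_0}(T)+\cO(1)$.

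The final claim then follows by solving this fixed point. For $\nu_k\in\cF_\ell$ the tail decays at least exponentially, so $F_k^{-1}(\exp(-1/(n^2(\log n)^2)))=F_k^{-1}(1-\Theta(1/(n^2(\log n)^2)))=\cO(\log n)$; with $\rho_n=\cO(\log n)$ this gives $M_{k,n}=\cO(\log n)$, and the equation $n=\frac{\log T}{\eta(\Delta_k-\epsilon_0)}\cO(\log n)$ has solution $n=\cO(\log T\log\log T)$ (one iteration gives $\log n=\cO(\log\log T)$). I expect the main obstacle to be step (ii): establishing the BCP upper bound uniformly over the random, $n$-dependent effective range $M_{k,n}$ and the data-dependent bonus, and then making the summation-to-threshold argument rigorous for a rate that itself varies with $n$ through $M_{k,n}$, which replaces the usual explicit $\cO(\log T)$ threshold by the implicit sequence $n_k^{\eta,\epsilon_0}(T)$. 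Keeping the interplay between the quantile-truncation event and this moving rate under control, while holding every error term at $\cO(1)$, is the delicate part.
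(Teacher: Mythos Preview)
Your overall architecture matches the paper's: verify (C1) via Cram\'er for light-tailed distributions, dispose of the recovery term $B_{T,\epsilon}^k$ via the BCP lower bound of Lemma~\ref{lem::LB_BCP} (yielding $\bP(\widetilde\mu\geq\mu\mid\cX_n)\geq e^{-n/\rho_n}$, paired with $e^{-nI_1(\mu)}$ so the series converges once $\rho_n>1/I_1(\mu)$), and bound $n_k(T)$ by restricting to a high-probability event for arm $k$'s data and applying a BCP upper bound. Steps (C1) and (i) are essentially identical to the paper.

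The gap is in step (ii). You invoke a ``gap-over-range lower bound on $\kinf$ for bounded support'' to extract the rate $\eta(\Delta_k-\epsilon_0)/(M_{k,n}-\mu)$, but no such universal inequality holds: for a two-point law on $\{0,B\}$ with small gap $\delta$, $\kinf^B/[\delta/(B-\mu)]\to 0$ as $\delta\to 0$, so $\eta$ cannot simply ``absorb constant factors''. The paper does \emph{not} go through $\kinf$ here. It applies the Chernoff bound directly with a fixed parameter $\eta\in(0,1)$ and the second-order inequality $-\log(1-x)\leq x+\tfrac{x^2}{2(1-\eta)}$ for $x\leq\eta$, obtaining
\[
\bP(\widetilde\mu(\cX_n,\mu)\geq\mu)\leq \frac{1}{1-\eta}\exp\!\left(-n\eta\,\frac{\bar\Delta_n}{M_{\cX_n}-\mu}+n\,\frac{\eta^2}{2(1-\eta)}\,\frac{\bar\sigma_n(\mu)^2}{(M_{\cX_n}-\mu)^2}\right),
\]
with $\bar\sigma_n^2(\mu)=\tfrac1n\sum_i(X_i-\mu)^2$. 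Killing the second-order term requires two ingredients you do not mention: (a) concentration of the empirical second moment around $\sigma_{k,\mu}^2=\bE_{\nu_k}[(X-\mu)^2]$, which the paper obtains via Wasserstein $W_2$ concentration for light-tailed empirical measures (Fournier--Guillin); and (b) a \emph{lower} bound $m_n$ on the sample maximum, so that the denominator satisfies $(M_{\cX_n}-\mu)^2\geq(m_n-\mu)^2$, together with the condition $M_n=o(m_n^2)$, which holds because for light tails both $m_n$ and $M_n$ are polylogarithmic in $n$. Only after these controls does the quadratic term become negligible and the bound collapse to $\exp(-n\eta(\Delta_k-\epsilon_0)/(M_{k,n}-\mu))$, which is the fixed-point rate in the theorem. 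Your high-probability event $\cB_{k,n}$ must therefore also constrain $\bar\sigma_n^2(\mu)$ and enforce $\bar\cX_n\in[m_n,M_n]$, not just the mean, the bonus, and the upper quantile; the summation-to-threshold step then goes through via Lemma~\ref{lem::relaxed_C2} exactly as you sketch. (Minor: the BCP upper bound you cite is Lemma~15 in \cite{Honda}, not Lemma~14, which is the lower bound.)
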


The sequence $M_{k,n}$ is a large probability upper bound of the maximum of $n$ observations from $F_k$, that we discuss in Appendix~\ref{app::proof_regret_instances}. For light-tailed distributions, it holds that $M_{k, n}= \cO(\log n)$ (using Jensen inequality as in the proof of Theorem 2.5 in \cite{boucheron_concentration}). Hence, choosing $\rho_n=\cO(\log n)$ we can further obtain the simpler upper bound in $\cO(\log(T) \log \log(T))$.
This slightly larger-than-logarithmic rate is a consequence of Lemma~\ref{lem::LB_BCP}. In our opinion this is a small cost compared to the adaptive power of RDS. We call the algorithm \emph{robust} because these theoretical guarantees are obtained on the broad class of light-tailed distributions, without any additional assumption. We recommend the leverage function $\rho_n = \cO(\sqrt{\log(1\!+\!n)})$, which corresponds to the growth rate of the maximum of sub-Gaussian samples and is empirically validated (see Appendix~\ref{app::additional_xp}). We emphasize that RDS thus avoids all hyperparameter tuning, a desirable feature for the practitioner with little information on the problem she faces. Furthermore, in the next section we show that this algorithm performs very well in practice despite its non-logarithmic asymptotic guarantees.
\section{Application in a crop-farming environment}\label{sec::xp}

We consider a practical decision-making problem using the DSSAT\footnote{\emph{Decision Support System for Agrotechnology Transfer} is an open-source project maintained by the DSSAT Foundation, see \url{https://dssat.net/}} simulator \citep{hoogenboom2019dssat}. Harnessing more than 30 years of expert knowledge, this simulator is calibrated on historical field data (soil measurements, genetics, planting date...) and generates realistic crop yields. Such simulations are used to explore crop management policies \emph{in silico} before implementing them in the real world, where their actual effect may take months or years to manifest themselves. More specifically, we model the problem of selecting a planting date for maize grains among 7 possible options, all else being equal, as a 7-armed bandit. The resulting distributions incorporate historical variability as well as exogenous randomness coming from a stochastic meteorologic model. We illustrate this in Figure~\ref{fig:dssat_distr} with the histogram of four of these distributions, computed on $10^6$ samples. They are typically right-skewed, multimodal and exhibit a peak at zero corresponding to years of poor harvest, hence they hardly fit to a convenient parametric model (e.g SPEF/sub-Gaussian\dots).
\begin{figure}[!ht]
	\centering
	\includegraphics[width=\textwidth]{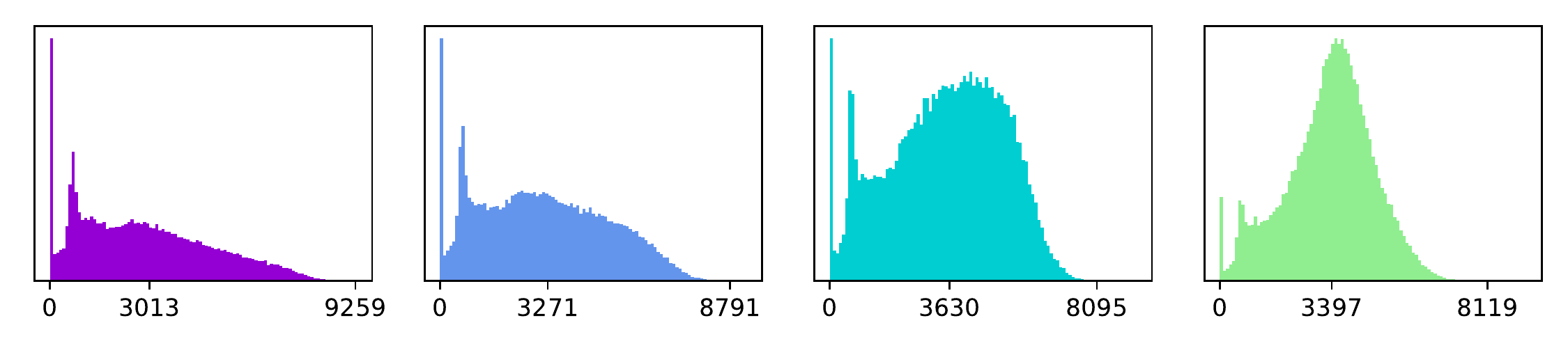}
	\caption{Distribution of simulated dry grain yield (kg/ha) for four out of seven different planting dates. Reported on the x-axis are the distribution minimum, mean and maximum values. The optimal arm is the third one (mean 3630 kg/ha).}
	\label{fig:dssat_distr}
\end{figure}

\paragraph{Benchmarks}
A natural choice for the learner would be to use algorithms adapted for bounded distributions with known support. Indeed, one could argue that crop yields are fundamentally bounded by a very large value, that can be provided with some expert knowledge. However this method may have limits when the upper bound cannot be estimated accurately (few data, new environment, \dots), as a conservative bound can have a cost on the regret. For this reason, we believe that the novel Dirichlet Sampling algorithms we introduce in Section~\ref{sec::application} are a good alternative choice for this problem. In particular, the three algorithms we propose in this paper are relevant in this setting: BDS keeps the bounded-support hypothesis but introduces the possible uncertainty on the bound, while the light-tailed hypothesis of RDS and the quantile condition of QDS look reasonable.
In Figure~\ref{fig:dssat_bandit} we compare DS algorithms to empirical IMED \citep{Honda15IMED} and NPTS \citep{Honda}, with two upper bounds: 1) the "exact" upper bound is provided looking at the maximum of all historical data collected (left figure), and 2) the algorithms use a conservative estimate with a value $1.5$ times larger than the previous one (right figure). To avoid cluttering, we only report the performance of IMED and NPTS as they were the most competitive baselines on this problem, but report figures with other competitors (e.g UCB1, Bernoulli TS, SDA) in Appendix~\ref{app::additional_xp}. 

\paragraph{Tuning}
For BDS we choose the parameters $\rho=4, \gamma=3500$, corresponding to  $p\approx 20\%$ in the hypothesis of Theorem~\ref{th::BDS}, which is conservative in our example. For QDS, we set $\rho=4$ to be able to compare with BDS and a quantile $95\%$. Finally for RDS, we choose $\rho_n=\sqrt{\log\left(1\!+\!n\right)}$, which enters into the theoretical framework of Theorem~\ref{th::RDS}. 

\paragraph{Results}
Our results show that Dirichlet Sampling algorithms achieve similar or slightly lower regret to their competitors when the latter are allowed to use the "exact" upper bound, and compare favorably when they use a conservative estimate ($1.5$ times larger, right), see Figure~\ref{fig:dssat_bandit}. In particular, RDS is the overall winner in both experiments. We think this demonstrates the merits of trading-off logarithmic regret (albeit only by a factor $\cO\left(\log\log T\right)$) for finite-time adaptation to the tail behaviour via the leverage $\rho_n$. As a side remark, note that our round-based implementation is more efficient than NPTS as it does not draw random weights for the leader, which is the most costly operation at each round.
The code to reproduce the experiments is available in this \href{https://github.com/DBaudry/Dirichlet_Sampling_for_Bandits_Neurips21}{github repository}.

\begin{figure}[!ht]
	\centering
	\begin{subfigure}[b]{0.49\textwidth}
		\centering
		\includegraphics[height=0.22\textheight]{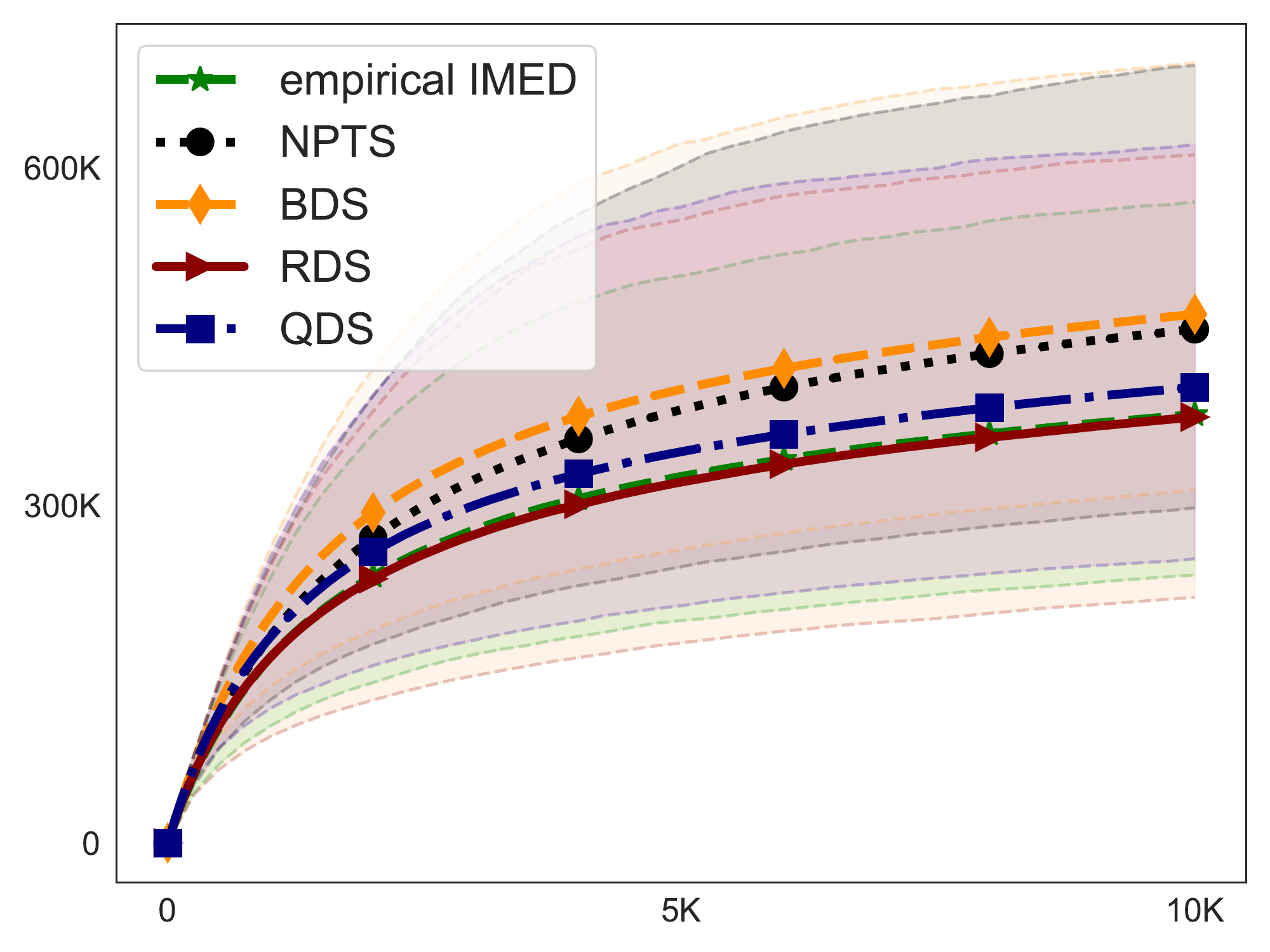}
	\end{subfigure}
	\begin{subfigure}[b]{0.49\textwidth}
		\centering
		\includegraphics[height=0.22\textheight]{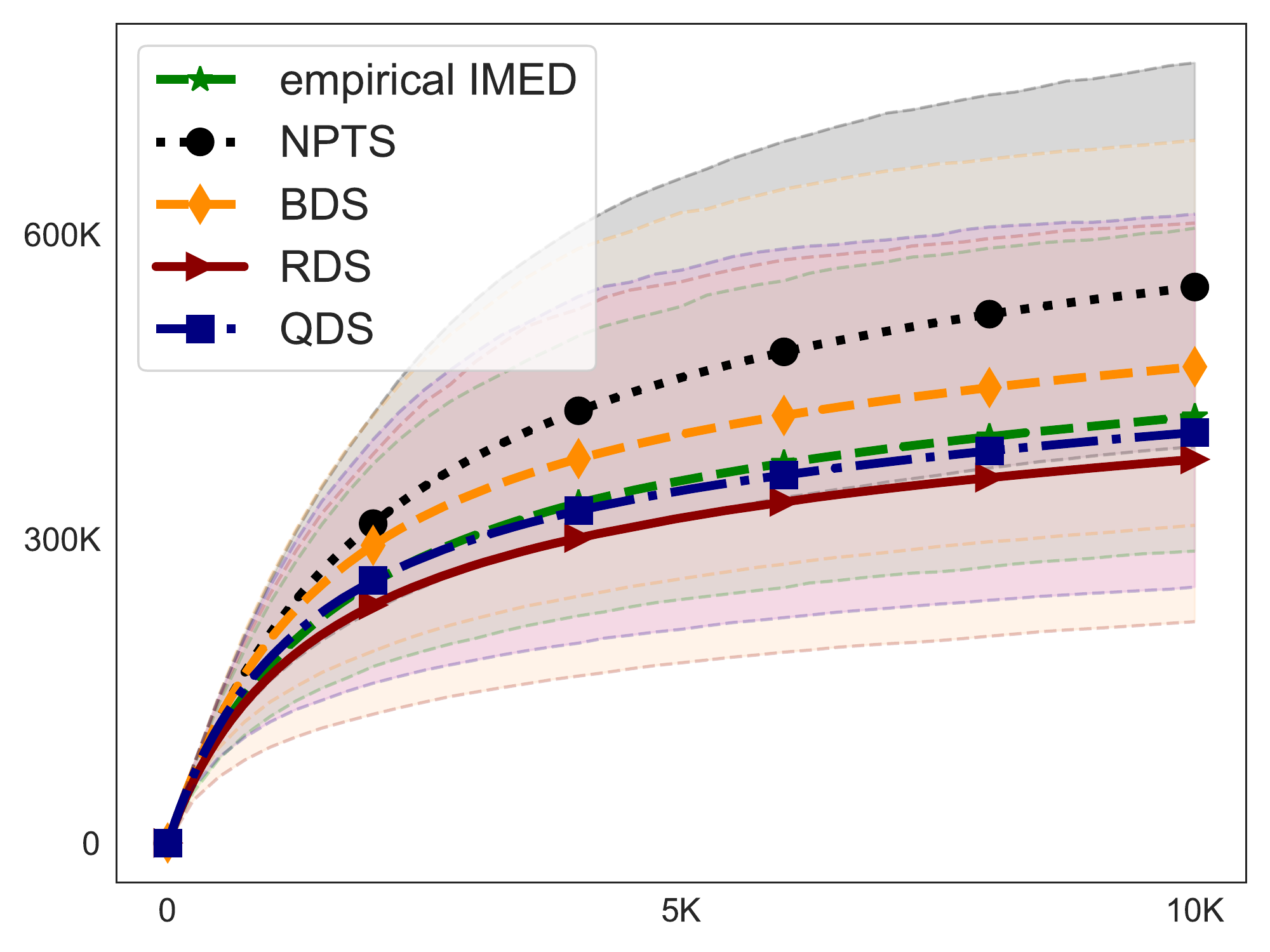}
	\end{subfigure}
	\caption{Average regret on $5000$ simulations and horizon $T=10^4$. Dashed lines correspond to 5\%-95\% regret quantiles. Empirical IMED and NPTS are run with exact upper bounds around $1.5\times 10^4$ kg/ha (left) and the conservative upper bound $1.5\times 10^4$ kg/ha (right).}
	\label{fig:dssat_bandit}
\end{figure}

\paragraph{Other experiments}
To further illustrate the properties of DS algorithms, we perform additional experiments on synthetic examples. Due to space limits, we present our results in Appendix~\ref{app::additional_xp}. First, we test the \emph{sensitivity} of DS w.r.t its hyperparameters, and check that their impact on the performance of the algorithms is moderate. Then, we show the merits of RDS in case of \emph{model misspecification}, following the robustness experiments of \cite{ashutosh2021bandit}. Finally, we consider the case of \emph{Gaussian mixtures}, a common tool to model nonparametric distributions via \emph{kernel density estimation}, and show that they fit the scope of DS but not that of usual bandit algorithms.

\section{Conclusion}

In this paper, we introduced a new framework for randomized exploration in stochastic bandits based on resampling of the reward history and a data-dependent bonus, which generalizes an optimal Thompson Sampling strategy for bounded distributions to \textit{light-tailed} families. We proposed three instances of such Dirichlet Sampling (DS) algorithms, corresponding to different modeling assumptions. In our opinion, these new algorithms are appealing for the practitioner because 1) our theoretical results show strong guarantees under different settings,  2) DS algorithms are simple to implement despite the technically challenging analysis and achieve strong practical performances, and 3) they provide alternative robust ways to tackle unbounded distributions in bandit problems. Interesting future directions include extending the DS framework to \emph{heavy tail} distributions, and tightening the analysis of \emph{Boundary Crossing Probabilities} of Section~\ref{sec::BCP} to design sharper bonuses for general families of distributions motivated by real use-cases. Moreover, we believe the duel-based structure associated with the generic regret decomposition of Theorem~\ref{th::regret_decomposition} opens up new perspectives to design exploration strategies in bandits. In particular, they allow to analyze policies using the history of two arms in the computation of a single index. 

\begin{ack}
	The PhD of Dorian Baudry and Patrick Saux are respectively funded by a CNRS80 grant and the Université de Lille Nord Europe’s I-SITE EXPAND, as part of the Bandits For Health (B4H) project. This work has been supported by the French Ministry of Higher Education and Research, Inria, Scool, and the French Agence Nationale de
	la Recherche (ANR) under grant ANR-16-CE40-0002 (the BADASS project).
	
	We thank the anonymous reviewers for their careful reading of the paper and their suggestions for improvements. We also warmly thank Emilie Kaufmann who managed to carefully read the paper and make (as always) very useful comments while taking care of little Pascal, and Romain Gautron for his precious help for the experiments involving the DSSAT simulator. 
	Experiments presented in this paper were carried out using the Grid'5000 testbed, supported by a scientific interest group hosted by Inria and including CNRS, RENATER and several Universities as well as other organizations (see \hyperlink{https://www.grid5000.fr}{https://www.grid5000.fr}).
\end{ack}

\bibliographystyle{abbrvnat}
\bibliography{biblio}

\section*{Checklist}
\begin{enumerate}

\item For all authors...
\begin{enumerate}
  \item Do the main claims made in the abstract and introduction accurately reflect the paper's contributions and scope?
    \answerYes{}
  \item Did you describe the limitations of your work?
    \answerYes{}
  \item Did you discuss any potential negative societal impacts of your work?
    \answerNA{We present in this work a bandit algorithm and show an application in a decision-making problem in agriculture.}
  \item Have you read the ethics review guidelines and ensured that your paper conforms to them?
    \answerYes{}
\end{enumerate}

\item If you are including theoretical results...
\begin{enumerate}
  \item Did you state the full set of assumptions of all theoretical results?
    \answerYes{See for instance Section~\ref{sec::application} where we introduce our main result.}
	\item Did you include complete proofs of all theoretical results?
    \answerYes{To respect the page limit all the proofs are presented in Appendix~\ref{app::BPC_proofs_app} (results on the BCP), ~\ref{app::general_proof} (proof of Theorem~\ref{th::regret_decomposition}), ~\ref{app::proof_regret_instances} (proof of Theorem ~\ref{th::BDS}, ~\ref{th::QDS}, ~\ref{th::RDS}).}
\end{enumerate}

\item If you ran experiments...
\begin{enumerate}
  \item Did you include the code, data, and instructions needed to reproduce the main experimental results (either in the supplemental material or as a URL)?
    \answerYes{See Supplementary material.}
  \item Did you specify all the training details (e.g., data splits, hyperparameters, how they were chosen)?
    \answerYes{We specify the parameters of the distributions and the parameters of the bandit algorithms in the code, and explain their choice in Section ~\ref{sec::xp}}
	\item Did you report error bars (e.g., with respect to the random seed after running experiments multiple times)?
    \answerYes{Confidence intervals are represented in all our figures.}
	\item Did you include the total amount of compute and the type of resources used (e.g., type of GPUs, internal cluster, or cloud provider)?
    \answerNA{Our code does not require powerful computing ressources, experiments are replicable with a laptop.}
\end{enumerate}

\item If you are using existing assets (e.g., code, data, models) or curating/releasing new assets...
\begin{enumerate}
  \item If your work uses existing assets, did you cite the creators?
    \answerYes{We provided references for the DSSAT environment.}
  \item Did you mention the license of the assets?
    \answerYes{DSSAT is open-source.}
  \item Did you include any new assets either in the supplemental material or as a URL?
    \answerNA{}
  \item Did you discuss whether and how consent was obtained from people whose data you're using/curating?
    \answerNA{}
  \item Did you discuss whether the data you are using/curating contains personally identifiable information or offensive content?
    \answerNA{}
\end{enumerate}

\item If you used crowdsourcing or conducted research with human subjects...
\begin{enumerate}
  \item Did you include the full text of instructions given to participants and screenshots, if applicable?
    \answerNA{}
  \item Did you describe any potential participant risks, with links to Institutional Review Board (IRB) approvals, if applicable?
    \answerNA{}
  \item Did you include the estimated hourly wage paid to participants and the total amount spent on participant compensation?
    \answerNA{}
\end{enumerate}

\end{enumerate}
\appendix

\newpage

{
	\hypersetup{linkcolor=black}
	\tableofcontents
}

\newpage

\section{Supplementary Material}

In this section we introduce different elements to help the understanding of the Dirichlet Sampling algorithms. We first detail the notations used in the main text and in the proofs. Then, we provide a detailed version of Algorithm~\ref{algo::GDS} and recall the three DS indexes presented in Section~\ref{sec::application}. 
Finally, we briefly recall the Non Parametric Thompson Sampling of \cite{Honda}, to help readers that are not familiar with this algorithm.
 
\subsection{Notations}\label{app::notations}

In this section, we provide to the reader an index of all the notations used in Sections~\ref{sec::introduction}-\ref{sec::xp}, and in the detailed proofs in Appendix~\ref{app::general_proof}-\ref{app::proof_regret_instances}.

\paragraph{Multi-Arm Bandits and families of distributions}

\begin{itemize}
	\item $\nu = (\nu_1, \dots, \nu_K)$ denotes a $K$-armed bandits where an $k \in \K$ is associated with a reward distribution $\nu_k \in \cF_k$, for some family of distributions $\cF_k$.
	\item For any family of distributions $\cF$, any $\nu \in \cF$ and any $\mu \in \R$ we denote 
	\[\kinf^\cF(\nu, \mu)=\inf_{\nu' \in \cF, \nu'\neq \nu} \left\{\KL(\nu, \nu')\!:\! \bE_{\nu'}(X)\!>\! \mu \right\} \;.\]
	This quantity allows to define \emph{asymptotic optimality} for bandit algorithms in Equation~\ref{eq::asymp_opt_def}.
	\item SPEF: Single-Parameter Exponential Family. The family of distributions $\cP_{\Theta}$ is a SPEF on the parameter set $\Theta \subset \R$ if there exists a function $\Psi: \Theta \rightarrow \R$ and $f: \R \rightarrow \R$ such that any distribution from $\cP_\Theta$ admits a density \[g_\theta(x) = g(x, \theta)=e^{\theta x-\psi(\theta)} f(x)\;, \]
	for some parameter $\theta \in \Theta$. Hence, each distribution in a specified SPEF is fully characterized by its parameter $\theta \in \Theta$.
	\item  $\cF_{[b, B]}$ denotes the set of all distributions supported on $[b, B]$, where $b$ and $B$ can respectively take values $-\infty$ and $+\infty$. With a slight abuse of notations, we denote for any $\nu \in \cF_{[b, B]}$, $\mu\in \R$
	\[\kinf^{B}(\nu, \mu) \coloneqq \kinf^{\cF_{[b, B]}}(\nu, \mu) \;, \]
	which holds for any $b$ as detailed for instance in ~\citep{Honda15IMED}.
	\item $\cF_{[b, B]}^{\gamma, p}=\{\nu \in \cF_{[b, B]}: \bP_\nu([B-\gamma, B]) \geq p\}$ is a set of bounded distributions with an additional condition on a neighborhood of their upper bound. Considering this set allows to build strategies with logarithmic regret for bounded distributions with an \emph{unknown} upper bound.   
	\item We denote the set of \emph{light-tailed distributions} in $\R$ 
	\[\cF_\ell = \{\nu \in \cF_{(-\infty, +\infty)}: \exists \lambda_\nu >0, \forall \lambda \in [-\lambda_\nu, \lambda_\nu], \bE_{\nu}[\exp(\lambda X)] < +\infty \} \;. \]
	\item We denote $q_{\beta}: \cF_{(-\infty, +\infty)} \rightarrow \R$ the operator that returns the quantile $\beta \in [0, 1]$ of any distribution $\nu \in \cF_{(-\infty, +\infty)}$, 
	\[q_\beta(\nu) = \inf\{x \in \R: F_\nu(x) > \beta \} \;,\]
	where $F_\nu$ denotes the cdf of a distribution $\nu$.
	\item We denote $C_{\alpha}: \cF_{(-\infty, +\infty)} \rightarrow \R$ the operator that returns the Conditional Value-at-Risk (CVaR) at level $\alpha\in[0, 1]$ of any distribution $\nu\in\cF_{(-\infty, +\infty)}$,
	\[ C_{\alpha}(\nu) = \inf_{x\in\R} \left\lbrace x + \frac{1}{\alpha} \bE_{\nu}\left[ \left( X-x\right)_+ \right]  \right\rbrace. \] Moreover if the cdf of $\nu$ is continuous, then $C_{\alpha} = \bE_{\nu}\left[ X | X\geq q_{1-\alpha}(\nu) \right]$.
	\item Considering a base family $\cF\subset \cF_{[b, +\infty)}$ and parameters $\alpha \in (0, 1)$, $\rho >0$, we consider the subset of $\cF$ \[\cF_{[b, +\infty)}^{\alpha}=\{\nu \in \cF: \forall \mu>\bE_\nu(X), \kinf^{\cF}(\nu, \mu) \geq \kinf^{\mathfrak{M}_k^q}(\cT_\alpha(\nu), \mu)\}\;,\] where $\mathfrak{M}_k^q = \max\{q_{1-\alpha}(\nu), \mu+\rho \bE_{\nu}[(\mu-X)^+]$, and $\cT_\alpha: \cF \rightarrow \cF_{[-\infty, q_{1-\alpha}(\nu_k)]}$ is the truncation operator satisfying $\cT_\alpha(\nu)(A)=\nu(A)$ for any $A \subset [b, q_{1-\alpha}(\nu)]$ and $\cT_\alpha(\nu)\left( \{x\}\right)  = \alpha \ind(x=C_{\alpha}(\nu))$ for any $x>q_{1-\alpha}(\nu)$.
\end{itemize}

\paragraph{Dirichlet Sampling Algorithms}

\begin{itemize}
	\item DS $\coloneqq$ Dirichlet Sampling, and we call "DS index" the index $\widetilde \mu$ computed in DS algorithms using a \emph{re-weighting} scheme (with a Dirichlet distributions) and an \emph{exploration bonus}.
	\item \emph{Bounded Dirichlet Sampling} (BDS): the DS algorithm proposed for families of distributions from $\cF_{[b, B]}$ with known $B$ and $\cF_{[b, B]^{\gamma, \rho}}$ for known $(\gamma, \rho)$.
	\item \emph{Robust Dirichlet Sampling} (RDS): a DS algorithm proposed for the general family of light-tailed distributions $\cF_\ell$.
	\item \emph{Quantile Dirichlet Sampling} (QDS): a DS algorithm proposed for families of distributions $\cF_{[b, +\infty)}^\alpha$ associated with some base family $\cF \subset \cF_{[b, +\infty]}$.
	\item \emph{Round}: a step of the algorithm indexed by some $r\in \N$ and at the end of which a set of arms $\cA_r \subset \K$ is selected.
	\item \emph{T-Round regret}: Consider a bandit $\nu=(\nu_1, \dots, \nu_K)$ with arms' means $(\mu_1, \dots, \mu_K)$ and an horizon of $T$ rounds,
	\[\cR_T = \sum_{k=1, \Delta_k >0}^K \Delta_k \bE[N_k(T)]\;, \]
	where $\Delta_k=\max_{i \in \K} \mu_i - \mu_k$ and \[N_k(T)=\sum_{r=1}^{T} \ind(k \in \cA_r)\; \]
	is the number of selection of an arm $k$ after $T$ rounds.
	\item \emph{Duel}: a pairwise comparisons between two arms at the end of which one arm is selected as the \emph{winner}.
	\item \emph{Leader}: A reference arm $\ell^r$ chosen at the beginning of each round $r$ as 
	\[ \ell^r \in \argmax{k \in \K} N_k(r)\;.\]
	We denote $\cL^r = \argmax{k \in \K} N_k(r)$ the set of possible candidates for leadership. If $|\cL^r| >1$ the algorithm chooses uniformly at random an arm $\ell^r \in \argmax{k \in \cL^r} \mu_k^r$, where $\mu_k^r$ is the empirical mean of arm $k$ at round $r$.
	\item \emph{Duel in DS algorithms}: we denote $\cX=(X_1, \dots, X_n)$ and $\cY=(Y_1, \dots, Y_N)$ two sets of real observations. 
	\begin{enumerate}
		\item $\mu: \R^\N \rightarrow \R$ denotes the application computing the mean of a set: $\mu(\cX)= \frac{1}{n}\sum_{i=1}^n X_i$, and $\mu(\cY)= \frac{1}{N}\sum_{i=1}^N Y_i$.
		\item $\widetilde \mu: \R^\N \times \R^\N \rightarrow \R$ denotes an application that returns the "DS index" $\widetilde \mu(\cX, \cY)$.
	\end{enumerate}  
If $\cX$ denotes the observations from a \emph{challenger} in a DS algorithm, and $\cY$ the one of the current leader, then the challengers \emph{wins the duel} if $\max(\mu(\cX), \widetilde \mu(\cX, \cY))\geq \mu(\cY)$. Otherwise, the leader wins the duel.
\item We denote $\Dir(P)$ the Dirichlet distribution with parameter $P \in \N^\N$. When $P=1_n = (1, \dots, 1)$ (vector of size $n$, all values equal to $1$) we write this distribution $\cD_n$. Furthermore, $\cD_n$ is the uniform distribution on the probabilistic simplex of size $n$, \[\cP^n = \{w \in [0, 1]^n: \sum_{i=1}^n w_i =1\}\;.\]
\item We denote $\cB: \cX \in \R^\N \times \cY \in \R^\N \rightarrow \cB(\cX, \cY)$ a generic \emph{exploration bonus} for a duel involving the set $\cX$ (challenger) and the set $\cY$ leader. With a slight abuse of notation we use $\cB(\cX, \mu(\cY))$ when it only uses the mean of the leader, and consider in the paper the bonus 
\[B(\cX, \mu, \rho ) = \mu + \rho \times \frac{1}{n}\sum_{i=1}^{n} (\mu-X_i)_+ \;,\]

where $n=|\cX|$, $\cX=(X_1, \dots, X_n)$ and $\rho>0$ is a given parameter and for any $x\in \R$, $x_+=\max(x, 0)$.
\end{itemize}

\paragraph{Technical Notations for the proof of Theorem~\ref{th::regret_decomposition}}
\begin{itemize}
	\item Optimal arm = arm $1$ without loss of generality
	\item $\cX_{n}^k = (X_{1}^k, X_2^k, \dots, X_n^k)$ denotes the first $n$ observations collected from arm $k$. We assume that for each arm an unknown infinite reward stream $\cX^k$ is available and that the $j$-th reward in $\cX^k$ is the $j$-th reward collected from this arm (independently on when it is observed). For arm $1$ we simply use the notations $\cX$ and $\cX_n$.
	\item $a_r = \lceil r/4\rceil$, $b_r=\lceil a_r/K\rceil$: in all rounds after $a_r$ the leader has at least $b_r$ observations, which is linear in $r$.
	\item $\cD^r = \{\exists u \in [a_r,r]: \ell^u=1\}$: under this event arm $1$ has been leader at least once after the round $a_r$.
	\item We call the event $\cB_k^r = \{\ell^r =1, \ell^{r+1}=k\}$ a \emph{leadership takeover} by arm $k$.
	\item $\cC^r=\{\ell^r\neq 1, 1 \notin \cA_{r+1}\}$ represents a duel lost at round $r$ by arm $1$ against a sub-optimal leader. We also denote $L^r = \sum_{u=a_r}^r \ind(\cC^u)$.
	\item $\cH_j^{r,n} = \left\{N_1(r)=n, \left|\mu\left(\cX_{N_j(r)}^j\right)-\mu_j\right|\leq \epsilon, 1 \notin \cA_{r+1}\right\}$ for some $\epsilon>0$.
\end{itemize}

\subsection{Detailed algorithms}\label{app::algo_full}

\begin{algorithm}[h]
	\caption{Generic Dirichlet Sampling Bandit Algorithm \label{algo::GDS_full}}
	\SetKwInput{KwData}{Input} 
	\KwData{$K$ arms, horizon $T$, DS index $\widetilde \mu$}
	\SetKwInput{KwResult}{Init.}
	\SetKwComment{Comment}{$\triangleright$\ }{}
	\SetKwComment{Titleblock}{// }{}
	\KwResult{$t=1$, $r=1$, $\forall k \in \{1, ..., K \}$: $\cX_k=\{\}$, $N_k=1$, $S_k=0$} 
	\While{$ t < T$ }{
		$\cA=\{\} $ \Comment*[r] {Set of Arms to pull at the end of the round}
		\If{$r=1$}{$\cA = \K$ \Comment*[r]{All arms are pulled at the first round}}
		\Else{
			\vspace{1mm}
			\blue \Titleblock{Leader Choice} \black
			\vspace{1mm} 
			$\cL = \argmax{k \in \K}{N_k}$  \Comment*[r]{Arm(s) with the largest number of samples}
			$\bar \cL= \argmax{k \in \cL}{\mu_k} $ \Comment*[r]{Keep best arm(s) in $\cL$} 
			$\ell \sim \cU(\bar \cL)$ \Comment*[r]{Random choice if several candidates}
			\vspace{1mm}
			\blue \Titleblock{Duels} \black
			\vspace{1mm}
			\For{$k \in \K$, $k \notin \cL$ \Comment*[r]{Challengers in $\cL$ are eliminated}}{
				\If{$\mu_k \geq \mu_\ell$}{$\cA = \cA \cup \{k\}$ \Comment*[r]{First duel with empirical mean}}
				\Else{
					Draw DS index $\widetilde \mu(\cX_k, \cX_\ell)$ \\
					\vspace{2mm}
					\If{$\widetilde \mu(\cX_k, \cX_\ell) \geq \mu_\ell $}{$\cA=\cA \cup \{k\}$ \Comment*[r]{Second duel with the Dirichlet Sampling index}}
				}
			}	
		}
		\vspace{1mm}
		\blue \Titleblock{Collect reward and update quantities} \black
		\vspace{1mm}
		\If{$|\cA| = 0$}{$\cA = \{\ell\}$ \Comment*[r]{If no winning challenger $\ell$ is pulled}} 
		Shuffle $\cA$ \\
		\For{$k \in \cA$}{
			\If{$t<T$}{
				Collect $X_{k, N_k}$, update history $\cX_k = \cX_k \cup \{X_{k, N_k} \}$\\ 
				$N_k = N_k +1$, $S_k=S_k+X_{k, N_k}$, $\mu_k=S_k/N_k$, $t = t+1$}
		}
		$r = r +1 $
	}
\end{algorithm}

In this section we provide the detailed implementation of each algorithm presented in this paper. Before that, we first recall the families on which each algorithm achieves regret guarantees and their corresponding bonuses. We denote $\bar \cX$ the maximum of a set $\cX$.

\begin{table}
\begin{center}
	  \caption{Summary of settings and bonuses}
	\begin{tabular}{ccc}
		Algorithm & Kind of family  &  $\cB(\cX,\cY)$\\
		\hline
		BDS & $\cF_{[b,B]}$, known $B$ & $B$\vspace{1.2mm} \\ 
		BDS & $\cF_{[b,B]}^{\gamma,p}$, known $\gamma, p$ & $\max\{\bar \cX + \gamma, B(\cX, \mu, \rho)\}$\vspace{1.2mm}\\
		RDS & $\cF_\ell$&  $B(\cX, \mu, \rho_n)$ with $\rho_n \rightarrow + \infty$\vspace{1.2mm}\\
		QDS & $\cF_{[b, +\infty)}^\alpha$ &  $B(\cX, \mu, \rho)$
	\end{tabular} \label{fig:bonus}
\end{center}
\end{table} 

For QDS, the family $\cF_{[b, +\infty)}^\alpha$ relies on an unknown base family $\cF \subset \cF_{[b, +\infty)}$ for some $b>-\infty$. We detail in Theorems~\ref{th::BDS}, ~\ref{th::RDS} and ~\ref{th::QDS} the conditions on the tuning of $\rho$ for each algorithm to ensure a controlled regret. These three results rely on the same general analysis of a Dirichlet Sampling Algorithm with any index $\widetilde \mu$. Before instantiating each algorithm we first provide a detailed version of this algorithm in Algorithm~\ref{algo::GDS_full}, to complete the short version we provided in Section~\ref{sec::algorithm_main} with Algorithm~\ref{algo::GDS}, 

We now provide the detailed computation of the DS index for the three algorithms we introduced in Section~\ref{sec::application}: Bounded Dirichlet Sampling (BDS), Robust Dirichlet Sampling (RDS) and Quantile Dirichlet Sampling (QDS).

\paragraph{Bounded Dirichlet Sampling index} We first introduce in Algorithm~\ref{algo::BDS_index} the index of BDS. The set of parameters is dependent of the choice of hypothesis (B1) or (B2) in Theorem~\ref{th::BDS}. As hypothesis (B1) corresponds to the same index as Non Parametric Thompson Sampling (that we describe in further details in Appendix~\ref{app::NPTS}) we only report the bonus under case (B2).

\begin{algorithm}[H]
	\caption{Bounded Dirichlet Sampling Index \label{algo::BDS_index}}
	\SetKwInput{KwData}{Input} 
	\KwData{Two sets of data $\cX=(X_1, \dots, X_n)$ and $\cY=(Y_1, \dots, Y_N)$, parameters $\gamma, \rho$}
	\SetKwInput{KwResult}{Init.}
	\SetKwComment{Comment}{$\triangleright$\ }{}
	\SetKwComment{Titleblock}{// }{} 
	\hspace{1mm} Set $\mu=\frac{1}{N} \sum_{i=1}^N Y_1$ \vspace{2mm} \\ 
	Set $B(\cX, \mu) = \max\left(\max_{i=1}^n X_i + \gamma, \mu + \frac{\rho}{n}\sum_{i=1}^n (\mu- X_i)^+\right)$ \vspace{2mm}\\
	Draw $w=(w_1, \dots, w_{n+1}) \sim\cD_{n+1} $\vspace{2mm}\\
	\Return $\sum_{i=1}^n w_i X_i + w_{n+1} B(\cX, \mu)$
\end{algorithm}

We recall that if $\gamma$ and $p$ are known in (B2) then the theoretical tuning of $\rho$ we provide in Theorem~\ref{th::BDS} is $\rho \geq -1/\log(1-p)$.

\paragraph{Robust Dirichlet Sampling index} We now report in Algorithm~\ref{algo::RDS_index} the index of RDS, which only depends on the choice of an increasing sequence $\rho_n$ satisfying $\rho_n \rightarrow +\infty$ and $\rho_n = o(\sqrt{n})$.

\begin{algorithm}[H]
	\caption{Robust Dirichlet Sampling Index \label{algo::RDS_index}}
	\SetKwInput{KwData}{Input} 
	\KwData{Two sets of data $\cX=(X_1, \dots, X_n)$ and $\cY=(Y_1, \dots, Y_N)$, $(\rho_n)_{n \in \N}$}
	\SetKwInput{KwResult}{Init.}
	\SetKwComment{Comment}{$\triangleright$\ }{}
	\SetKwComment{Titleblock}{// }{} 
	\hspace{1mm} Set $\mu=\frac{1}{N} \sum_{i=1}^N Y_1$ \vspace{2mm} \\ 
	Set $B(\cX, \mu) = \mu + \rho_n \times \frac{1}{n}\sum_{i=1}^n (\mu- X_i)^+$ \vspace{2mm}\\
	Draw $w=(w_1, \dots, w_{n+1}) \sim\cD_{n+1} $\vspace{2mm}\\
	\Return $\sum_{i=1}^n w_i X_i + w_{n+1} B(\cX, \mu)$
\end{algorithm}

RDS is the simplest instance of Dirichlet Sampling we propose, and achieves robust regret guarantees for light-tailed distributions. We also propose QDS for unbounded distribution, that achieves logarithmic regret guarantees under a mild quantile condition.

\paragraph{Quantile Dirichlet Sampling index} We finally present in Algorithm~\ref{algo::QDS_index} the index of QDS, depending this time on the choice of a quantile $q_{\alpha'}$ and a parameter $\rho$. If the regret guarantees require quite complicated notations for a proper formalism, the index is actually quite simple to implement: the bonus is similar to BDS and RDS, and the re-weighting step only require to compute the $\alpha'$ quantile of the data, the corresponding CVaR (mean of the observations larger than this quantile) and to draw a Dirichlet weight with a slightly different parameter. Furthermore, the computation time of these steps can be optimized in practice (keeping in memory the sorted data, quantile and CVaR).

\begin{algorithm}[H]
	\caption{Quantile Adaptive Dirichlet Sampling Index \label{algo::QDS_index}}
	\SetKwInput{KwData}{Input} 
	\KwData{Two sets of data $\cX=(X_1, \dots, X_n)$ and $\cY=(Y_1, \dots, Y_N)$, quantile $\alpha'$, $\rho$}
	\SetKwInput{KwResult}{Init.}
	\SetKwComment{Comment}{$\triangleright$\ }{}
	\SetKwComment{Titleblock}{// }{} 
	\hspace{1mm} Sort $\cX$ to have $X_1 \leq X_2\leq \dots \leq X_n$ \vspace{2mm}\\
	Set $\mu=\frac{1}{N} \sum_{i=1}^N Y_1$ \vspace{2mm} \\ 
	Set $B(\cX, \mu) = \mu + \frac{\rho}{n}\sum_{i=1}^n (\mu- X_i)^+$ \vspace{2mm}\\
	Set quantile index $n_{\alpha'} = \lceil n (1-\alpha')\rceil / n$\vspace{2mm}\\
	Set $C_{\alpha'} = \frac{1}{n-n_{\alpha'}+1} \sum_{i=1}^{n-n_{\alpha'}+1} X_{n_{\alpha'}+i}$\vspace{2mm}\\
	Draw $w=(w_1, \dots, w_{n_{\alpha'}+1}) \sim \Dir((1, \dots,1 ,\mathbf{n_{\alpha'}}, 1)) $ \Comment*[r]{Only ones except for $w_{n_{\alpha'}}$}
	\Return $\sum_{i=1}^{n_{\alpha'}-1} w_i X_i + w_{n_{\alpha'}} C_{\alpha'}+ w_{n_{\alpha'}+1} B(\cX, \mu) $
\end{algorithm}

\subsection{Non-Parametric Thompson Sampling}\label{app::NPTS}

In this section we briefly introduce the Non Parametric Thompson Sampling Algorithm, introduced in \cite{Honda}, for readers that are not familiar with it. NPTS is an index policy, that we detail in Algorithm~\ref{algo::NPTS}. To keep consistent notations, we write it using the notations of this paper. 

\begin{algorithm}[H]
	\caption{Non Parametric Thompson Sampling \citep{Honda} \label{algo::NPTS}}
	\SetKwInput{KwData}{Input} 
	\KwData{Horizon $T$, $K$ arms, known upper bound $B$}
	\SetKwInput{KwResult}{Init.}
	\SetKwComment{Comment}{$\triangleright$\ }{}
	\SetKwComment{Titleblock}{// }{}
	\KwResult{$\forall k \in \{1, ..., K \}$: $\cX_k=\{\}$, $N_k=0$}
	\For{$t \in \{1, \dots, T\}$}{
	\For{$k \in \K$}{
		Sample $w \sim \cD_{N_k+1}$ \\
		Set $\mu_k = \sum_{i=1}^{N_k} w_i X_i + w_{N_k+1} B$
}
Pull arm $A_t = \argmax{k \in \K} \mu_k$, observe $X_{N_k+1}$ \\
Update $\cX_k = \cX_k \cup \{X_{n_k+1}\}$, $N_k = N_k+1$}
\end{algorithm}

We recall that this algorithm is asymptotically optimal when arms belong to $\cF_{[b, B]}$ for a known $B$.
\section{Proof of Theorem~\ref{th::regret_decomposition}}\label{app::general_proof}

We first recall the statement of result we want to prove.

\thdecomporegret*

In this section we study the regret of a general DS algorithm, for any index function $\widetilde \mu$. We exploit the duel structure of the algorithm in order to exhibit the the two crucial terms that depend on the DS index and have to be controlled in Theorem~\ref{th::regret_decomposition}. Before starting the analysis we recall the first condition that we assume on the arms distributions, which ensures the concentration of the means.

\paragraph{Condition 1 (C1): concentration of the means} For all $k$, there exists a good rate function $I_k$ satisfying for all $x \geq \mu_k$, $y \leq \mu_k$, and any i.i.d sequence $X_1, \dots, X_n$ drawn from $\nu_k$ 
\begin{equation*}\bP\left( \frac{1}{n} \sum_{i=1}^n X_i \geq x \right) \leq e^{-n I_k(x)} \;, \; \text{and } \bP\left(\frac{1}{n} \sum_{i=1}^n X_i \leq y\right) \leq e^{-n I_k(y)} \;.\end{equation*}

This is actually the only result we need to prove Theorem~\ref{th::regret_decomposition}, we do not need to introduce the DS indexes for this result. We will carefully detail each term of the Theorem, in particular the constant $C_{\nu, \epsilon}^k$ whose components are all explicit.

\subsection{Regret decomposition}

Thanks to the duel structure of DS, the fact that an arm is pulled or not depends of its status as a leader or a challenger. If an arm is a challenger, it can be pulled only if it wins its duel against the leader. For this reason, a natural first regret decomposition consists in considering the cases when 1) the optimal arm is the leader and some sub-optimal arms are pulled, and 2) the optimal arm is not the leader. Thanks to the definition of the round-regret $\cR_T$ in Equation~\ref{eq:round_regret}, and denoting arm $1$ as the unique optimal arm (without loss of generality), we upper bound $\cR_T$ by controlling the expectation of the number of pulls of each sub-optimal arm $k \in \{2, \dots, K\}$. Using that all arms are pulled during the first round we obtain

\begin{align*}
	\bE[N_k(T)] &= 1+ \bE\left[\sum_{r=1}^{T-1} \ind(k \in \cA_{r+1})\right] \\
	&= 1+ \sum_{r=1}^{T-1} \bE\left[\ind(k \in \cA_{r+1}, \ell^r=1) + \ind(k \in \cA_{r+1}, \ell^r \neq 1) \right] \\
	&\leq 1+ \underbrace{\bE\left[\sum_{r=1}^{T-1}\ind(k \in \cA_{r+1}, \ell^r=1)\right]}_{n_k(T)} + \underbrace{\bE\left[\sum_{r=1}^{T-1}\ind(\ell^r\neq 1) \right]}_{E_T^k} \;. 
\end{align*}

We already extracted the term $n_k(T)$ of Theorem~\ref{th::regret_decomposition} at this step, and introduced a term $E_T^k$ that contains both $B_{T, \epsilon}^k$ and $C_{\nu, \epsilon}^k$. In the rest of this proof we work on the term $E_T^k$.

\subsection{Upper bound on $E_T^k$}

The following part of the proof is inspired by the proof of the SSMC algorithm in \cite{chan2020multi}. Furthermore, we will see that we can further decompose $E_T^k$ into several events that we will handle using condition (C1), showing the interest of the algorithm' structure in rounds and duels. After that, we will finally exhibit the term that requires the condition (C2).

Before that, we start by analyzing two alternatives that can cause the event $\{\ell(r) \neq 1\}$, namely 1) arm $1$ has already been leader and has lost the leadership at some point (leadership takeover), or 2) arm $1$ has never been leader. To formalize these alternatives we define the sequence $a_r = \lceil r/4 \rceil$ and the event
\[\cD_r = \{\exists u \in [a_r, r]: \ell^u=1\} \;.\]

\subsubsection{$\cD_r$ is true: arm $1$ has already been leader after round $a_r$}

We first justify the choice of $a_r$: starting after a number of rounds that is linear in $r$ ensures a number of observations for the leader during the whole segment $[a_r, r]$ that is also linear in $r$, that is for all $s \in [a_r, r]$, 
\[N_{\ell^s}(s) \geq \lceil a_r/K \rceil \coloneqq b_r \;.\] 

Under $\cD_r$ we study the probability of a leadership takeover by a sub-optimal arm between $a_r$ and $r$. Such takeover can happen only if 1) a sub-optimal arm obtain the same number of samples as arm $1$, and 2) its sample average is larger than the one of arm $1$ at the round when it happens. We denote $\cX_n^k$ the history available for arm $k$ after it has been sampled $n$ times, and drop the exponent for arm $1$. We formalize the leadership takeover with the following events, 

\begin{align*}
	\{\ell^r&\neq 1\} \cap \cD^r \subset \cup_{u=a_r}^{r-1}\{\ell^u=1, \ell^{u+1}\neq 1 \} \\ 
	& \subset \cup_{u=a_r}^{r-1} \cup_{k=2}^K \{\ell^u=1, k \in \mathcal{A}_{u+1}, N_k(u+1)=N_1(u+1),\mu(\cX_{N_k(u+1)}^k) \geq  \mu(\cX_{N_1(u+1)})\} \\
	& \subset \cup_{u=a_r}^{r-1} \cup_{k=2}^K \{N_k(u+1)=N_1(u+1),\mu(\cX_{N_k(u+1)}^k) \geq  \mu(\cX_{N_1(u+1)})\} \\
	& = \subset \cup_{u=a_r+1}^{r} \cup_{k=2}^K \{N_k(u)=N_1(u),\mu(\cX_{N_k(u)}^k) \geq  \mu(\cX_{N_1(u)})\} \\
	& \coloneqq \cup_{u=a_r+1}^{r} \cup_{k=2}^K \cB_k^u \;.
\end{align*}

Starting the sum on the rounds at some round $r_0$, we first develop and express the sum of these terms as

\begin{align*}
	\bE\left[\sum_{r=r_0}^{T-1} \sum_{u=a_r+1}^t \ind(\cB_{k}^u) \right]  &=
	\bE\left[\sum_{r=r_0}^{T-1} \sum_{u=a_r+1}^t \ind(N_k(u)=N_1(u),\mu(\cX_{N_k(u)}^k) \geq  \mu(\cX_{N_1(u)})) \right] \\
	&\leq \bE\left[\sum_{r=r_0}^{T-1} \sum_{u=a_r+1}^{r} \sum_{n=\lceil u/K \rceil}^u \ind(N_k(u)=N_1(u)=n, \mu(\cX_{n}^k) \geq  \mu(\cX_{n}))\right] \\
	&\leq  \bE\left[\sum_{r=r_0}^{T-1} \sum_{u=a_r+1}^{r} \sum_{n=\lceil u/K \rceil}^u \ind(\mu(\cX_{n}^k) \geq  \mu(\cX_{n}))\right]\,.
\end{align*}

We now define $x_k= \frac{\mu_1+\mu_k}{2}$. If $\mu(\cX_n^k) \geq  \mu(\cX_n)$, then either the arm $1$ had "under-performed" or arm $k$ has "over-performed", which means that $\mu(\cX_n^k) \geq  x_k $ or $\mu(\cX_n) \leq x_k$, which gives 

\begin{align*}
\bE\left[\sum_{r=r_0}^{T-1} \sum_{u=a_r+1}^t \ind(\cB_{k}^u) \right]	&\leq  \bE\left[\sum_{r=r_0}^{T-1} \sum_{u=a_t+1}^{t} \sum_{n=\lceil u/K \rceil}^u \ind(\mu(\cX_n^k) \geq x_k \cup \mu(\cX_n) \leq x_k)\right] \\
	&\leq \sum_{r=r_0}^{T-1} \sum_{u=a_t+1}^{t} \sum_{n=\lceil u/K \rceil}^u \left[ \bP(\mu(\cX_n^k) \geq x_k) + \bP(\mu(\cX_n) \leq x_k) \right] \\
	& \leq \sum_{r=r_0}^{T-1} r^2 \left[ \bP(\mu(\cX_{b_r}^k) \geq x_k) + \bP(\mu(\cX_{b_r}) \leq x_k) \right] \\
	& \leq \sum_{r=r_0}^{T-1} r^2 \left(e^{-b_r I_1(x_k)} + e^{-b_r I_k(x_k)}\right)\\
	& = O(1) \;,
\end{align*}

where the two last lines come from the condition (C1), which is the existence of a good rate function for each arm's distribution. Thanks to the structure of the algorithm this condition is enough to upper bound the cost of leadership takeover by sub-optimal arms by constants in the regret. This convergent series is the first component of the term $C_{\nu,\epsilon}^k$ in Theorem~\ref{th::regret_decomposition}. We now consider the case when arm $1$ has never been leader.

\subsubsection{Upper bound when $\cD_r$ is not true}

The idea in this part is to leverage the fact that if the optimal arm is not leader between 
$\lceil r/4 \rceil$ and $r$, then it has necessarily lost a lot of duels. We introduce the count of the number of duels lost by arm $1$, 
$$
L^r= \sum_{u=a_r}^r \ind(\cC^u)\;,
$$
with $\cC^u = \left\{\exists k \neq 1,\ell^u=k, 1 \notin \cA_{u+1}\right\}$ representing the event that at round $u$ arm $1$ is challenger and has lost its duel. We then consider the upper bound
\begin{equation}
\label{eq:upper_z}
\bP(\ell^r \neq 1 \cap \bar \cD^r) \leq \bP(L^r\geq r/4)\;. 
\end{equation}
This result is  a direct adaptation from \citep{chan2020multi} (Equation 7.12), we then use of the Markov inequality to obtain
\begin{equation}
\label{eq:upper_z_2}
\bP(L^r\geq r/4) \leq \frac{\bE(L^r)}{r/4}= \frac{4}{r} \sum_{u=a_r}^r \bP(\cC^u)\;.
\end{equation}

We then remove the double sum on $u$ and $t$ by simply counting the number of occurrences of each term,
\begin{align*}
	\sum_{r= r_0}^{T-1} \bP(L^r\geq r/4) &\leq \bE \left[\sum_{r= r_0}^{T-1}\frac{4}{r} \sum_{u=a_r}^r \ind(\cC^u)\right] \\
	& \leq \bE\left[ \sum_{r= r_0}^{T-1} \sum_{u=a_{r_0}}^{T-1} \frac{4}{r} \ind(\cC^u) \ind(u \in [a_r, r]) \right] \\
	& \leq \bE\left[\sum_{u=a_{r_0}}^{T-1} \ind(\cC^u) \sum_{r= r_0}^{T-1}  \frac{4}{r} \ind(u \in [a_r, r])\right] \;.\\
\end{align*}

From this step we can control independently the sum in $r$,

\begin{align*}
	\sum_{r= r_0}^{T-1}  \frac{4}{r} \ind(u \in [a_r, r]) & \sum_{r= r_0}^{T-1}  4 \frac{\ind(u\leq r)}{r} \ind(a_r \leq u) \\
	& \leq \frac{4}{u} \sum_{r= r_0}^{T-1} \ind(a_r \leq u) \\
	&  \leq \frac{4}{u} \sum_{r= r_0}^{T-1} \ind(\lceil r/4 \rceil \leq u) \\
	&  \leq \frac{4}{u} \sum_{r= r_0}^{T-1} \ind(r/4 \leq u+1) \\
	& \leq  \frac{4}{u}\times 4(u+1) \\
	& \leq 32 \;.\\
\end{align*}

With this result we obtain that \[\sum_{r= r_0}^{T-1} \bP(L^r\geq r/4) \leq 32 \sum_{r=a_{r_0}}^{T-1} \bP(\cC^r) \;.
\] 

We then decompose as the union of all the terms corresponding to each possible sub-optimal leader, 

\[ \cC^r = \cup_{j=2}^K \{\ell^r =j, 1 \notin \cA_{r+1} \} \coloneqq \cup_{j=2}^K \cC_j^r \;.\]

We can now fix any sub-optimal leader $j$ and work on the term $\cC_j^r$. We recall that arm $1$ has two chances to win the duel: first with its empirical mean, and then with a random index. We first handle the case when the sub-optimal leader could be "over-performing", by writing for any $\epsilon>0$

\begin{align*}
\cC_j^r &\subset\left\{\left|\mu\left(\cX_{N_j(r)}^j\right)-\mu_j\right|\geq \epsilon, \ell^r=j \right\} \\
\cup& \left\{\left|\mu\left(\cX_{N_j(r)}^j\right)-\mu_j\right|\leq \epsilon, \ell^r=j, \mu\left(\cX_{N_1(r)}\right)\leq \mu\left(\cX_{N_j(r)}^j\right), \widetilde \mu\left(\cX_{N_1(r)}, \cX_{N_j(r)}^j\right) \leq \mu\left(\cX_{N_j(r)}^j\right) \right\} \;. \end{align*}

We then upper bound the left-hand term using again the concentration of the leader, and obtain 
\begin{align*}
	\sum_{r=1}^{T-1} \bP\left(\left|\mu\left(\cX_{N_j(r)}^j\right)-\mu_j\right|\geq \epsilon,\right. & \ell^r=k \Big) \\
	 &= \sum_{r=1}^{T-1} \sum_{n_j=\lceil r/K \rceil} \bP\left(\left|\mu\left(\cX_{N_j(r)}^j\right)-\mu_j\right|\geq \epsilon, N_j(r)=n_j\right)\\
	&= O(1). \\
\end{align*} 

For the simplicity of notations we keep the notation $\cC_j^r$ to define the right-hand term. We then continue the analysis of $\cC_j^r$ by considering the number of samples of arm $1$, and in particular if $N_1(r) \geq n_1(T)$ or not, for some new function $n_1(T)$. Considering that arm $1$ has a first chance with its empirical mean, and that under $\cC_k^r$ the leader's mean is controlled we can fix $n_1(T)$ in order to ensure that for any $n \geq n_1(T)$: $\bP(\mu\left(\cX_n\right) \leq \mu_j+\epsilon) \leq 1/T^2$. Thanks to condition (C1) this is the case for $n_1(T) = \lceil 2/I_1(\mu_j + \epsilon) \log T \rceil $. We now define the event

\begin{align*}
\cH_j^{r,n} = &\left\{N_1(r)=n, \left|\mu\left(\cX_{N_j(r)}^j\right)-\mu_j\right|\leq \epsilon, \mu\left(\cX_{n}\right)\leq \mu\left(\cX_{N_j(r)}^j\right), \right. \\
&\left. \widetilde \mu\left(\cX_n, \cX_{N_j(r)}^j \right) \leq \mu\left(\cX_{N_j(r)}^j\right) \right\} \;,
\end{align*}

and use it to define a new upper bound on $\cC_j^r$,

\begin{align*}
	\sum_{r=a_{r_0}}^{T-1} \bP\left(\cC_j^r\right) \leq& \sum_{r=a_{r_0}}^{T-1} \sum_{n=1}^{T-1} \bP\left(\cH_j^{r, n}\right) \\
	 \leq& \sum_{r=a_{r_0}}^{T-1} \sum_{n=n_1(T)}^{T-1} \bP\left(N_1(r)=n,  \mu(\cX_n)\leq \mu_j+\epsilon\right) + \sum_{r=a_{r_0}}^{T-1} \sum_{n=1}^{n_1(T)} \bP\left(\cH_j^{r, n}\right)\\
	\leq&\ 1 + \sum_{r=a_{r_0}}^{T-1} \sum_{n=1}^{n_1(T)} \bP\left(\cH_j^{r, n}\right)\;.
\end{align*}

We then use as in \cite{Honda} that 

\begin{align*}
\sum_{r=a_{r_0}}^{T-1} \ind(\cH_j^{r, n}) =  \sum_{m=1}^{T-1}  \ind\left(\sum_{r=a_{r_0}}^{T-1} \ind(\cH_j^{r, n})  \geq m \right) \;,
\end{align*}

and define as $\tau_1^n, \dots, \tau_m^n$ the $m$ first rounds for which $\cH_j^{r, n}$ hold. If $\ind\left(\sum_{r=a_{r_0}}^{T-1} \ind(\cH_j^{r, n})  \geq m \right)$ is true then $\cH_j^{\tau_j, n}$ holds for any $i\leq m$ and all these $\tau_i$ are finite, which means that 

\[\ind\left(\sum_{r=a_{r_0}}^{T-1} \ind(\cH_j^{r, n})  \geq m \right) \leq \prod_{i=1}^m \ind\left(\cH_j^{\tau_i^n, n}\right) \;. \]

So, we denote the term that is left to upper bound as 

\begin{align*}
	D_{T, \epsilon}^j =& \sum_{n=1}^{n_1(T)} \sum_{m=1}^{T-1} \bE\left[\prod_{i=1}^m \ind\left(\cH_j^{\tau_i^n, n}\right)\right] \\ 
	=&\sum_{n=1}^{n_1(T)} \sum_{m=1}^{T-1} \bE_{\cX_n}\left[\prod_{i=1}^m \bP\left(\widetilde \mu\left(\cX_n, \cX_{N_j(\tau_i^n)}^k\right) \leq \mu\left(\cX_{N_j(\tau_i^n)}^k\right)\Big|\cX_n\right) \ind\left(\cH_j^{\tau_i^n, n}\right)\right] \;.
\end{align*}

In the next steps we remove the dependency of the index in $\cX_{N_j(\tau_i^n)}^k$, knowing that the index only depends of its mean and that the mean is located in a small range around $\mu_j$. At this step, we notice that we could have replaced this control of the mean by locating the empirical distribution of $k$ in any space "around" its true distribution, assuming a concentration similar as (C1) for the corresponding event. However, to simplify the notations we assume that the index $\widetilde \mu$ only depends on the history of $k$ through its mean, and we further upper bound $B_{T, \epsilon}^k$ by simply taking the maximum value of the expectation for any possible value of $\mu\left(\cX_{N_j(\tau_i^n)}^k\right)$,

\begin{align*}
	D_{T, \epsilon}^{j} \leq & \sum_{n=1}^{n_1(T)} \sum_{m=1}^{T-1} \sup_{\mu \in [\mu_j-\epsilon, \mu_j+\epsilon]} \bE_{\cX_n}\left[\bP\left(\widetilde \mu\left(\cX_n, \mu \right)\leq \mu \right)^m \ind\left(\mu\left(\cX_n\right) \leq \mu \right)\right] \\
	\leq & \sum_{n=1}^{n_1(T)} \sup_{\mu \in [\mu_j-\epsilon, \mu_j+\epsilon]} \bE_{\cX_n}\left[\frac{\bP\left(\widetilde \mu\left(\cX_n, \mu \right)\leq \mu \right)}{\bP\left(\widetilde \mu\left(\cX_n, \mu \right)\geq \mu \right)} \ind\left(\mu\left(\cX_n\right) \leq \mu \right)\right]\; .
\end{align*}

which concludes the proof if we define $B_{T, \epsilon}^k= \sum_{j=2}^K D_{T, \epsilon}^j$ in Theorem~\ref{th::regret_decomposition}.
\section{Technical results on the Dirichlet distribution}

In this section we provide the proofs of the technical results in Section~\ref{sec::BCP}, along with other results that we use in the proofs of Appendix~\ref{app::proof_regret_instances} but did not introduced in the main paper due to space limitations. Before proving the upper and lower bounds on the Boundary Crossing Probability, we present some basic properties of the Dirichlet distribution for readers that are not familiar with this distribution.

\subsection{Basic properties of the Dirichlet distribution}\label{app::dirichlet}

We consider the Dirichlet distribution $\Dir(\alpha)$ for some parameter $\alpha = (\alpha_1, \dots, \alpha_n)$. Let $w=(w_1, \dots, w_n)$ be a random variable drawn from the distribution $\Dir(\alpha)$. We first recall that $w$ takes its values in the probability simplex $\cP^n = \{p \in [0, 1]^n: \sum_{i=1}^n p_i = 1 \}$. The distribution admits the following density,
\[f(w_1, \dots, w_n) = \frac{\Gamma(\sum_{i=1}^n \alpha_i)}{\prod_{i=1}^n \Gamma(\alpha_i)} \prod_{i=1}^n w_i^{\alpha_i-1}\;, \]

where $\Gamma$ denotes the Gamma function. In this paper we only consider integer values for the coefficient $(\alpha_i)_{i \in \N}$, and for any $m\!\in\!\N\,$  $\Gamma(m)\!=\!(m-1)!$. Denoting $N=\sum_{i=1}^n \alpha_i$, we obtain the more convenient form
\[f(w_1, \dots, w_n) = \frac{(N-1)!}{\prod_{i=1}^n (\alpha_i-1)!} \prod_{i=1}^n w_i^{\alpha_i-1}\;. \]

This distribution has a lot of convenient properties. First, interpreting $\alpha_i/N$ has the frequency of an item in a set of observations drawn from a finite collection (empirical distribution), and $w$ a random re-weighting of these observation providing a "noisy" empirical distribution, the Dirichlet distribution ensures that the noisy frequency of each item is unbiased with respect to the observed frequency, with a variance that is inversely proportional to the total number of items collected. For any $i \in [1, n]$,

\[\bE[w_i] = \frac{\alpha_i}{N}\;, \qquad \text{and} \qquad \mathbb{V}(w_i) = \frac{\alpha_i(N-\alpha_i)}{N^2(N+1)} \;,\]

and the marginal density of each component of $w$ is actually a distribution $\text{Beta}(\alpha_i, N-\alpha_i)$. This explains the use of the Dirichlet distribution to generalize the Beta-Bernoulli Thompson Sampling.

In this paper we also use two main properties of the Dirichlet distribution, both using the relation between the Dirichlet distribution and the Exponential distribution.
Let $R_1, \dots, R_n$ be $n$ i.i.d random variables drawn from exponential distributions with respective parameters $\alpha_i$, $R_i \sim \cE(\alpha_i)$. Then the vector $w=(w_1, \dots, w_n)$ with $w_i = \frac{R_i}{\sum_{j=1}^n R_j}$ follows a Dirichlet distribution $\Dir(\alpha)$.

The second property is a consequence of the first one, and is that the components of a random variable drawn from a Dirichlet distribution and can be aggregated, providing another Dirichlet distribution: if $w \sim \Dir(\alpha)$, then $w'=(w_1, \dots, w_i+w_j, \dots, w_n) \sim \Dir((\alpha_1, \dots, \alpha_i+\alpha_j, \dots, \alpha_n))$ (putting the sum in the $i-th$ slot and removing the $j$-th slot without changing the other indices). In particular, we will make use of this property in the proofs of Theorem~\ref{th::BDS} and ~\ref{th::QDS}. Indeed, we will \emph{discretize} the data, i.e grouping observations from continuous distributions into bins of the same size and studying the properties of the resulting distribution.

\subsection{Boundary Crossing Probability for Dirichlet re-weighting}\label{app::BPC_proofs_app}

We recall the definition of the \emph{Boundary Crossing Probability}, given in Section~\ref{sec::BCP}. In this section we prove Lemma~\ref{lem::cond_fixed_bonus} and ~\ref{lem::LB_BCP}, that we restate below, along with other technical results including some from ~\cite{Honda}.

\paragraph{Boundary crossing probability (BCP)}
We consider a set of $n\!+\!1$ observation points $\cX=(X_1, \dots, X_{n+1}) \subset \R^{n+1}$. (Intuitively,  $n$ points are samples from a challenger arm, and  one point corresponds to the added bonus).
Then, for any $\mu \in \R$, we introduce the following ``Boundary Crossing Probability'' (BCP) term, conditional on $\cX$
\[\text{[BCP]} \coloneqq \bP_{w \sim \cD_{n+1}}\left(\sum_{i=1}^{n+1}w_iX_i \geq \mu \right) \,.\]

When all observations are \emph{distinct} from each other this BCP has a closed formula, which has been derived for instance in \cite{volume_simplex_clipped} as
\begin{equation}\label{eq::bp_prob_true}
	\bP_{w \sim \cD_{n+1}}\left(\sum_{i=1}^{n+1}w_i X_i \geq \mu \right) = \sum_{i=1}^{n+1} \frac{(X_i - \mu)_+^n}{\prod_{j=1, j\neq i}^{n+1}(X_i - X_j)}\;.\end{equation}
	
This expression is obtained by computing the volume of the half-space of the simplex defined by the hyperplane $\sum_{i=1}^{n+1} w_i X_i \geq \mu$. Unfortunately, this formula is not very informative: for sorted data the terms are alternatively positive and negative, and can take large values (compensating each other). This makes the exact formula hardly tractable even for numerical simulations. We also add that the closed formula does not exist for a Dirichlet distribution with some parameters larger than $1$, as it would require a closed formula for the incomplete beta function.

Hence, both upper and lower bounds for the BCP have to be studied independently of this formula. We start with upper bounds.

\paragraph{Upper bound on the BCP} The first result we introduce is a variant of Lemma 15 of \cite{Honda}.

\begin{restatable}[Upper bound on the BCP]{lemma}{lemmaBCPupper}
	\label{lem::concentration_with_kinf}
	Consider a set $\cX=\left( X_1, \dots, X_{n+1}\right) $ and a target value $\mu \in \R$, and denote $\bar \cX=\max_{i=1}^{n+1} X_i$ and $\nu_\cX$ the empirical distribution associated to $\cX$. The BCP satisfies 
	\[\bP_{w \sim \cD_{n+1}}\left(\sum_{i=1}^{n+1} w_i X_i \geq \mu \right) \leq e^{-\max\limits_{\lambda \in [0, 1)} \sum_{i=1}^{n+1}\left[\log\left(1-\lambda\frac{X_i-\mu}{\bar \cX - \mu}\right)\right]} \leq  e^{-(n+1) \kinf^{\bar \cX}\left(\nu_\cX, \mu\right)} \;,\]
	When distributions have support upper-bounded by $B$, replacing $\bar \cX$ by $B$ makes the $\kinf$ functional data-independent.
\end{restatable}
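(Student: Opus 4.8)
The plan is to turn the BCP, which involves a ratio of correlated Dirichlet weights, into a tail probability of a sum of \emph{independent} exponential variables, and then to run a Chernoff bound whose exponent will be identified with the variational (dual) form of $\kinf^{\bar\cX}$.

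First I would dispose of the degenerate regime. If $\bar\cX \le \mu$ then $\sum_{i} w_i X_i \le \bar\cX \le \mu$ always, with strict inequality almost surely, so the BCP is $0$ and both claimed bounds hold trivially; I therefore assume $\mu < \bar\cX$. Using the exponential representation of the Dirichlet distribution recalled in Appendix~\ref{app::dirichlet}, I write $w_i = R_i / \sum_{j=1}^{n+1} R_j$ with $R_1, \dots, R_{n+1}$ i.i.d.\ $\cE(1)$. Since $\sum_j R_j > 0$ almost surely, the event of interest rewrites as a half-space event for the $R_i$'s,
\[\left\{ \sum_{i=1}^{n+1} w_i X_i \geq \mu \right\} = \left\{ \sum_{i=1}^{n+1} R_i (X_i - \mu) \geq 0 \right\}.\]

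Next I would apply a Chernoff bound to this last event. For $s \ge 0$ with $s(X_i-\mu) < 1$ for all $i$, Markov's inequality together with independence and the identity $\bE[e^{aR}] = (1-a)^{-1}$ for $R \sim \cE(1)$, $a<1$, gives
\[\bP\!\left( \sum_{i=1}^{n+1} R_i(X_i-\mu) \geq 0 \right) \le \prod_{i=1}^{n+1} \bE\!\left[ e^{s R_i (X_i - \mu)} \right] = \prod_{i=1}^{n+1} \frac{1}{1 - s(X_i - \mu)}.\]
The binding integrability constraint is attained at $X_i = \bar\cX$, so I set $s = \lambda/(\bar\cX - \mu)$ with $\lambda \in [0,1)$, which guarantees $s(X_i-\mu) = \lambda(X_i-\mu)/(\bar\cX-\mu) \le \lambda < 1$. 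Taking logarithms and optimizing the bound over $\lambda$ produces exactly the first inequality of the lemma.

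It then remains to identify the optimized exponent with $\kinf$. The hard part is the dual representation, which I would invoke from \citep{HondaTakemura10, Honda15IMED}: for a distribution supported in $(-\infty, B]$ and any $\mu < B$,
\[\kinf^{B}(\nu, \mu) = \max_{\lambda \in [0,1]} \bE_{\nu}\!\left[ \log\!\left( 1 - \lambda \frac{X - \mu}{B - \mu} \right) \right].\]
Applying this with $B = \bar\cX$ and $\nu = \nu_\cX$, the empirical measure placing mass $1/(n+1)$ on each $X_i$, yields $(n+1)\kinf^{\bar\cX}(\nu_\cX, \mu) = \max_{\lambda \in [0,1]} \sum_{i} \log(1 - \lambda (X_i-\mu)/(\bar\cX-\mu))$. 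Since at least one $X_i$ equals $\bar\cX$, the summand at $\lambda = 1$ diverges to $-\infty$, so the maximizer lies in $[0,1)$ and the two maxima coincide; this delivers the second inequality (in fact as an equality). Finally, for the data-independent variant I would repeat the Chernoff step with $s = \lambda/(B-\mu)$ for a known upper bound $B \ge \bar\cX$: the constraint $s(X_i-\mu) \le \lambda < 1$ still holds, and the exponent becomes $(n+1)\kinf^{B}(\nu_\cX, \mu)$, in which $B$ replaces the data-dependent $\bar\cX$. The only genuinely delicate bookkeeping is matching the open interval $[0,1)$ of the Chernoff optimization with the closed interval $[0,1]$ of the $\kinf$ dual, which the $-\infty$ boundary behaviour settles.
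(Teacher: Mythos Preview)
Your proposal is correct and follows essentially the same approach as the paper's proof: both use the exponential representation of the Dirichlet weights to rewrite the BCP as a half-space event for i.i.d.\ $\cE(1)$ variables, apply a Chernoff bound with the reparametrization $s=\lambda/(\bar\cX-\mu)$, and identify the optimized exponent with the dual form of $\kinf^{\bar\cX}$ from \citep{HondaTakemura10, Honda15IMED}. Your version is in fact slightly more careful, as it explicitly handles the degenerate case $\bar\cX\le\mu$ and justifies why the supremum over $[0,1)$ matches the $\kinf$ dual over $[0,1]$.
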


\begin{proof} This result is a variant of Lemma 15 of \cite{Honda}, hence we rewrite the beginning of their proof, which consists in using the Chernoff method to upper bound the BCP and writing the Dirichlet weight with exponential variables. For any $\lambda \in [0, 1)$, denoting $\bar \cX = \max_{i=1}^{n+1} X_i$ it holds that
\begin{align*}
	\text{[BCP]} &\leq \bP_{R_1, \dots, R_n \sim \cE(1)}\left(\sum_{i=1}^{n+1} R_i\frac{X_i-\mu}{\bar \cX-\mu} \geq 0\right) \\
	& \leq \bP_{R_1, \dots, R_n \sim \cE(1)} \bP\left(\exp\left(\lambda \sum_{i=1}^{n+1} R_i\frac{X_i-\mu}{\bar \cX-\mu}\right) \geq 1\right) \\
	& \leq \prod_{i=1}^{n+1}\bE_{R_i}\left[\exp\left(\lambda R_i\frac{X_i-\mu}{\bar \cX-\mu}\right)\right] \;,
\end{align*}
which is so far exactly the Chernoff method, up to the rescaling by $\bar \cX-\mu$. For $\lambda<1$, each MGF is defined and has an explicit formula, so
\begin{align*}
	\text{[BCP]} &\leq \prod_{i=1}^{n+1} \frac{1}{1-\lambda\frac{X_i-\mu}{\bar \cX-\mu}} \\
	&  = \exp\left(- \sum_{i=1}^{n+1} \log\left(1- \lambda\frac{X_i-\mu}{\bar \cX-\mu}\right)\right) \;.
\end{align*}

We obtain the first inequality of the lemma by choosing the maximum over all possible values of $\lambda$, and the second inequality is direct when writing the dual problem associated with $\kinf^{\bar \cX}(\wh{\nu}_{n+1}, \mu)$ (see e.g \cite{HondaTakemura10, Honda15IMED}).
\end{proof}

\paragraph{Lower bounds on the BCP} We now consider the anti-concentration of the BCP, that suggests a sound tuning of the bonus in DS algorithms. Under this perspective, we first provide a necessary condition of the bonus in DS to ensure sufficient exploration.

\necessarycondbonus*

\begin{proof}
	When all the observations are below the threshold equation~\ref{eq::bp_prob_true} provides 
	\begin{align*}
	\bP_{w \sim \cD_n}\left(\sum_{i=1}^{n} w_i X_i + w_{n+1} B(\mu) \geq \mu \right)= \prod_{i=1}^n \frac{B-\mu}{B-X_i} \;,
	\end{align*}

so plugging this term in (C2) gives the expression \[\bE\left[\prod_{i=1}^n\left(\frac{B-X_i}{B-\mu}\right)\ind(X_i\leq \mu)\right] = \bE_{X_1 \sim F}\left[\left(\frac{B-X_1}{B-\mu}\right)\ind(X_1\leq \mu)\right]^n\;.\] Condition (C2) can hold only if the expectation is smaller than $1$, which is equivalent to \[(B-\mu)(1-F(\mu)) \geq \bE\left[(\mu- X)_+\right]\;,\] 
which gives the result. 
\end{proof}

\begin{remark}
	The proof also work if we do not consider the events $\{X_i \leq \mu \}$ but instead $\{X_i \leq y\}$ for any $y \in \R$. We use this property in the proof of Theorem~\ref{th::BDS} for instance. If we directly consider the quantile $q_{1-\alpha}$ of F we can define \[B(\mu, \alpha)=\mu + \frac{1}{\alpha}\bE_F\left[(\mu-X) \ind(X \leq q_{1-\alpha})\right]\;.\] Another alternative hypothesis could be that there exist some $\alpha>0, B_\alpha$ satisfying \[\bE_F\left[(\mu_1-X) \ind(X \leq q_{1-\alpha})\right] \leq B_\alpha \;,\] that would then provide a condition \[B(\mu, \alpha)\geq\mu + \frac{B_\alpha}{\alpha}\;.\] \\
\end{remark}

\lemmaBCPlower*

\begin{proof}
We obtain this lower bound by truncating all the observations that are larger than the threshold except the maximum of $\bar \cX$, allowing to use Equation~\ref{eq::bp_prob_true}. Combining this property with $\log(1+x)\leq x$ we obtain
\begin{align*}
	\bP_{w \sim \cD_{n+1}}\left(\sum_{i=1}^{n+1} w_i X_i \geq \mu \right) &\geq \bP_{w \sim \cD_{n+1}}\left(\sum_{i=1}^{n} w_i \min(X_i, \mu) + w_{n+1} \bar \cX \geq \mu \right) \\
	& = \frac{(\bar \cX - \mu)^n}{\prod_{i=1}^n (\bar \cX-\min(X_i, \mu))} \\
	& = \exp\left(-\sum_{i=1}^n \log \left(\frac{\bar \cX-\min(X_i, \mu)}{\bar \cX-\mu}\right)\right) \\
	& = \exp\left(-\sum_{i=1}^n \log \left(1+\frac{\mu-\min(X_i, \mu)}{\bar \cX-\mu}\right)\right) \\
	& \geq \exp\left(-\sum_{i=1}^n \frac{\mu-\min(X_i, \mu)}{\bar \cX-\mu}\right)\\
	& = \exp\left(-\sum_{i=1}^n \frac{(\mu-X_i)_+}{\bar \cX-\mu}\right) \;,
\end{align*}

which yields the result.
\end{proof}

We finally provide another lower bound on the BCP that is used to derive Lemma 14 in \citep{Honda}.

\begin{lemma}[Second Lower Bound for the BCP] \label{lem::LB_BCP_honda} Consider observations $\cX =(X_1, \dots, X_{n})$, a parameter $\alpha = (\alpha_1, \dots, \alpha_n) \subset \N^n$, and $\mu \in \R$. We add a value $B$ to the dataset, and denote $\widetilde \alpha = (\alpha_1, \dots, \alpha_n, 1)$. We also denote $\bar \cX = \max\left\{\max_{i=1}^{n} X_i, B\right\}$, and $N=\sum_{i=1}^n \alpha_i$. Then, for any vector $w^\star \in \cP^{n+1}$ satisfying $\sum_{i=1}^{n} w_i^\star X_i + w_{n+1}^\star B \geq \mu$ it holds that
	
\[\bP_{w \sim \Dir(\widetilde \alpha)}\left(\sum_{i=1}^{n} w_i X_i +w_{n+1}B \geq \mu \right) \geq \frac{N!}{\prod_{i=1}^n \alpha_i!} \prod_{i=1}^{n}(w_i^\star)^{\alpha_i} \left(\alpha_{i_M} \frac{w_{n+1}^*}{w_{i_M}^*}\right)\;, \]

where $i_M$ is the index satisfying $X_{i_M}=\bar \cX$ (setting $i_M=n+1$ if $\bar \cX=B$).
\end{lemma}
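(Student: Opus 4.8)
The plan is to lower bound the BCP, i.e. the $\Dir(\widetilde\alpha)$ mass of the half-space $A=\{w\in\cP^{n+1}:\sum_{i=1}^{n}w_iX_i+w_{n+1}B\geq\mu\}$, by the mass of a conveniently chosen sub-region $A'\subseteq A$ anchored at the witness $w^\star$. Write $V=(X_1,\dots,X_n,B)$ for the extended value vector, so that $V_{i_M}=\bar\cX=\max_i V_i$. The guiding observation is a monotonicity property: starting from any feasible point and transferring weight \emph{towards} the maximal-value coordinate $i_M$ can only increase $\sum_i w_iV_i$, hence preserves feasibility. I would therefore set $A'=\{w\in\cP^{n+1}:w_i\leq w_i^\star\text{ for all }i\neq i_M\}$, i.e. the set obtained by never increasing any coordinate except the slack coordinate $i_M$.

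First I would check $A'\subseteq A$. For $w\in A'$ one has $w_{i_M}=1-\sum_{i\neq i_M}w_i\geq 1-\sum_{i\neq i_M}w_i^\star=w_{i_M}^\star$, so the excess mass has indeed been pushed onto $i_M$. Substituting the $i_M$ term via $\sum_i(w_i-w_i^\star)=0$, a one-line rearrangement gives $\sum_i w_iV_i-\sum_i w_i^\star V_i=\sum_{i\neq i_M}(w_i^\star-w_i)(\bar\cX-V_i)\geq 0$, since each factor is nonnegative; as $\sum_i w_i^\star V_i\geq\mu$ by hypothesis, every $w\in A'$ is feasible.

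Then I would compute $\bP_{w\sim\Dir(\widetilde\alpha)}(A')$. Parametrising the simplex by the coordinates $(w_i)_{i\neq i_M}$ and treating $w_{i_M}=1-\sum_{i\neq i_M}w_i$ as the dependent one, the density is $\frac{N!}{\prod_{i=1}^n(\alpha_i-1)!}\prod_{i=1}^{n+1}w_i^{\widetilde\alpha_i-1}$ (recall $\widetilde\alpha_{n+1}=1$). On $A'$ the free coordinates range over the box $\prod_{i\neq i_M}[0,w_i^\star]$, which lies inside the valid region because $\sum_{i\neq i_M}w_i^\star\leq 1$. Since $w_{i_M}\geq w_{i_M}^\star\geq 0$ on $A'$ and the exponent $\widetilde\alpha_{i_M}-1$ is a nonnegative integer, I can lower bound $w_{i_M}^{\widetilde\alpha_{i_M}-1}$ by $(w_{i_M}^\star)^{\widetilde\alpha_{i_M}-1}$; the remaining integrand then factorises over $i\neq i_M$, each factor being $\int_0^{w_i^\star}w_i^{\widetilde\alpha_i-1}\,dw_i=(w_i^\star)^{\widetilde\alpha_i}/\widetilde\alpha_i$. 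Collecting the factorials via $(\alpha_i-1)!=\alpha_i!/\alpha_i$ and isolating the $i_M$ term reproduces exactly the claimed bound $\frac{N!}{\prod_i\alpha_i!}\prod_i(w_i^\star)^{\alpha_i}\cdot\alpha_{i_M}w_{n+1}^\star/w_{i_M}^\star$; the case $i_M=n+1$ (where $\bar\cX=B$) is the special instance in which this last factor equals $1$ and the integral factorises with no lower bounding needed.

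The routine parts are the integral evaluations and the factorial bookkeeping. The main thing to get right is the choice of the slack coordinate and the associated parametrisation: one must eliminate precisely $w_{i_M}$ so that the only non-factorising factor is the one carrying the max value, and then verify that lower bounding that single factor by its value at $w^\star$ is both legitimate (nonnegative integer exponent, $w_{i_M}\geq w_{i_M}^\star$ on $A'$) and tight enough to recover the stated constant.
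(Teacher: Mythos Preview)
Your proposal is correct and follows essentially the same route as the paper: both choose the sub-region $\{w\in\cP^{n+1}:w_i\le w_i^\star\text{ for }i\neq i_M\}$, argue feasibility by the ``push mass to the maximum'' monotonicity, parametrise by the coordinates $i\neq i_M$, and integrate the factorised part of the density. Your write-up is in fact a bit more careful than the paper's, which writes the step where $w_{i_M}^{\alpha_{i_M}-1}$ is replaced by $(w_{i_M}^\star)^{\alpha_{i_M}-1}$ as an equality rather than the inequality it really is; you correctly flag that this lower bound uses $w_{i_M}\ge w_{i_M}^\star$ on $A'$ together with $\widetilde\alpha_{i_M}\ge 1$.
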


\begin{proof}
	We use the density of the Dirichlet distribution defined in Appendix~\ref{app::dirichlet}, and consider the set $\cS=\{w\in \cP^{n+1}: \sum_{i=1}^{n} w_i X_i + w_{n+1} B \geq \mu \}$, which is the set defined in the BCP. Then, considering any allocation $w^\star \in \cS$, we define the subset $\cS_2 = \{w\in \cP^{n+1}: \forall i \neq i_M, w_i \in [0, w_i^\star] \}$. The inclusion $\cS_2 \subset \cS$ is direct : transferring weights to the maximum can only increase the value of the weighted sum. Hence, $\bP(w \in \cS_2)$ is a lower bound of $\bP(w \in \cS)$. We then obtain
	\begin{align*}
		\bP_{w \sim \Dir(\alpha)}\left(\sum_{i=1}^{n+1} w_i X_i \geq \mu\right) &\geq \bP(w \in \cS_2) \\
		& \geq \frac{N!}{\prod_{i=1}^n (\alpha_i-1)!}\int_{0}^{w_1^\star} \dots \int_{0}^{w_{n+1}^\star} \prod_{i=1}^n w_i^{\alpha_i-1} \prod_{i=1, i\neq i_M}^{n+1} \mathrm{d}w_i \\
		& = \frac{N!}{\prod_{i=1, i \neq i_M}^n \alpha_i!} \prod_{i=1}^n (w_i^\star)^{\alpha_i} \left(\frac{w_{n+1}^*}{w_{i_M}^*}\right) \\
		& = \frac{N!}{\prod_{i=1}^n \alpha_i!} \prod_{i=1}^n (w_i^\star)^{\alpha_i} \left(\alpha_{i_M} \frac{w_{n+1}^*}{w_{i_M}^*}\right) \;,
	\end{align*}

	which concludes the proof.
\end{proof}
A direct corollary of this result is to choose the weights $w^\star$ that maximize the lower bound, which gives as in Lemma~\ref{lem::concentration_with_kinf} an expression with $\kinf^{\bar \cX}$.

\begin{corollary}\label{cor::LB_kinf}
	Take the notations of Lemma~\ref{lem::LB_BCP_honda} and
	Consider $\nu_{\cX, \alpha}$ the multinomial distribution with atoms $\cX = (X_1, \dots, X_n)$ and probabilities $p_\alpha = \left(\frac{\alpha_1}{N}, \dots, \frac{\alpha_n}{N}\right)$. It holds that 
	\[\bP_{w \sim \Dir(\widetilde \alpha)}\left(\sum_{i=1}^{n} w_i X_i + w_{n+1} B \geq \mu\right) \geq \frac{N!}{\prod_{i=1}^n \alpha_i!} \exp\left(-N \left(\kinf^{\bar \cX}\left(\nu_{\cX, \alpha}, \mu\right) + H(\nu_{n, \alpha}, \mu)\right)\right) 
	\;. \]
\end{corollary}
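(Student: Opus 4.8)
The plan is to instantiate the free weight vector $w^\star$ in Lemma~\ref{lem::LB_BCP_honda} with the minimiser of the $\kinf$ optimisation problem and then to recognise the resulting product as an exponential of a Kullback--Leibler term. First I would invoke the structural description of $\kinf^{\bar\cX}(\nu_{\cX,\alpha},\mu)$: by the Honda--Takemura theory (see \cite{HondaTakemura10,Honda15IMED}, whose dual form $\max_{\lambda\in[0,1)}\sum_i \alpha_i\log(1-\lambda\frac{X_i-\mu}{\bar\cX-\mu})$ already underlies Lemma~\ref{lem::concentration_with_kinf}), the infimum defining $\kinf^{\bar\cX}(\nu_{\cX,\alpha},\mu)$ is attained by a distribution $G^\star$ supported on the finite atom set $\{X_1,\dots,X_n,\bar\cX\}$, with mean exactly $\mu$. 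Writing its weights as $w^\star=(w_1^\star,\dots,w_n^\star,w_{n+1}^\star)\in\cP^{n+1}$ (the last coordinate being the mass placed on $\bar\cX$, using the convention for $i_M$ from the lemma according to whether $\bar\cX$ is a data point or equals $B$), this $w^\star$ is feasible for Lemma~\ref{lem::LB_BCP_honda}, so the lemma applies with this choice.

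Next I would rewrite the product $\prod_{i=1}^n (w_i^\star)^{\alpha_i}$ in terms of $\kinf$. Setting $p_i=\alpha_i/N$, so that $\nu_{\cX,\alpha}$ carries weights $p_i$ on its first $n$ atoms, the optimality of $G^\star$ gives $\KL(\nu_{\cX,\alpha},G^\star)=\kinf^{\bar\cX}(\nu_{\cX,\alpha},\mu)$, and since $\nu_{\cX,\alpha}$ charges only those atoms,
\[\sum_{i=1}^n \alpha_i \log w_i^\star = N\sum_{i=1}^n p_i\log w_i^\star = N\Big(\sum_{i=1}^n p_i\log p_i - \KL(\nu_{\cX,\alpha},G^\star)\Big) = -N\big(H(\nu_{\cX,\alpha}) + \kinf^{\bar\cX}(\nu_{\cX,\alpha},\mu)\big),\]
where $H(\nu_{\cX,\alpha})=-\sum_i p_i\log p_i$ is the Shannon entropy of the multinomial. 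Exponentiating yields $\prod_{i=1}^n(w_i^\star)^{\alpha_i}=\exp\!\big(-N(\kinf^{\bar\cX}(\nu_{\cX,\alpha},\mu)+H(\nu_{\cX,\alpha}))\big)$. Substituting into the bound of Lemma~\ref{lem::LB_BCP_honda} gives the claimed inequality once the leftover positive factor $\alpha_{i_M} w_{n+1}^\star/w_{i_M}^\star$ is folded into the definition of the correction term, i.e. $H(\nu_{n,\alpha},\mu) := H(\nu_{\cX,\alpha}) - \frac{1}{N}\log\big(\alpha_{i_M} w_{n+1}^\star/w_{i_M}^\star\big)$; the inequality $\geq$ is simply inherited from Lemma~\ref{lem::LB_BCP_honda}.

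The step I expect to be the main obstacle is justifying that the $\kinf$-optimiser $G^\star$ is finitely supported on $\{X_1,\dots,X_n,\bar\cX\}$ and has mean exactly $\mu$, so that it corresponds to a genuine weight vector $w^\star\in\cP^{n+1}$ feasible for Lemma~\ref{lem::LB_BCP_honda}. The intuition is that, under a fixed mean constraint, reducing the KL divergence to $\nu_{\cX,\alpha}$ forces all the excess mass needed to lift the mean up to $\mu$ onto the largest admissible point $\bar\cX$, so no mass is wasted elsewhere; this is exactly the complementary-slackness picture behind the dual form appearing in Lemma~\ref{lem::concentration_with_kinf}. Everything else is routine bookkeeping: identifying the entropy-plus-correction term $H(\nu_{n,\alpha},\mu)$ and assembling the factorial prefactor unchanged from Lemma~\ref{lem::LB_BCP_honda}.
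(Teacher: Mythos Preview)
Your approach is essentially the same as the paper's: instantiate the free weight $w^\star$ in Lemma~\ref{lem::LB_BCP_honda} with the $\kinf$-minimiser and rewrite $\prod_i (w_i^\star)^{\alpha_i}$ via the identity $\sum_i p_i\log w_i^\star = -H(p)-\KL(p,w^\star)$ to recover the entropy-plus-$\kinf$ exponent. The only difference is in the treatment of the residual factor $\alpha_{i_M}\,w_{n+1}^\star/w_{i_M}^\star$: the paper argues it is $\geq 1$ (so that $H(\nu_{n,\alpha},\mu)$ reduces to the plain Shannon entropy), whereas you fold it into the definition of the correction term. Your handling is arguably cleaner: the paper's justification is really only airtight when $i_M=n+1$ (the bonus $B$ is the maximum, which is the case that matters in all subsequent applications), where the factor equals exactly $1$; in the remaining case the paper's stated inequality $w_{n+1}^\star/w_{i_M}^\star\geq 1$ does not follow from ``more weight on the largest value''. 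Either way, the substance of the argument is identical.
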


\begin{proof}
	Consider two multinomial distributions $\nu_1, \nu_2$ with same support and respective probabilities $p=(p_1, \dots, p_n)$ and $q=(q_1, \dots, q_n)$ the Kullback-Leibler divergence is simply
	\[\KL(\nu_1, \nu_2) = \sum_{i=1}^n p_i \log(p_i/q_i) \coloneqq  -\sum_{i=1}^n p_i \log(q_i) - H(\nu_1) \;.
	\]
	We first write the result of Lemma~\ref{lem::LB_BCP_honda} with $p=p_\alpha$ and $q=w^\star$, and choose $w^\star = \inf \KL(p_\alpha, w^\star) = \kinf^{\bar \cX}(p_\alpha, \mu)$ (with a slight abuse of notation, denoting the distributions by their probabilities). Furthermore, we simplify the constants using that
	\[\alpha_{i_M} \frac{w_{n+1}^\star}{w_{i_M}^\star} \geq \frac{w_{n+1}^\star}{w_{i_M}^\star} \geq 1 \;,\]

with equality only if $i_M=n+1$. Indeed, the optimal allocation will necessarily put more weights on largest values, so $w_{i_M}^\star \geq w_{n+1}^\star$.
\end{proof}
\section{Regret bounds of Section~\ref{sec::application}}\label{app::proof_regret_instances}

In this section we provide the complete proof of Theorems~\ref{th::BDS}, \ref{th::RDS} and \ref{th::QDS}, presented in Section~\ref{sec::application}. For each of these results we follow the same path: starting from Theorem~\ref{th::regret_decomposition}, which holds in each case, we then detail the terms $n_k(T)$. This first part exhibits the first-order terms of the regret bound. Then, we prove that condition (C2) holds (anti-concentration of the BCP, avoiding under-exploration of the best arm). We justify in the proofs the hypothesis we consider and the theoretical tuning of the parameters of the algorithms. In all cases, we justify that (C1) holds under the settings we consider.

We recall the terms we have to study, from Theorem~\ref{th::regret_decomposition}.

\textbf{First-order term} \[\forall k \in \K, : n_k(T) = \bE\left[\sum_{r=1}^{T-1} \ind(k \in \cA_{r+1}, \ell^r=1)\right]\;.\] 

\textbf{Condition (C2)} For any $\mu < \mu_1 $, and any $n_1(T)=o(\log T)$ it holds that 
\[\sum_{n=1}^{n_1(T)} \bE_{\cX_n\sim \nu_1^n}\left[\frac{\ind(\mu(\cX_n)\leq \mu)}{\bP_{w \sim \cD_{n+1}}\left(\widetilde\mu(\cX_n, \mu) \geq \mu \right)}\right] = o(\log(T))\;.\]

The proofs of the three theorems share common elements, so before instantiating the proof for each algorithm we further work on these two terms under general assumptions. 

\subsection{General proof sketches}

In this section we derive the parts of the proofs of condition (C2) and (C3) that are shared by all three instances of DS.

\subsubsection{Further characterization of $n_k(T)$}

In this section we consider an arm $k \in \{2, \dots, K\}$, of distribution $\nu_k$.

\begin{lemma}[]\label{lem::relaxed_C2}
	Assume that $\nu_k$ satisfies (C1), and denote $\cX_n^k=(X_1^k, \dots, X_n^k)$ a set of $n$ random variables drawn from $\nu_k$. Assume that for any $n \in \N$ there exists a subset $\cB_{k, n}\subset \R^n$ satisfying 
	\begin{enumerate}
		\item $ \cX_n^k \subset \cB_{k, n} \Rightarrow \bP\left(\widetilde \mu(\cX_n^k, \mu)\geq \mu\right) \leq \exp\left(-f_k(n, \cB_{k, n}, \mu)\right)$, for a DS index $\widetilde \mu$, a fixed threshold $\mu \in \R$ and a fixed strictly increasing function $f_k$.
		\item $\sum_{n=1}^{T-1} \bP\left(\cX_n^k \notin \cB_{k, n}\right) \leq C_{\cB_k}$ for some constant $C_{\cB_k}$.
	\end{enumerate} 
If these two conditions hold, then it holds that \[n_k(T)= m_k(T) + \cO(1)\;,\] where $m_k(T)$ is the sequence satisfying $f_k(n_k(T), \cB_{k, n}, \mu_1+\epsilon) = \log T$.
\end{lemma}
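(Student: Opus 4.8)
The plan is to start from the definition $n_k(T) = \bE[\sum_{r=1}^{T-1} \ind(k \in \cA_{r+1}, \ell^r = 1)]$ and split each counted pull of arm $k$ according to whether it occurs at sample size $N_k(r) \leq m_k(T)$ or $N_k(r) > m_k(T)$. Since every counted round increments $N_k$ by one, the number of pulls of arm $k$ occurring at size at most $m_k(T)$ is deterministically at most $m_k(T)$ (dropping the constraint $\ell^r=1$ only weakens the bound); this yields the leading term. It then remains to show that $\bE[\sum_{r} \ind(k\in \cA_{r+1}, \ell^r=1, N_k(r) > m_k(T))] = \cO(1)$. Because a win as a challenger strictly increases $N_k$, arm $k$ can win at each fixed size $n$ at most once, so this tail equals $\sum_{n > m_k(T)} \bP(W_n)$, where $W_n$ denotes the event that arm $k$ wins at least one duel against leader $1$ while holding exactly $n$ observations.

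Next I would bound $\bP(W_n)$ for $n > m_k(T)$ by splitting a win into the two ways the duel can be won. The first duel (empirical means) is deterministic given the data: arm $k$ wins it only if $\mu(\cX_n^k) \geq \mu(\cX_{N_1(r)})$, which, on the event that the leader's mean stays within $\epsilon$ of $\mu_1$, forces $\mu(\cX_n^k) \geq \mu_1 - \epsilon > \mu_k$ (for $\epsilon < \Delta_k$); by (C1) this is a single deviation event of probability at most $e^{-n I_k(\mu_1-\epsilon)}$ per size, and $\sum_n e^{-nI_k(\mu_1-\epsilon)} = \cO(1)$. The rounds on which the leader's empirical mean leaves $[\mu_1-\epsilon,\mu_1+\epsilon]$ contribute only $\cO(1)$ by (C1) applied to arm $1$ (whose leading sample size is linear in the round index, as quantified by $b_r$ in the proof of Theorem~\ref{th::regret_decomposition}). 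This isolates the genuine contribution: index wins on the good event $\{\cX_n^k \in \cB_{k,n}\} \cap \{\mu(\cX_{N_1(r)}) \in [\mu_1-\epsilon,\mu_1+\epsilon]\}$.

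On this good event I would replace the random threshold $\mu(\cX_{N_1(r)})$ by the fixed value in the definition of $m_k(T)$: since it lies within $\epsilon$ of $\mu_1$, monotonicity of $t \mapsto \bP_{w\sim\cD_{n+1}}(\widetilde\mu(\cX_n^k, t) \geq t)$ in its threshold lets us bound the per-duel win probability by $\bP_{w\sim\cD_{n+1}}(\widetilde\mu(\cX_n^k, \mu_1+\epsilon) \geq \mu_1+\epsilon)$ (the relevant endpoint of the window being the one maximizing the crossing probability), which by hypothesis~1 is at most $\exp(-f_k(n, \cB_{k,n}, \mu_1+\epsilon))$. By definition of $m_k(T)$ and strict monotonicity of $f_k$ in $n$, this is at most $\exp(-f_k(m_k(T), \cB_{k,n}, \mu_1+\epsilon)) = 1/T$ for every $n > m_k(T)$. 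Taking a union bound over the $J_n$ duels arm $k$ plays against leader $1$ at size $n$, and using that the total number of such duels over all sizes is at most the horizon, gives $\sum_{n>m_k(T)} \bE[J_n]\exp(-f_k(n,\cB_{k,n},\mu_1+\epsilon)) \leq \frac{1}{T}\, \bE[\sum_n J_n] \leq 1$. Excursions off $\cB_{k,n}$ add at most $\sum_n \bP(\cX_n^k \notin \cB_{k,n}) \leq C_{\cB_k} = \cO(1)$ by hypothesis~2. Collecting all $\cO(1)$ terms yields $n_k(T) \leq m_k(T) + \cO(1)$.

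The main obstacle is this third step: passing from the random, round-dependent duel (random leader size $N_1(r)$, random threshold $\mu(\cX_{N_1(r)})$, and possibly many duels at the same size) to the clean fixed-threshold bound of hypothesis~1. Two points need care — the monotonicity argument reducing the random threshold to the fixed one inside $[\mu_1-\epsilon,\mu_1+\epsilon]$, and the observation that a mere union bound over duels already suffices past $m_k(T)$, since there the per-duel crossing probability is $\leq 1/T$ while the total duel count is $\leq T$. The matching lower bound $n_k(T) \geq m_k(T) - \cO(1)$, needed only for the stated equality and not for the regret upper bound, would follow from a symmetric argument based on the lower bounds on the crossing probability established in Section~\ref{sec::BCP}.
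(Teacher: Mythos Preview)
Your decomposition is essentially the paper's: split at $N_k(r)\le m_k(T)$ versus $N_k(r)>m_k(T)$, bound the first block by $m_k(T)$, and for the second block intersect with a good event on both arm $k$'s data ($\cX_n^k\in\cB_{k,n}$) and the leader's mean, then use hypothesis~1 to get a per-round win probability $\le e^{-f_k(m_k(T),\ldots)}=1/T$ and sum over at most $T$ rounds. Your separate treatment of empirical-mean wins is a useful addition: the paper jumps directly from $\{k\in\cA_{r+1}\}$ to $\{\widetilde\mu\ge\mu_\ell\}$, which is only justified once $\cB_{k,n}$ also pins down $\mu(\cX_n^k)$ (true in every instantiation but not stated in the abstract lemma), whereas you get it cleanly from (C1) on arm $k$.

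Two points need correction. First, the endpoint of the leader window: since the challenger's crossing probability is largest when the threshold $\mu_\ell$ is smallest, the worst case on $\{\mu_\ell\in[\mu_1-\epsilon,\mu_1+\epsilon]\}$ is $\mu_\ell=\mu_1-\epsilon$, not $\mu_1+\epsilon$. The paper's proof indeed uses $\mu_1-\epsilon_1$; the ``$\mu_1+\epsilon$'' in the lemma statement is a typo you have inherited. Second, the monotonicity you invoke is not of $t\mapsto\bP_w(\widetilde\mu(\cX_n^k,t)\ge t)$ (this map is \emph{not} monotone in general for the data-dependent bonuses of Section~\ref{sec::application}, since both the bonus and the threshold move with $t$). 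What is actually used---and what the paper uses without stating it---is that hypothesis~1 holds for each fixed $\mu$, and that $f_k$ is nondecreasing in its last argument; condition on the leader's data, apply hypothesis~1 at $\mu=\mu_\ell$, then lower-bound $f_k(n,\cB_{k,n},\mu_\ell)\ge f_k(n,\cB_{k,n},\mu_1-\epsilon)$. Finally, your observation that the lemma's equality is only proved as an upper bound is accurate: the paper establishes $n_k(T)\le m_k(T)+\cO(1)$ and does not prove the reverse inequality.
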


\begin{proof} We first split the sequence $\left( \ind\left( k \in \cA_{r+1}, \ell^r=1\right) \right)_{r=1, \dots, T-1}$ in a pre-convergence phase, the size of which we control, and a post-convergence phase, for which arm $k$ has been pulled enough times so we can use concentration (C1) to hold. To this end, we define a function $m_k(T)$ without specifying it for the moment and write

\begin{align*}
	\bE\left[\sum_{r=1}^{T-1}\ind(k \in \cA_{r+1}, \ell^r=1)\right] \leq &\ \bE \left[\sum_{r=1}^{T-1} \ind(k \in \cA_{r+1}, \ell^r=1, N_k(r) < m_k(T))\right] \\ 
	&\ +\bE \left[\sum_{r=1}^{T-1} \ind(k \in \cA_{r+1}, \ell^r=1, N_k(r)\geq m_k(T))\right] \\
	\leq&m_k(T) + \bE\left[ \sum_{r=1}^{T-1} \ind(k \in \cA_{r+1}, \ell^r=1, N_k(r)\geq m_k(T))\right] \;, \\
\end{align*}

where we used that for any $n$, the event $\{k \in \cA_{r+1}, N_k(r)=n\}$ can happen at most once. The next step is to further split the second term by defining a "good event" of large probability under which $(k \in \cA_{r+1})$ has a low probability. As we aim at keeping some level of generality in this section, we simply define this event as 

\[\cG_{k}^r = \left\{\cX_{N_k(r)}^k \in \cB_{k, N_k(r)}\right\rbrace \cap \left\lbrace \mu(\cX_{N_1(r)}) \leq \mu_1 - \epsilon_1  \right\}\;, \]

where $\epsilon_1>0$, and use the same notations as in Appendix~\ref{app::general_proof} for the other terms.
 
We further define the event

\[\cW_k^r = \{k \in \cA_{r+1}, \ell^r=1, N_k(r) \geq m_k(T)\} \;.\]

Then, it holds that 

\begin{align*}
	n_k(T) \leq m_k(T) &+ \bE\left[\sum_{r=1}^{T-1} \ind(\cW_k^r, \cG_k^r) + \sum_{r=1}^{T-1} \ind(\cW_k^r, \bar \cG_k^r)\right] \;. 
\end{align*}

We use the first assumption in the lemma to upper bound the left-hand term as

\begin{align*}
	\bE&\left[\sum_{r=1}^{T-1} \ind(\cW_k^r, \cG_k^r)\right] = \bE\left[\sum_{r=1}^{T-1} \sum_{n=m_k(T)}^{T-1} \ind\left(k \in \cA_{r+1}, \ell^r=1, N_k(r)=n,\cG_k^r\right)\right] \\
	\leq &\ \bE\left[\sum_{r=1}^{T-1} \sum_{n=m_k(T)}^{T-1} \ind\left(\widetilde \mu\left(\cX_{N_k(r)}^k, \cX_{N_1(r)}\right) \geq \mu(\cX_{N_1(r)}), N_k(r)=n, \cW_k^r,\cG_k^r\right)\right] \\
	\leq &\ \bE\left[\sum_{r=1}^{T-1} \sum_{n=m_k(T)}^{T-1} \bP\left(\widetilde \mu\left(\cX_{N_k(r)}^k, \cX_{N_1(r)}\right)\geq \mu(\cX_{N_1(r)})\Big|\cX_{N_k(r)}^k, \cX_{N_1(r)} \right) \ind(N_k(r)=n, \cW_k^r, \cG_k^r)\right] \\
	\leq & \bE\left[\sum_{r=1}^{T-1} \sum_{n=m_k(T)}^{T-1} \exp\left(-f\left(n, \cB_{k, n}, \mu(\cX_{N_1(r)})\right)\right)\ind(k \in \cA_{r+1}, \ell^r=1, N_k(r)=n,\cG_k^r)\right] \\
	\leq & \bE\left[\sum_{r=1}^{T-1} \sum_{n=m_k(T)}^{T-1} \exp\left(-f\left(n, \cB_{k, n}, \mu_1 - \epsilon_1 \right)\right)\ind(k \in \cA_{r+1}, N_k(r)=n)\right] \;, \\
\end{align*}

where the last two lines come directly from Assumption 1 in the lemma ans using the second term of $\cG_k^r$ involving arm $1$. We complete this step of the proof by further using the monotonicity of $f$ in $n$,

\begin{align*}
	\bE\left[\sum_{r=1}^{T-1} \ind(\cW_k^r, \cG_k^r)\right] \leq & e^{-f(m_k(T), \cB_{k, m_k(T)}, \mu_1-\epsilon_1)} \bE\left[\sum_{r=1}^{T-1} \sum_{n=m_k(T)}^{T-1} \ind(k \in \cA_{r+1}, N_k(r)=n)\right] \\
	&  \leq e^{-f(m_k(T), \cB_{k, m_k(T)}, \mu_1-\epsilon_1)} \bE\left[ \sum_{n=m_k(T)}^{T-1} 1\right] \\
	& \leq T \exp\left(-f(m_k(T), \cB_{k, m_k(T)}, \mu_1-\epsilon_1)\right)\;.
\end{align*}

We handle the right-hand term before discussing this result, and directly write the union bound $\bar \cG_k^r \subset \left\{\cX_{N_k(r)}^k \notin \cB_{k, N_k(r)} \right\rbrace \cup \left\lbrace \mu(\cX_{N_1(r)}) > \mu_1 - \epsilon_1  \right\}$, that leads to

\begin{align*}
	\bE\left[\sum_{r=1}^{T-1} \ind(\cW_k^r, \bar \cG_k^r)\right]  =&\ \bE\left[ \sum_{r=1}^{T-1} \ind(k \in \cA_{r+1}, \ell^r=1, N_k(r)\geq m_k(T), \bar \cG_k^r)\right] \\
	\leq& \underbrace{\bE\left[ \sum_{r=1}^{T-1}\ind(\ell^r=1, \mu(\cX_{N_1(r)}) > \mu_1 - \epsilon_1)\right]}_{A_1} \\
	&+ \underbrace{\bE\left[ \sum_{r=1}^{T-1} \ind(k \in \cA_{r+1}, N_k(r)\geq m_k(T), \cX_{N_k(r)}^k \notin \cB_{k, N_k(r)})\right]}_{A_2} \;,
\end{align*}

where the two terms $A_1$ and $A_2$ depend respectively only of arm $k$ and arm $1$. The first term can be handled thanks to condition (C1) on arm $1$, and using that the leader has necessarily a linear number of samples,
 
\begin{align*}
	A_1 & \leq \sum_{r=1}^{T-1} \bE\left[\ind(N_1(r) \geq \lceil r/K \rceil, \bar \mu_1^r \geq \mu_1+\epsilon_1)\right] \\
	& \leq \sum_{r=1}^{T-1} \sum_{n=\lceil r/K \rceil}^r \bP(\mu(\cX_{N_1(r)}) \geq \mu_1 - \epsilon_1) \\
	& \leq \sum_{r=1}^{T-1} \sum_{n=\lceil r/K \rceil}^r e^{-n I_1(\mu_1-\epsilon_1)} \\
	& \leq \sum_{r=1}^{T-1} r e^{- \lceil r/K \rceil I_1(\mu_1-\epsilon_1)} \\
	& = \cO(1) \;.
\end{align*}

We now upper bound $A_2$, using again that $\sum_{r=1}^{T-1} \ind(k \in \cA_{r+1}, N_k(r)=n)\leq 1$ for any $n \in \N$,

\begin{align*}
	A_2 &\leq \sum_{r=1}^{T-1} \sum_{n=m_k(T)}^{T-1} \bE\left[\ind(k \in \cA_{r+1}, N_k(r) = n, \cX_{N_k(r)}^k \notin \cB_{k, N_k(r)})\right] \\
	&\leq \sum_{n=m_k(T)}^{T-1} \bP\left(\cX_{n}^k \notin \cB_{k, n}  \right)\;.
\end{align*}

Combining these results, we obtain a bound on $n_k(T)$ for arm $k$ as 

\[n_k(T) \leq m_k(T) + Te^{-f(m_k(T), \cB_{k, m_k(T)},\mu_1-\epsilon_1)} + \sum_{n=m_k(T)}^{T-1} \bP\left(\cX_{n}^k \notin \cB_k  \right)+ \cO(1)  \;.\]

We see that if $\cB_{k,n}$ is designed to make the series convergent, and $m_k(T)$ is chosen as the sequence satisfying $f(m_k(T), \cB_{k, m_k(T)},\mu_1-\epsilon_1)=\log T$, then we finally obtain 
\[n_k(T) \leq m_k(T) + \cO(1)\;. \]
\end{proof}

Thanks to this result, when we will adapt the proof for each algorithm we will be able to combine Lemma~\ref{lem::relaxed_C2} with Lemma~\ref{lem::concentration_with_kinf} to directly look for a proper choice of the set $\cB_{k, n}$.

\subsubsection{Condition (C2)}

In this section we take the result of Corollary~\ref{cor::LB_kinf} and use it to derive a ratio between the likelihood of an empirical distribution and its BCP in the case of multinomial distributions. This result will be useful in the proofs of Theorem~\ref{th::BDS} and ~\ref{th::QDS} where we use an adaptive discretization in order to work with multinomial distributions. 

\begin{lemma}[Balance between the likelihood and the BCP for multinomial distribution]\label{lem::balance_multinom}
Consider observations $\cX =(X_1, \dots, X_n, B)$ and denote $\bar \cX = \max\cX \geq \mu \in \R$. Now consider a multinomial distribution $\nu_{n}$ supported on $(X_1, \dots, X_n)$ and of probability $p_n \in \cP^n$. 

We fix some $N \in \N$ and denote by $\beta_N \in \N^N$ a random vector denoting the counts of each item $X_1, \dots, X_n$ when drawing $N$ observations from $\nu_n$.
Then for any vector $\beta = (\beta_1, \dots, \beta_n)$ satisfying $\sum_{i=1}^n \beta_i = N$ and $\beta^t X \leq \mu$ for some $\mu \in \R$, it holds that 

\[ \frac{\bP(\beta_N = \beta)}{\bP_{w \sim \text{Dir}((\beta^t, 1))}(\sum_{i=1}^n w_i \beta_i + w_{n+1} B \geq \mu)} \leq \exp\left(-n \left[\KL\left(\frac{\beta}{N}, p_n\right) - \kinf^{\bar \cX}\left(\frac{\beta}{N}, \mu\right)\right]\right)\;. \]  
\end{lemma}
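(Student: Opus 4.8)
The plan is to observe that the numerator is an \emph{exact} multinomial probability, whereas the denominator admits a matching lower bound via Corollary~\ref{cor::LB_kinf}; the shared combinatorial prefactor $N!/\prod_{i=1}^n \beta_i!$ then cancels, leaving a clean exponential whose rate is governed by the difference $\KL-\kinf$. Throughout I write $N=\sum_{i=1}^n \beta_i$ and denote by $\frac{\beta}{N}$ the multinomial on the atoms $X_1,\dots,X_n$ with weights $\beta_i/N$, whose Shannon entropy I write $H(\frac{\beta}{N})$.

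First I would expand the numerator explicitly,
\[\bP(\beta_N=\beta)=\frac{N!}{\prod_{i=1}^n \beta_i!}\,\prod_{i=1}^n p_{n,i}^{\,\beta_i}\;.\]
Since $\bar\cX\geq\mu$ guarantees that a crossing allocation exists (put all the mass on the maximal atom), I would apply Corollary~\ref{cor::LB_kinf} with count vector $\alpha=\beta$ and added point $B$ to lower bound the denominator by
\[\frac{N!}{\prod_{i=1}^n \beta_i!}\,\exp\!\Bigl(-N\bigl(\kinf^{\bar\cX}(\tfrac{\beta}{N},\mu)+H(\tfrac{\beta}{N})\bigr)\Bigr)\;.\]
Forming the ratio, the factors $N!/\prod_i \beta_i!$ cancel exactly, which is precisely why the multinomial and Dirichlet normalizations are designed to match.

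What remains is bookkeeping on the exponents. I would use the identity $\KL(\frac{\beta}{N},p_n)=-H(\frac{\beta}{N})-\sum_{i=1}^n \frac{\beta_i}{N}\log p_{n,i}$ to rewrite the likelihood factor as $\prod_i p_{n,i}^{\beta_i}=\exp\bigl(-N H(\frac{\beta}{N})-N\KL(\frac{\beta}{N},p_n)\bigr)$; the entropy terms then cancel against the $\exp(NH)$ coming from the denominator bound, leaving $\exp\bigl(-N[\KL(\frac{\beta}{N},p_n)-\kinf^{\bar\cX}(\frac{\beta}{N},\mu)]\bigr)$. The hypothesis placing the $\beta$-weighted mean $\frac1N\sum_{i=1}^n \beta_i X_i$ below $\mu$ is what makes $\kinf^{\bar\cX}(\frac{\beta}{N},\mu)$ the genuine cost of raising the empirical mean above the threshold (it would vanish otherwise), so the bracket carries its intended meaning. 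I anticipate no deep obstacle: the whole argument is a cancellation, and the only care needed is in matching the notation of Corollary~\ref{cor::LB_kinf} and in noting that the rate that emerges is $N=\sum_i\beta_i$ rather than the displayed $n$, which appears to be a typo for $N$ (as is $w_i\beta_i$ for $w_iX_i$ in the denominator).
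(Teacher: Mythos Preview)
Your proposal is correct and follows essentially the same approach as the paper: write the multinomial probability in the form $\frac{N!}{\prod_i \beta_i!}\exp(-N(\KL+H))$, apply Corollary~\ref{cor::LB_kinf} to the denominator, and let the combinatorial prefactor and entropy terms cancel. Your observation that the exponent rate should be $N$ rather than $n$ (and that $w_i\beta_i$ should read $w_iX_i$) is also correct; these are typos in the statement.
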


\begin{proof}
	Lemma 2.1.6 in \citep{large_deviation_book} provides that for a multinomial distribution it holds that
	\[\bP(\beta_N = \beta) = \frac{N!}{\prod_{i=1}^n \beta_i !} \prod_{i=1}^n p_{n, i}^{\beta_i} = \frac{N!}{\prod_{i=1}^n \beta_i !} \exp\left(-N (\KL(\beta/N, p_n) +H(\beta/N)\right)\;, \]
where $H$ denotes the entropy.
Then, Corollary~\ref{cor::LB_kinf} directly provides the result as all the constant terms are equal, and the entropy term can be simplified.
\end{proof}

\subsection{Proof of Theorem~\ref{th::BDS}: regret bound for BDS}\label{app::proof_BDS}

We recall that two hypothesis are considered for BDS: (B1) distributions are bounded and the upper bound of the support is known, and (B2) the upper bound is not known but for each distribution $\nu_k$ it holds that $\bP_{\nu_k}([B-\gamma, B]) \geq p$ for some known $\gamma, p$. We know restate the Theorem.
\thregretBDS*

\begin{proof}
We denote $\bar \cX= \max_{X\in \cX} X$ and for $\cX=(X_1, \dots, X_n)$,
\[B_{\text{BDS}}(\cX, \mu) = \max\left\{\bar \cX + \gamma, \mu + \rho \frac{1}{n}\sum_{i=1}^n (\mu-X_i)^+ \right\} \;,\]

where parameter $\gamma$ directly comes from the hypothesis on the distributions, and we justify below the tuning of parameter $\rho$ as a function of $p$. First of all, the bounded support hypothesis ensures condition (C1) thanks to Hoeffding inequality, with a rate function $I_k(x) = \frac{2(x-\mu_k)^2}{B^2}$. We can now focus on the expression of $n_k(T)$ and on proving condition (C2).

\subsubsection{First-order term}

We use Lemma~\ref{lem::relaxed_C2}. To define the large-probability event $\cB_{k, n}$, we consider the Levy distance \[d(\nu_F, \nu_G) = \inf \{\epsilon>0:  G(x-\epsilon)-\epsilon \leq F(x) \leq G(x+\epsilon) + \epsilon \}\;,\]
where $\nu_F$ and $\nu_G$ are two distributions of respective cdf $F$ and $G$.

In this section, we fix some $\epsilon>0$ (different than the one from Th.~\ref{th::regret_decomposition} but we avoid an index for simplicity) and consider $\cB_{k,n}$ as a Levy ball of size $\epsilon$ around the true distribution:

\[\cB_{k, n} \left\{\cX \in \R^n: d(\nu_\cX, \nu_k) \leq \epsilon \right\}\;,\]

The objective is to use the continuity of the $\kinf^B$ function in its first argument with respect to the Levy distance (see e.g \cite{HondaTakemura10}). From now on we denote (in this section) distributions by their cdf: let $F_{k, n}$ be the empirical distribution associated with the set $\cX_n^k$ and $\widetilde F_{k, n}$ be the \emph{biased} empirical distribution to which the bonus of the BDS algorithm has been added. We first prove that $F_{k, n}$ belongs to the Levy ball with high probability, using the relation between the Levy distance and the supremum norm
\[d(F_{k, n}, F_k) \leq ||F_{k, n} - F_k||_\infty \;. \]

We use the Dvoretzky-Kiefer-Wolfowitz (DKW) inequality (see e.g \cite{massart1990}), that states that
\[\bP\left(||F_{k, n}-F_k||_\infty \geq \epsilon \right) \leq 2e^{-2n\epsilon^2} \;.\]

Hence, the convergence of $\sum_{n=1}^{+\infty} \bP\left(d(F_{k, n}, F_k)\geq \epsilon\right)$ is direct. Now considering the event $||F_{k, n}-F_k||_\infty \geq \epsilon$, we prove that the biased distribution $\widetilde F_{k,n}$ (adding the bonus in the set of observations) is also close to $F_k$ in the sense of the supremum norm. First, the triangular inequality provides $||\widetilde F_{k, n}-F_k||_\infty \leq ||F_{k, n}-F_k||_\infty + ||\widetilde F_{k, n}-F_{k, n}||_\infty$. We upper bound the second term with

\begin{align*}
	|\widetilde F_{k, n}(x) - F_{k, n}(x)| \leq&  \max \left\{\left|\frac{k}{n+1} - \frac{k}{n} \right|, \left|\frac{k}{n+1} - \frac{k}{n} \right| \right\}  \\
	\leq & \max \left\{\left|\frac{k}{n(n+1)} \right|, \left|\frac{n-k}{n(n+1)}\right| \right\} \\
	\leq &  \frac{1}{n+1} \;,
\end{align*}

so if $||F_{k, n}-F_k||_\infty \leq \epsilon$, then $||\widetilde F_{k, n}-F_k||_\infty \leq \epsilon + (n+1)^{-1}$, and finally 
\[||F_{k, n}-F_k||_\infty \leq \epsilon \Rightarrow d(\widetilde F_{k, n}, F_k) \leq  \epsilon + \frac{1}{n+1} \;.\]

Hence, if we combine these results we obtain that for $n$ large enough $\widetilde F_{k, n}$ is also in a Levy ball around $F_k$, of size $\epsilon'$ slightly larger than $\epsilon$, with large probability.

Now that the event $\cB_{k,n}$ is defined and we derived its properties, we can find the function $f$ in Lemma~\ref{lem::relaxed_C2} in the case of BDS. We denote $X_{n+1}$ the bonus of BDS and use Lemma~\ref{lem::concentration_with_kinf} to obtain

\[\bP_{w \sim \cD_{n+1}}\left(\sum_{i=1}^{n} w_i X_i + w_{n+1} B_{\text{BDS}}(\cX, \mu) \geq \mu \right) \leq e^{-(n+1) \kinf^{B_{\text{BDS}}(\cX, \mu)}\left(\widetilde F_{k, n}, \mu\right)} \;, \]

Furthermore, under the event $\cB_{k, n}$ and the fact that the mean of the leader is concentrated around its true mean the bonus of the BDS index is upper bounded by $B_{\rho, \gamma}+\epsilon'$, for some $\epsilon'>0$. We use the continuity of $\kinf^B$ with respect to 1) the first argument in terms of the Levy distance, 2) the second argument (e.g w.r.t the euclidian norm), 3) the upper bound: for any $\epsilon_0>0$, we can calibrate the $\epsilon$ in the Levy ball to obtain
 
\[\bP_{w \sim \cD_n}\left(\sum_{i=1}^{n+1} w_i X_i \geq \mu \right) \leq e^{-(n+1) (\kinf^{B_{\rho, \gamma}}\left(\nu_k, \mu_1\right)-\epsilon_0)} \;,\]

hence we conclude this part by setting exactly $m_k(T)=\frac{\log(T)}{\kinf^{B_{\rho, \gamma}}\left(\nu_k, \mu_1\right)-\epsilon_0}$.

\subsubsection{Condition (C2)}
We now study the quantity \[E_n = \bE_{\cX_n \sim \nu_1^n}\left[ \frac{\ind(\mu(\cX_n)) \leq \mu)}{\bP\left(\widetilde\mu(\cX_n, \mu) \geq \mu \right)}\right]\] 
when $\mu < \mu_1$. We first use Lemma~\ref{lem::LB_BCP} to obtain the lower bound on the BCP 

\[ \bP_{w \sim \cD_{n+1}}\left(\widetilde\mu(\cX_n, \mu) \geq \mu\right) \geq e^{-\frac{n}{\rho}} \;,\]

with $\rho$ the parameter chosen in the component of the bonus that follows Equation~\ref{eq::bonus_gap_pos}

Using this results and (C1), we obtain a first bound \[E_n \leq e^{-n (I(\mu)-1/\rho)}\;,\]

which is sufficient to obtain (C3) if $I_1(\mu) \geq 1/\rho$. We then consider the case when it is not sufficient, and now use the hypothesis $\bP([B-\gamma, B]) \geq p$ and the second component of the bonus, $\bar \cX_n+\gamma \coloneqq \max X_i +\gamma$ to obtain 

\begin{align*}
E_n \leq &\bE_{\cX_n}\left[ \frac{\ind(\mu(\cX_n) \leq \mu)(\ind(\bar \cX_n \leq B-\gamma)+ \ind(\bar \cX_n \geq B-\gamma))}{\bP\left(\widetilde \mu(\cX_n, \mu) \geq \mu \right)}\right] \\
\leq & \underbrace{(1-p)^n e^{\frac{n}{\rho}}}_{E_{n, 1}} + \underbrace{\bE_{\cX_n}\left[ \frac{\ind(\mu(\cX_n) \leq \mu) \ind(\bar \cX_n + \gamma \geq B)}{\bP\left(\widetilde \mu(\cX_n, \mu) \geq \mu \right)}\right]}_{E_{n, 2}} \;.
\end{align*}

The two terms correspond to the two possible expressions for the bonus. The term $E_{n, 1}$ gives the sufficient condition for the tuning of $\rho$ in Theorem~\ref{th::BDS} with

\[\rho > \frac{-1}{\log(1-p)} \Rightarrow \sum_{n=1}^{+\infty} E_{n, 1} = O(1)\;. \]

In the second term, the exploration bonus is larger than $B$, so we can use the same proof scheme as in \cite{Honda}, which is also the case (B1) we consider here. First, we discretize the interval $[0, B]$ in equally sized bins of size $\eta$, and consider the truncated variables $\widetilde X_i = \eta \lfloor X_i /\eta\rfloor$. $\eta$ is chosen small enough to ensure that $\mu_1 - \eta > \mu$, i.e the truncated distribution still has a mean larger than $\mu$. An upper bound of $E_{n, 2}$ is obtained by replacing the variables $X_i$ by $\widetilde X_i$. We associate a set of observations $(\widetilde X_1, \dots, \widetilde X_n)$ with the vector $\beta_n$ of size $S =\lceil B/\eta \rceil$ which counts the number of observations falling in each bin. The number of possible values for $\beta_n$ is upper bounded by $n^S$, and we use Lemma~\ref{lem::balance_multinom} to obtain
\begin{align*}
\frac{\bP(\beta)}{\bP_{w \sim \Dir(\widetilde \beta)}\left(w^t \widetilde \beta \geq \mu \right)} \leq& \exp\left(-n \left[ \KL(\beta/n, \widetilde p_1) - \kinf^{\bar \cX +\gamma}(\beta/n, \mu)\right] \right) \\
\leq & \exp\left(-n \left[ \KL(\beta/n, \widetilde p_1) - \kinf^{B}(\beta/n, \mu)\right]\right) \\
\leq & \exp\left(-n \left[ \kinf^B(\beta/n, \mu_1-\eta) - \kinf^{B}(\beta/n, \mu)\right]\right) \;,
\end{align*}

As $\mu < \mu_1-\eta$, there exists some $\delta > 0$ satisfying $\forall \beta: \beta^t \widetilde X < \mu$, $\kinf^B(\beta/n, \mu_1-\eta) - \kinf^{B}(\beta/n, \mu) >\delta$. So finally, denoting $\cC$ the set of the possible count vectors $\beta$, it holds that 

\[E_{n, 2} \leq \sum_{\beta \in \cC} e^{-n \left[\kinf^B(\beta/n, \mu_1-\eta)-\kinf^B(\beta/n, \mu)\right]}\leq n^S e^{-n\delta}\;,\]

The two components of the bonus ensures that condition (C2) is satisfied for the distribution that satisfies hypothesis of Theorem~\ref{th::BDS}. This completes the proof of the theorem.
\end{proof}

\subsection{Proof of Theorem~\ref{th::QDS}: logarithmic regret of QDS for semi-bounded supports}\label{app::proof_QDS}

\thregretQDSb*

\begin{proof}
We start by simply stating that conditions (C1) hold, for the same reason as for RDS because we consider again light-tailed distributions. The rest of the proof is similar to the proof of Theorem~\ref{th::BDS} for BDS.

\subsubsection{Upper bounding $n_k(T)$}

We again want to use Lemma~\ref{lem::relaxed_C2}, and formulate a high-probability event on the observations. First, we can build a Levy ball around the true distribution to control the value of the quantile thanks to DKW inequality. Secondly, we use as in RDS that the variable $(\mu-X)^+$ for $X \sim \nu_k$ is light-tailed and hence admits a good rate function $I_k^+$ thanks to Cramér's theorem.
 
$\cB_{k, n}= \{\cX\in \R^n: d_L(\nu_\cX, \nu_k) \leq \epsilon, |B(\cX, \rho, \mu)- B_{k, \rho, \mu}| \leq \epsilon_1, C_{\alpha'}(\nu_\cX) \leq C_{\alpha'}(\nu_k)+\epsilon_2 \} \;,$

for some $\epsilon>0, \epsilon_1, \epsilon_2>0$, denoting $\nu_\cX$ the empirical distribution associated with a set $\cX$, $B_{k, \rho, \mu}= \mu + \rho \times \bE_{\nu_k}\left[(\mu-X)_+\right]$, and defining the application $C_{\alpha'}$ as the \emph{Conditional Value at risk} for a level $\alpha'$. If $\nu_k$ is continuous, it simply holds that $C_{\alpha'}(\nu_k) = \bE_{\nu_k}\left[X | X \geq q_{1-\alpha'}(\nu_k)\right]$.

We consider $\sum_{n=1}^{+\infty} \bP_{\cX_n}\left(\cX_n \notin \cB_{k, n}\right)$ and first refer the reader to the proofs of Theorem~\ref{th::BDS} and ~\ref{th::RDS}, respectively for the terms corresponding to the Levy distance and the concentration of the bonus (relying as we recall on the DKW inequality and a good rate function for the data-dependent bonus), for empirical distribution we simply take the mean of all data larger than the empirical quantile $q_{1-\alpha'}(\nu_{\cX})$. However, as in the proof of Theorem~\ref{th::RDS} we will be able to handle this thanks to the concentration of Wasserstein metrics for light-tailed distribution, using that (Lemma 2 from \cite{wasserstein_cvar}) for two distributions $\nu$ and $\nu'$ it holds that 
\[|C_\alpha(\nu) - C_\alpha(\nu')| \leq \frac{1}{1-\alpha'} W_1(\nu, \nu') \;.
\]

Then we can again use Theorem 2 from ~\cite{fournier_wasserstein} to obtain a concentration inequality on this term. With all these results, it holds that

\[\sum_{n=1}^{+\infty} \bP_{\cX_n}\left(\cX_n \notin \cB_{k, n}\right) <+\infty \;,\]

so the observations are in $\cB_{k,n}$ with large probability, hence we can now consider the BCP under $\cX_n \in \cB_{k, n}$. The difference compared with previous section is that this time the BCP is considered for the truncated distribution $\cT(\nu_{\cX_{n}})$. However, this is not a problem as the upper bound of lemma~\ref{lem::concentration_with_kinf} still holds. Thanks to the aggregation properties of the Dirichlet distribution (see Appendix~\ref{app::dirichlet} for more details), the BCP with parameter $(1, \dots, 1, n_\alpha)$ (of size $n-n_{\alpha'}$) is the same as the BCP with parameters $(1, \dots, 1)$ (of size $n$) with $n_{\alpha'}$ copies of the last term. Hence, the QDS index satisfies

\[\bP\left(\widetilde \mu(\cX_n, \mu) \geq \mu\right) \leq \exp\left(-(n+1) \kinf^{M_{\cX_n}}(\cT(\nu_{\cX_n}), \mu)\right)\;.
\]

If $\cX_n \in \cB_{k, n}$, then $M_{\cX_n}$ is upper bounded by
\[M_{\cX_n} \leq \max \left(C_{\alpha'}(\nu_k), B_{k, \rho, \mu}\right) + \max(\epsilon_1, \epsilon_2) \;,\]

We now define $\mathfrak{M}_k^C=\max \left(C_{\alpha'}(\nu_k), B_{k, \rho, \mu}\right)$, that is independent of the run of the bandit algorithm. 

Finally, the definition of the Levy distance ensures that $d\left(\nu_{\cX_n}, \nu_k\right)\leq \epsilon \Rightarrow \leq d\left(\cT(\nu_{\cX_n}), \cT(\nu_k)\right)\leq \epsilon$. Hence, we can use the continuity of $\kinf^{M_k}$ in all arguments (including $M_{\cX_n}$, see e.g \cite{Honda15IMED}) and obtain that for any $\epsilon_0$ we can calibrate $\epsilon, \epsilon_1, \epsilon_2$ in order to obtain

\[\bP\left(\widetilde \mu(\cX_n, \mu) \geq \mu\right) \leq \exp\left(-(n+1) \left(\kinf^{\mathfrak{M}_k^C}(\cT(\nu), \mu)-\epsilon_0\right)\right) \;, 
\]

which gives the first order term of Theorem~\ref{th::QDS} choosing $m_k(T)=\frac{\log T}{\kinf^{M_k}(\cT(\nu_k), \mu)-\epsilon_0}$ in Lemma~\ref{lem::relaxed_C2}.

\subsubsection{Condition (C2)}

In this section we use the assumption that rewards are semi-bounded with a range $[b, +\infty]$. Then, we can find a value $y$ and a discretization step $\eta$ such that truncating the values $X_i$ to $\min(X_i, y)$, and truncating each $X_i<y$ to $\widetilde X_i = \eta \left\lfloor \frac{X_i}{\eta} \right\rfloor$ preserves the order of $\mu < \widetilde \mu_1$. Note that this value $y$ does not have to be known by the algorithm and is purely an artifact for the proof. This discretization is similar to the proof of Theorem~\ref{th::BDS} in Appendix~\ref{app::proof_BDS}. We denote $S$ the number of items created by the discretization, and $\beta \in \N^S$ some vector of counts. 

However, contrarily to the proof of BDS we directly try to use Lemma~\ref{lem::balance_multinom} and consider for any $\beta \in \N^S: ||\beta||_1 = n$ the quantity 

\[K_\beta = \KL(\beta/n, \widetilde \nu_1) - \kinf^{m_{\widetilde \beta}}(\beta/n, \mu_k) \;,\]

where $\widetilde \nu_1$ denote the discretized/truncated version of $\nu_1$ and $m_\beta$ denotes the largest item with a non-zero coefficient in $\widetilde \beta$, which is itself $\beta$ with an additional value associated with the bonus.
We recall that QDS summarizes the information larger than the empirical $(1-\alpha)$-quantile by their mean (i.e the $\CVAR_\alpha$ of the empirical distribution). The truncation in $y$ does not change that, and will simply makes this quantity smaller which will itself makes the BCP smaller (although not so much with well chosen $\eta, y$). We use the result from \cite{HondaTakemura10} (proof of Theorem 7) stating that for any $\beta$

\[\kinf^{m_{\widetilde\beta}}(\beta/n, \mu_k) \leq \frac{\bar \Delta_n}{M_\beta-\mu}\;,\]

As we know that $M_\beta$ is at least larger than the exploration bonus, we furthermore have
\[\kinf^{m_{\widetilde\beta}}(\beta/n, \mu_k) \leq \frac{\bar \Delta_n}{\rho \bar \Delta_n^+} \leq 1/\rho \;.\]

This means that for any $\xi>0$ it holds that $K_B \geq \xi$ on all the sub-space of empirical distributions satisfying $\KL(\beta/n, \widetilde \nu_1) \geq (1+\xi)/\rho$.

We now use Pinsker inequality to link the KL divergence with the total variation $\delta$, in the sub-space where $\KL(\beta/n, \widetilde \nu_1) \leq (1+\xi)/\rho$,

\[ \delta(\beta/n, \widetilde \nu_1) \leq  \sqrt{\frac{1+\xi}{2 \rho}} \;.\]

If this quantity is small, we can control the probability of each measurable event. In particular, we want the quantile \textit{used by the algorithm} to be strictly larger than the $(1-\alpha)$-quantile \textit{of the assumption} of Theorem~\ref{th::QDS}. If the parameter of the condition of the theorem is $\alpha$, and we run the algorithm with a parameter $\alpha' < \alpha$, then we know that if we properly tune $\rho$ we will have $F_{k, n}(q_{1-\alpha}(F_k)) < 1- \alpha$. This means that the \emph{true quantile $q_{1-\alpha}(\nu_k)$ is present in the set $\cX_n$ and is not truncated by the algorithm}.
In particular, if $\rho \geq \frac{1+\alpha'}{\alpha'^2}$ this is satisfied, and finally

\begin{align*}\KL(\beta/n, \widetilde \nu_1) - \kinf^{m_\beta}(\beta/n, \mu_k) &\geq \kinf^\cF(\beta/n, \mu_1-\eta) - \kinf^{q_{1-\alpha'}}(\beta/n, \mu_k) \\
&\geq \kinf^{q_{1-\alpha}}(\beta/n, \mu_1-\eta) - \kinf^{q_{1-\alpha'}}(\beta/n, \mu_k) \\
&\geq \kinf^{q_{1-\alpha'}}(\beta/n, \mu_1-\eta) - \kinf^{q_{1-\alpha'}}(\beta/n, \mu_k)\\
& \geq \kappa \;,\end{align*}

for some $\kappa>0$ and thanks to the definition of the family $\cF_{[b, +\infty]}^\alpha$. This result concludes the proof as it ensures that condition (C2) is satisfied by the QDS algorithm on $\cF_{[b, +\infty]}^\alpha$.
\end{proof}

\subsection{Proof of Theorem~\ref{th::RDS}: robust regret of RDS}

\thregretRDS*

\begin{proof}
	
	We recall that the bonus function of RDS is $B(\cX, \rho_n, \mu)=\mu + \frac{\rho_n}{n} \sum_{i=1}^n (\mu-X_i)^+$, as defined in equation~\ref{eq::bonus_gap_pos}, for a sequence $(\rho_n)_{n\in \N}$ satisfying $\rho_n \rightarrow +\infty$ and $\rho_n = o(n)$. We show that with this simple bonus conditions (C2) hold for all light-tailed distributions.
	
	\paragraph{Preliminary: concentration of the means}
	
	We recall the definition of the family of light-tailed distributions, 
	\[\cF_\ell = \{\nu \in \cF_{(-\infty, +\infty)}: \exists \lambda_\nu >0, \forall \lambda \in [-\lambda_\nu, \lambda_\nu], \bE_{\nu}[\exp(\lambda X) < +\infty] \} \;. \]
	
	Then, Cramér's theorem (see e.g Theorem 2.2.3 in \citep{large_deviation_book}) ensures the condition (C1) for this family, with a good rate function that is defined with the Fenchel-Legendre transform of each distribution, itself finite thanks to the existence of the MGF of the distributions in a neighborhood of $0$.
	
	\subsubsection{Concentration of the DS index}
	
	We again try to find a proper set $\cB_{k, n}$ for observations $\cX_n=(X_1, \dots, X_n)$ that would allow to use Lemma~\ref{lem::relaxed_C2}. In this setting, we show that we only need to control the sample $\cX_n$ through its mean, the "positive gap" used in the bonus, and a range on its maximum value. Hence, we fix some $\epsilon>0$ and consider 
	\[\cB_{k, n} = \left\{\cX \in \R^n: \mu(\cX) \leq \mu_k + \epsilon, \mu(\cX^+) \leq \Delta_k^+ + \epsilon, \sigma(\cX, \mu) \leq \sigma_{k, \mu} + \epsilon, \bar \cX \in [m_n, M_n]\right\}\;, \]
	
	where $\cX^+$ is the set $((\mu-X_1)_+, \dots, (\mu-X_n)_+)$, $\sigma(\cX, \mu)= \frac{1}{n}\sum_{i=1}^n (X_i-\mu)^2$, $\bar \cX$ denotes as in other sections the maximum of the set $\cX$ and $(m_n)_{n \in \N}$, $(M_n)_{n \in \N}$ are two fixed sequences. 
	
	We start with the two conditions $\{\mu(\cX) \leq \mu_k + \epsilon\}$ and $\{\mu(\cX^+) \leq \Delta_k^+ + \epsilon\}$ (sharing the same $\epsilon$ for convenience). We already proved that condition (C1) holds as $\nu_k$ is light-tailed, thanks to Cramér's theorem, but this is true also for the distribution of a random variable $(\mu-X)_+$ for $X \sim \nu_k$, as the transformed distribution is still light-tailed. Hence, thanks to Cramér's theorem there exists also a rate function $I_k^+$ satisfying $\bP\left(\mu(\cX_n^+) \geq \Delta_k^+ + \epsilon \right) \leq \exp\left(-n I_k^+(\Delta_k^+ + \epsilon)\right)$, and then 
	
	\begin{align*} 
		\sum_{n=1}^{+\infty} \left(\bP\left(\mu(\cX_n) \geq \mu_k + \epsilon \right)\right.&\left.+ \bP\left(\mu(\cX_n^+) \geq \Delta_k^+ + \epsilon \right) \right) \\
		&\leq \sum_{n=1}^{+\infty} \left( \exp(-n I_k(\mu_k + \epsilon)) + \exp(-n I_k^+(\Delta_k^+ +\epsilon))\right) \\
		& \leq \frac{1}{1-e^{-I_k(\mu_k+\epsilon)}} + \frac{1}{1-e^{-I_k^+(\Delta_k^+ + \epsilon)}} \;.
	\end{align*}
	
	We now consider the event with the quadratic sum. To handle this, we consider the Wasserstein metric $W_{2}$ between the empirical distribution of $\cX$ and the true distribution $\nu_k$. First we recall the definition of this metric considering two distributions $\nu$ and $\nu'$ of real random variables
	\[\cL_p(\nu, \nu') = \inf \left\{\int_{\R \times \R} |x-y|^p \xi(dx, dy): \xi \in \cH(\nu, \nu') \right\}\;,\]
	where $\cH(\nu, \nu')$ is the set of all probability measures on $\R\times \R$ with marginals $\nu$ and $\nu'$. Then, the Wasserstein metric $W_p(\nu, \nu')$ is defined as $W_p(\nu, \nu') = \cL_p(\nu, \nu')^{1/p}$ for $p>1$. Two reasons motivate the use of this metric in our case: 1) concentration inequalities exist for $\cL_p$ for light-tailed distribution, and 2) the moments of order $p$ are continuous with respect to the Wasserstein metric $W_p$ (see Theorem~6.9 in \cite{villani2008optimal}). These two properties make $W_p$ a good substitute for the Levy metric we used for bounded distributions. Here we choose $W_2$ as we want to control moments of order $2$, and obtain with the parameters of our problem the following concentration inequality from \cite{fournier_wasserstein} (Theorem~2). Denoting $\bar \nu_{k, n}$ the empirical distribution of $\cX_n$, there exist some constants $c, C$ satisfying for any $x\leq 1$
	
	\begin{equation}\label{eq::wasser_concent}\bP\left(\cL_2(\nu_{k, n}, \nu_k) \geq x\right) \leq C\left[\exp(-cn x^2) + \exp(-c (nx)^{\frac{1}{3}}) \right] \;.\end{equation}
	
	The coefficient $1/3$ comes from choosing $\epsilon$ as $(1-\epsilon)/2=1/3$ in the statement of the Theorem (which is different from the $\epsilon$ in this proof). We see that this inequality is dominated by the second term. Hence, starting from our target, for any $\epsilon > 0$, there exists $\epsilon_1>0$ satisfying $W_2(\nu_{k, n}, \nu_k)\leq \epsilon_1 \Rightarrow \sigma(\cX, \mu) \leq \sigma_{k, \mu}+\epsilon$. Furthermore, the series of term $\bP(W_2(\nu_{k, n}, \nu_k)\geq \epsilon_1)$ converges thanks to Equation~\ref{eq::wasser_concent}, which concludes the part of the proof corresponding to this term.
	
	Now that the events about sample means are handled, we investigate possible values for the sequence $m_n$ and $M_n$ that would allow $\cB_{k, n}$ to happen with high probability. The maximum $\bar \cX_n$ of a set of $n$ i.i.d random variables $\cX_n=(X_1, \dots, X_n)$ has an explicit distribution, which is (in terms of the cdf $F_k$ of $\nu_k$) for any $x\in \R$,
	\[\bP_{\cX_n \sim \nu_k^n}(\bar \cX_n \leq x) = F_k(x)^n\;.\]
	We first look at the term $M_n$, we calibrate it to ensure that $\bP(\bar \cX_n \leq M_n) \geq 1-\frac{1}{n \log(n)^2}$, so that 
	
	\[M_n = F_k^{-1}\left(\left(1-\frac{1}{n (\log n)^2}\right)^{\frac{1}{n}}\right) \leq F_k^{-1}\left(\exp\left(- \frac{1}{n^2 (\log n)^2}\right) \right)\;.\]
	
	This way, $\sum \bP(\bar \cX_n \leq M_n) \geq 1-\frac{1}{n \log(n)^2}$ converges. Then we consider $m_n$, and this time we want $\bP(\bar \cX_n \leq m_n) \leq \frac{1}{n \log(n)^2}$ to ensure the same convergence guarantees. We obtain
	
	\[m_n = F_k^{-1}\left(\frac{1}{n (\log n )^2}^{\frac{1}{n}}\right)= F_k^{-1}\left(\exp\left(-\frac{\log n + 2 \log \log n}{n}\right)\right).
	\] 
	
	Combining all these results, we obtain
	
	\begin{align*}
		\sum_{n=1}^{T-1} \bP_{\cX_n \sim \nu_k^n}(\cX_n \notin \cB_{k, n}) = \cO(1)\;.
	\end{align*}
	
	We now use the first part of Lemma~\ref{lem::concentration_with_kinf} and the fact that for any $\eta \in [0,1)$ and $x \in \left(-\infty, \eta\right]$, $-\log(1-x) \leq x + \frac{1}{1-\eta} \frac{x^2}{2}$. Denoting $M_{\cX_n} = \max\left(\bar \cX_n, B(\cX_n, \rho_n, \mu)\right)$, $X_{n+1} = B(\cX_n, \rho_n, \mu)$ and using the representation of Dirichlet samples as normalized exponential variables, Chernoff inequality provide
	
	\begin{align*}
		\bP_{w \sim \cD_{n+1}}\Bigg(\sum_{i=1}^n w_i X_i &+ w_{n+1} B(\cX_n, \rho_n, \mu) \geq \mu\Bigg)\\
		&= \bP_{R_1, \dots, R_{n+1} \sim \cE(1)} \left(  \sum_{i=1}^{n+1} R_i ( X_i -\mu)  \geq 0 \right)  \\
		& \leq \inf_{\lambda \in [0, \frac{\eta}{M_{\cX_n}-\mu})}\prod_{i=1}^{n+1} \bE_{R_i\sim \cE(1)}\left[ e^{\lambda R_i\left( X_i - \mu\right) }\right] \\ 
		& \leq \exp\left(- \sum_{i=1}^{n+1} \log \left(1-\eta \frac{X_i-\mu}{M_{\cX_n}-\mu} \right)\right)\\
		& \leq \exp\left(- \sum_{i=1}^{n} \log \left(1-\eta \frac{X_i-\mu}{M_{\cX_n}-\mu} \right)-\log(1-\eta)\right) \\
		&\leq \frac{1}{1-\eta}\exp\left(- \sum_{i=1}^{n} \log \left(1-\eta \frac{X_i-\mu}{M_{\cX_n}-\mu} \right)\right) \\
		& \leq \frac{1}{1-\eta}\exp\left(\sum_{i=1}^{n} \left(\eta \frac{X_i-\mu}{M_{\cX_n}-\mu} + \frac{\eta^2}{2(1-\eta)} \left(\frac{X_i-\mu}{M_{\cX_n}-\mu}\right)^2 \right)\right) \\
		& = \frac{1}{1-\eta}\exp\left(- n \eta \frac{\bar \Delta_n}{M_{\cX_n}-\mu} +  n\frac{\eta^2}{2(1-\eta)} \frac{\bar \sigma_n(\mu)^2}{(M_{\cX_n}-\mu)^2}\right)\;, \\
	\end{align*}
	
	where $\bar \Delta_n = \frac{1}{n} \sum_{i=1}^n \mu - X_i$, $\bar \sigma^2_n(\mu) = \frac{1}{n} \sum_{i=1}^n (X_i-\mu)^2$.
	
	We recall that we consider this upper bound under the event $\cX_n \in \cB_{k, n}$, which ensures that 1) $\bar \cX_n \in [m_n, M_n]$ with the sequences we defined, 2) $\bar\Delta_n \geq \mu-\mu_k + \epsilon$, 3) the bonus is upper bounded by $\mu+\rho_n \times(\Delta_k^++\epsilon)$, and 4) the quadratic deviation satisfies $\bar \sigma_n(\mu) \leq \sigma_{k, \mu}+\epsilon$. For any $\epsilon_0>0$, if we further assume that $M_n = o(m_n^2)$, for any $n$ large enough these results finally provide
	
	\begin{align*}
		\bP\left(\widetilde \mu(\cX_n, \mu)\geq \mu \right) &\leq \frac{1}{1-\eta}\exp\left(- n \eta \frac{\Delta_k-\epsilon}{\max(M_n, B_n)-\mu} +  n\frac{\eta^2}{2(1-\eta)} \frac{(\sigma_{k, \mu}+\epsilon)^2}{(m_n-\mu)^2}\right) \\
		& \leq \frac{1}{1-\eta}\exp\left(- n \eta \frac{\Delta-\epsilon_0}{\max(M_n, B_n)-\mu}\right) \;,
	\end{align*}
	
	where $B_n = \mu + \rho_n \left(\bE_{\nu_k}\left[(\mu-X)_+\right]+\epsilon\right)$. The condition $M_n = o(m_n^2)$ is satisfied for light-tailed distributions, as they generally have at most a poly-logarithmic growth of the maximum (e.g $\log(n)$ for exponential tails, $\sqrt{\log n}$ for gaussian tails, \dots) and so $M_n$ and $m_n$ are actually of the same order of magnitude. We then recover all the terms of Theorem~\ref{th::RDS} by matching the exponent of the upper bound with $-\log T$.
	
	To conclude this part, the light-tailed hypothesis allows to provide an asymptotic upper bound on the expected number of pulls of each sub-optimal arm for the RDS index. Then, the choice of $m_k(T)$ in Lemma~\ref{lem::relaxed_C2} can be $m_k(T)=O(\log(T) M_{\log(T)})$ if $M_n$ is a power of log. The algorithm then achieves asymptotically a robust sub-linear instance dependent regret.
	
	\begin{remark} The concentration bound we use on the Dirichlet weighted average requires the control of the second empirical decentred moment $\bar \sigma^2_n(\mu)$ since we use $-\log(1-x)\leq x + \frac{1}{1-\eta}\frac{x^2}{2}$. This control follows from the existence of exponential moments i.e $\nu_k\in\cF_\ell$. A tighter analysis is possible, indeed for any $q>0$ and $\eta\in(0, 1)$ there exists $C_{q, \eta}>0$ such that $\forall x\leq\eta, -\log(1-x)\leq x + C_{q, \eta}\left| x\right|^{1+q}$ ($C_{1, \eta}=\frac{1}{1-\eta}$), relaxing the requirement to a mere control of the moment of order $1+q$ in the topology of $W_{1+q}$ (for which concentration results are similar to the one we provide, for the families we consider). Furthermore, it would relax the condition relating $m_n$ and $M_n$ to $M_n=o(m_n^{1+q})$.
	\end{remark}
	
	\subsubsection{Condition (C2)}
	
	We use the left-hand term of Lemma~\ref{lem::LB_BCP} and obtain a lower bound of the BCP in $e^{-\frac{n}{\rho_n}}$. Combining this result with condition (C1) we obtain
	
	\[E_n \leq e^{-n (I_1(\mu)- 1/\rho_n)}\;, \]
	
	and for $n$ large enough $\rho_n > 1/I_1(\mu)$, which is sufficient to obtain the convergence of $\sum_{n=1}^{+\infty} E_n$. If we choose a sequence $(\rho_n)$ that is strictly increasing and of first term $\rho_1$, we see that if $I_1(\mu) > \rho_1$ then the term $E_n$ is exponentially decreasing from the start.
	
\end{proof}

\section{Examples of distributions fitting the family of QDS}\label{app::QDS_fam_examples}

We first show a given distribution $\nu$ can always be fitted in $\cF^{\alpha}_{[b, +\infty)}$ at the cost of a higher exploration bonus $\rho$, thus satisfying the quantile condition of Theorem \ref{th::QDS}.

\begin{lemma}
	Let $\cF\subset \cF_{[b, +\infty)}$ a base family of distributions with continuous cdf and $\alpha\in(0, 1)$. For all $\nu\in\cF$ and $\mu>\bE_{\nu}[X]$, there exists $\rho>0$, $\mathfrak{M} = \mathfrak{M}(\rho)$ such that $\kinf^\mathfrak{M}\left( \cT_{\alpha}\left(\nu \right), \mu \right) \leq \kinf^\cF\left( \nu, \mu\right)$.
\end{lemma}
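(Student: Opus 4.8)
The plan is to prove the stronger fact that the left-hand side $\kinf^{\mathfrak{M}(\rho)}\big(\cT_\alpha(\nu),\mu\big)$ tends to $0$ as $\rho\to+\infty$, while the right-hand side $\kinf^{\cF}(\nu,\mu)$ is a fixed strictly positive constant (it is positive because $\mu>\bE_\nu[X]$ and the base family $\cF$ admits no mass escaping to infinity); the claimed inequality then holds for every $\rho$ large enough. Two elementary observations drive the reduction. First, $\mathfrak{M}\mapsto\kinf^{\mathfrak{M}}\big(\cT_\alpha(\nu),\mu\big)=\kinf^{\cF_{[b,\mathfrak{M}]}}\big(\cT_\alpha(\nu),\mu\big)$ is non-increasing, since enlarging the support bound enlarges the set of admissible competitors over which the infimum is taken. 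Second, since $\mu>\bE_\nu[X]$ forces $\bE_\nu[(\mu-X)^+]>0$ (otherwise $X\ge\mu$ almost surely, contradicting $\bE_\nu[X]<\mu$), the threshold $\mathfrak{M}(\rho)\geq\mu+\rho\,\bE_\nu[(\mu-X)^+]$ diverges with $\rho$. It therefore suffices to control $\kinf^{\mathfrak{M}}\big(\cT_\alpha(\nu),\mu\big)$ as $\mathfrak{M}\to+\infty$.

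The key step is an explicit mass-leakage competitor. Because the cdf of $\nu$ is continuous, the truncation preserves the mean, $\bE_{\cT_\alpha(\nu)}[X]=\bE_\nu[X]<\mu$, and $\cT_\alpha(\nu)$ is supported in $[b,C_\alpha(\nu)]$. For $\mathfrak{M}>C_\alpha(\nu)$ I would introduce the mixture $G_{\mathfrak{M}}=(1-\theta)\,\cT_\alpha(\nu)+\theta\,\delta_{\{\mathfrak{M}\}}$, which is supported on $[b,\mathfrak{M}]$, hence admissible for $\kinf^{\mathfrak{M}}$. Its mean is $(1-\theta)\bE_\nu[X]+\theta\mathfrak{M}$, so choosing $\theta=\theta(\mathfrak{M})$ slightly above $\tfrac{\mu-\bE_\nu[X]}{\mathfrak{M}-\bE_\nu[X]}$ makes $\bE_{G_{\mathfrak{M}}}[X]>\mu$ while $\theta(\mathfrak{M})\to0$. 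Since $\mathfrak{M}\notin\mathrm{supp}\big(\cT_\alpha(\nu)\big)$, the density ratio is the constant $(1-\theta)^{-1}$ on the support of $\cT_\alpha(\nu)$, whence
\[\KL\big(\cT_\alpha(\nu),G_{\mathfrak{M}}\big)=-\log\big(1-\theta(\mathfrak{M})\big).\]
This quantity tends to $0$ as $\mathfrak{M}\to+\infty$, so $\kinf^{\mathfrak{M}}\big(\cT_\alpha(\nu),\mu\big)\le-\log\big(1-\theta(\mathfrak{M})\big)\to0$. Picking $\rho$ large enough that $\mathfrak{M}(\rho)>C_\alpha(\nu)$ and $-\log\big(1-\theta(\mathfrak{M}(\rho))\big)\le\kinf^{\cF}(\nu,\mu)$ then yields the statement, with $\mathfrak{M}=\mathfrak{M}(\rho)$.

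The main obstacle is conceptual rather than computational: the inequality can only hold because $\kinf^{\cF}(\nu,\mu)>0$, which is exactly the non-degeneracy of the base family that the QDS analysis relies on; for the fully unbounded family $\cF_{[b,+\infty)}$ itself the right-hand side would already be $0$ and no finite $\rho$ could work, so I would make this positivity explicit (it holds for the parametric families of interest, e.g.\ exponential and Gaussian). A tempting alternative is to push the near-optimal competitor $G^\star$ of the untruncated problem through the truncation map and invoke the data-processing inequality $\KL\big(\cT_\alpha(\nu),T_\#G^\star\big)\le\KL(\nu,G^\star)$; the difficulty there is that the map $T$ caps the upper tail at $C_\alpha(\nu)$ and may therefore lower the mean of $G^\star$ below $\mu$, breaking the feasibility constraint, so one would still have to inflate the mean with a high point mass and absorb the resulting extra Kullback--Leibler cost, which is precisely what the leakage argument above accomplishes directly.
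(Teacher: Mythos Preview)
Your argument is correct and reaches the same conclusion as the paper, but by a genuinely different route. The paper invokes the Honda--Takemura dual representation $\kinf^{\mathfrak{M}}(\cT_\alpha(\nu),\mu)=\max_{\lambda\in[0,1/(\mathfrak{M}-\mu)]}\bE_{\cT_\alpha(\nu)}[\log(1-\lambda(X-\mu))]$, bounds the integrand via the concavity inequality $\log(1-x)\le -x$, and reads off $\kinf^{\mathfrak{M}}(\cT_\alpha(\nu),\mu)\le \frac{\mu-\bE_\nu[X]}{\mathfrak{M}-\mu}$; for $\rho$ large this yields the explicit threshold $\rho\ge \frac{\mu-\bE_\nu[X]}{\bE_\nu[(\mu-X)_+]\,\kinf^{\cF}(\nu,\mu)}$. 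You instead stay on the primal side and exhibit the competitor $G_{\mathfrak{M}}=(1-\theta)\cT_\alpha(\nu)+\theta\delta_{\mathfrak{M}}$ directly, which gives $\kinf^{\mathfrak{M}}(\cT_\alpha(\nu),\mu)\le -\log(1-\theta)$ with $\theta\approx\frac{\mu-\bE_\nu[X]}{\mathfrak{M}-\bE_\nu[X]}$. The two bounds are first-order equivalent (both of order $1/\mathfrak{M}$), and in fact your mixture is essentially the primal witness corresponding to the paper's dual bound. Your approach is more elementary in that it avoids the dual formula altogether; the paper's buys a slightly cleaner closed-form threshold for $\rho$. You also make explicit the hidden assumption $\kinf^{\cF}(\nu,\mu)>0$, which the paper uses silently when dividing by it.
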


\begin{proof}
	Let $\mathfrak{M} = \max\left\lbrace \CVAR_\alpha(\nu), \mu +\rho \bE_{\nu}\left[ \left( \mu - X\right)_+\right] \right\rbrace$. By construction, the support of $\cT_\alpha(\nu)$ is upper bounded by $\mathfrak{M}$ and $\mu < \mathfrak{M}$, therefore it follows from Theorem 8 in \cite{HondaTakemura10} that $\kinf^{\mathfrak{M}}\left(\cT_{\alpha}(\nu), \mu \right) = \max_{\lambda\in [0, \frac{1}{\mathfrak{M} - \mu}]} \bE_{\cT_{\alpha}(\nu)}\left[ \log\left( 1 - \lambda(X-\mu)\right) \right]$. It follows from the concavity of $\log$ that, for $\lambda\in[0, \frac{1}{\mathfrak{M}-\mu}]$,
	
	\begin{align*} 
		\bE_{\cT_{\alpha}(\nu)}\left[ \log\left( 1 - \lambda(X-\mu)\right) \right] &\leq -\bE_{\cT_{\alpha}(\nu)}\left[ \lambda(X-\mu) \right]\\
		&=\lambda \left( \mu -\bE_{\cT_{\alpha}(\nu)}\left[ X \right]\right)\\
		&=\lambda \left( \mu -\bE_{\nu}\left[ X \ind_{X\leq q_{1-\alpha}(\nu)}\right] -\bE_{\cT_{\alpha}(\nu)}\left[ X \ind_{X > q_{1-\alpha}(\nu)}\right] \right)\\
		&=\lambda \left( \mu -\bE_{\nu}\left[ X \ind_{X\leq q_{1-\alpha}(\nu)}\right] -\alpha C_{\alpha}(\nu) 
		\right)\\
		&=\lambda \left( \mu -\bE_{\nu}\left[ X \ind_{X\leq q_{1-\alpha}(\nu)}\right] -\alpha \bE_{\nu}\left[ X | X>q_{1-\alpha}(\nu)\right]  
		\right)\\
		&=\lambda \left( \mu -\bE_{\nu}\left[ X \ind_{X\leq q_{1-\alpha}(\nu)}\right] -\bE_{\nu}\left[ X \ind_{ X>q_{1-\alpha}(\nu)}\right]
		\right)\\
		&=\lambda \left( \mu -\bE_{\nu}\left[ X \right]
		\right)\;,\\
	\end{align*}
	
	by definition of $C_{1-\alpha}(\nu)$ and the conditional expectation. Since $\mu > \bE_{\nu}\left[ X \right]$, the maximum of the RHS in $\lambda$ is attained at the rightmost point, which yields
	
	\[\kinf^{\mathfrak{M}}\left(\cT_{\alpha}(\nu), \mu \right)\leq \frac{\mu -\bE_{\nu}\left[ X \right]}{\mathfrak{M} - \mu}.\]
	
	For $\rho>0$ large enough, we have 
	$\mathfrak{M} = \mu + \rho \bE_{\nu}\left[ (\mu-X)^+\right]$ which further simplifies as \[\kinf^{\mathfrak{M}}\left(\cT_{\alpha}(\nu), \mu \right) \leq \frac{\mu -\bE_{\nu}\left[ X \right]}{\rho \bE_{\nu}\left[\left( \mu - X\right)_+  \right] }.\] 
	
	Therefore for $\rho$ large enough, in particular $\rho\geq \frac{\mu -\bE_{\nu}\left[ X \right]}{ \bE_{\nu}\left[\left( \mu - X\right)_+  \right] \kinf^{\cF}\left(\nu, \mu \right) }$, we have \[\kinf^{\mathfrak{M}}\left(\cT_{\alpha}(\nu), \mu \right)\leq\kinf^{\cF}\left(\nu, \mu \right).\]
\end{proof}

The bound on $\rho$ given in the above lemma can be rather loose because of the crude concave inequality we use. It also comes at the price of increasing $\rho$, which may hurt the performances of QDS due to overexploration. We now show that this quantile condition can be calculated almost in closed-form and is naturally satisfied by some classical families of distributions.

\subsection{Exponential} Let $\cF = \left( \nu_{\theta}\right) _{\theta\in\mathbb{R}^*_+}$ with density $p_{\theta}(x)~=~\theta e^{-\theta x}\ind_{x\geq 0}$. We summarize in the below lemma a number of explicit formulas for the $\kinf$ operators and quantiles of $\nu_{\theta}$.

\begin{lemma}[Some statistics of exponential distributions]
	Let $0<\phi<\theta$ and $\alpha\in\left( 0, 1\right)$.
	\begin{enumerate}[(i)]
		\item $\bE_{\theta}[X] = \frac{1}{\theta}$.
		\item $q_{1-\alpha}(\nu_{\theta}) = -\frac{\log \alpha}{\theta}$.
		\item $\bE_{\theta}\left[ X | X\geq q_{1-\alpha}(\nu_{\theta}) \right] = \frac{1}{\theta} + q_{1-\alpha}(\nu_{\theta})$.
		\item $\bE_{\theta}\left[ \left( \frac{1}{\phi} - X\right)_+ \right] = \frac{1}{\phi} - \frac{1}{\theta}\left(1 - e^{-\theta / \phi} \right) $.
		\item $\kinf^{\cF}(\nu_{\theta}, \frac{1}{\phi}) = \frac{\phi}{\theta} - \log \frac{\phi}{\theta} - 1$.
	\end{enumerate}
\end{lemma}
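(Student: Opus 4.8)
The five identities are all classical facts about the exponential law, so the plan is to dispatch (i)--(iv) by direct integration and the memoryless property, and to reduce (v) to a single Kullback--Leibler computation using the exponential-family structure.

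First I would treat (i) and (ii) by elementary manipulation of the density and cdf: $\bE_\theta[X] = \int_0^\infty x\theta e^{-\theta x}\,\mathrm{d}x = 1/\theta$ by integration by parts, while (ii) follows from solving $F_{\nu_\theta}(q) = 1 - e^{-\theta q} = 1-\alpha$ for $q$, giving $q_{1-\alpha}(\nu_\theta) = -\log(\alpha)/\theta$. For (iii) the cleanest route is the memoryless property: conditionally on $\{X \ge q\}$, the variable $X - q$ is again exponential with rate $\theta$, so $\bE_\theta[X \mid X \ge q] = q + \bE_\theta[X] = q + 1/\theta$, which with $q = q_{1-\alpha}(\nu_\theta)$ is exactly the claim (one may alternatively verify this by direct integration). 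For (iv) I would avoid integration by parts by using the tail (layer-cake) representation: writing $a = 1/\phi$,
\[\bE_\theta\big[(a-X)_+\big] = \int_0^a \bP_\theta(X \le a - t)\,\mathrm{d}t = \int_0^a \big(1 - e^{-\theta(a-t)}\big)\,\mathrm{d}t = a - \frac1\theta\big(1 - e^{-\theta a}\big),\]
and substituting $a = 1/\phi$ gives the stated formula.

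The only item requiring a structural argument is (v). Since $\cF$ is a single-parameter exponential family parametrised by the rate, $\kinf^{\cF}(\nu_\theta, \mu)$ reduces to the Kullback--Leibler divergence from $\nu_\theta$ to the member of $\cF$ whose mean equals $\mu$, as recalled in the introduction. The infimum in the definition of $\kinf$ is over $G\in\cF$ with $\bE_G(X) > 1/\phi$; because $\phi < \theta$ forces the target mean $1/\phi$ to exceed the true mean $1/\theta$, and $\KL(\nu_\theta, \cdot)$ is increasing as the competitor's mean moves away from $1/\theta$, the infimum is attained in the limit at the exponential law of mean exactly $1/\phi$, namely $\nu_\phi$. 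It then remains to compute
\[\KL(\nu_\theta, \nu_\phi) = \bE_\theta\!\left[\log\frac{\theta e^{-\theta X}}{\phi e^{-\phi X}}\right] = \log\frac{\theta}{\phi} + (\phi-\theta)\,\bE_\theta[X] = \log\frac{\theta}{\phi} + \frac{\phi}{\theta} - 1,\]
using (i) for the last moment; rewriting $\log(\theta/\phi) = -\log(\phi/\theta)$ yields the claimed $\frac\phi\theta - \log\frac\phi\theta - 1$.

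The main (and only) obstacle is the reduction step in (v): one must justify that the infimum defining $\kinf^{\cF}$ over competitors with strictly larger mean is attained in the limit at the mean-matching member $\nu_\phi$. This is exactly the SPEF identity stated in the introduction, and it can be made rigorous either by the convexity and monotonicity of $\theta' \mapsto \KL(\nu_\theta, \nu_{\theta'})$ along the family or directly from the closed-form monotone expression just computed; everything else is routine integration.
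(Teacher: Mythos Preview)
Your proposal is correct and follows essentially the same approach as the paper, which simply states that (i)--(iv) follow from straightforward integral calculations and that (v) is a consequence of the SPEF identity $\kinf^{\cF}(\nu_\theta,1/\phi)=\KL(\nu_\theta,\nu_\phi)$ together with the closed-form expression for this divergence. Your use of the memoryless property for (iii) and the layer-cake formula for (iv) are minor stylistic variations on the direct integration the paper alludes to, and your extra justification for the attainment of the infimum in (v) is more explicit than the paper but consistent with it.
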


\begin{proof}
	\emph{(i)-(iv)} result from straightforward integral calculations.
	\emph{(v)} is a direct consequence of $\cF$ being a SPEF, which implies $\kinf^{\cF}(\nu_{\theta}, \frac{1}{\phi}) = KL(\nu_{\theta}, \nu_{\phi})$, which has the stated closed-form for exponential distributions.
\end{proof}

Using these formulas, we numerically compute $\kinf^{\mathfrak{M(\rho)}}$ as a function of $\rho$ by solving the convex dual problem (see \cite{HondaTakemura10}) and compare it to $\kinf^{\cF}$. Conversely, for a fixed exploration bonus $\rho$, we compute the $\kinf$ of the truncated distribution $\cT_{\alpha}(\nu_{\theta})$ for a range of $\alpha$. As per intuition, smaller values of $\alpha$, corresponding to smaller truncations of the support, help satisfy the $\kinf$ condition. Results are reported in Figure \ref{fig:exponential_qds_condition}.

\begin{figure}[!ht]
	\centering
	\begin{subfigure}[b]{0.49\textwidth}
		\centering
		\includegraphics[height=0.22\textheight]{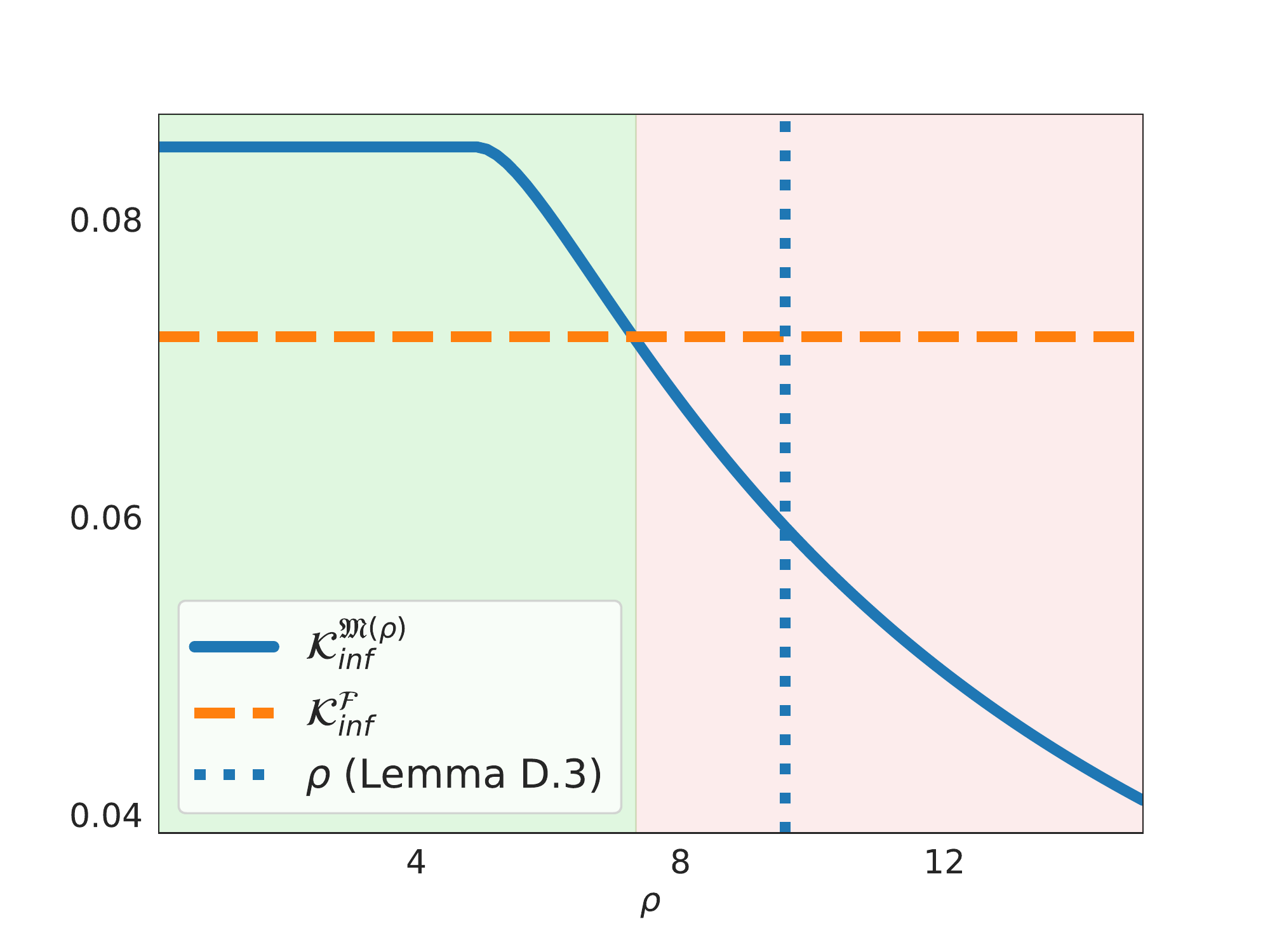}
	\end{subfigure}
	\begin{subfigure}[b]{0.49\textwidth}
		\centering
		\includegraphics[height=0.22\textheight]{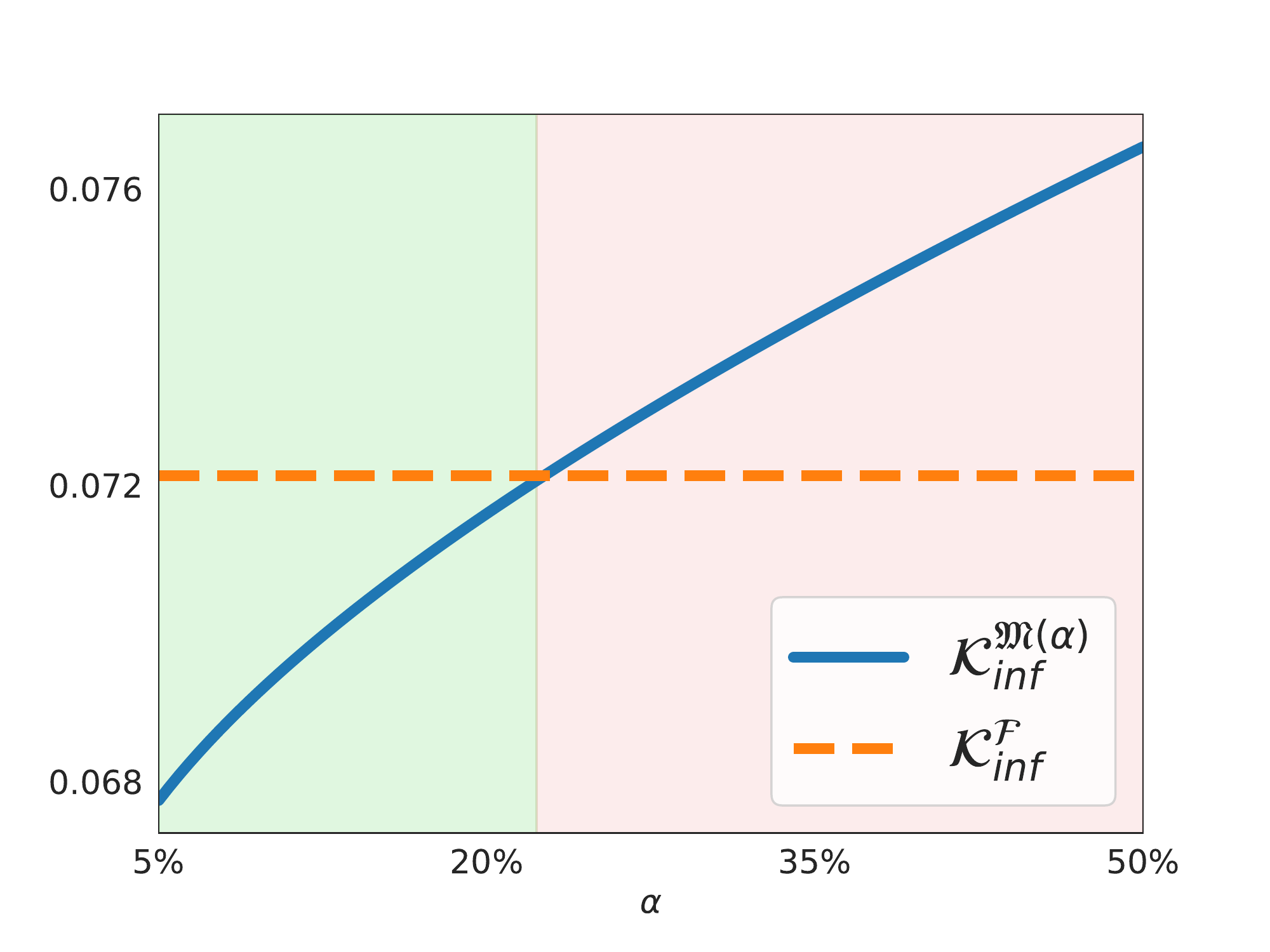}
	\end{subfigure}
	\caption{Comparison of $\kinf^{\mathfrak{M(\rho, \alpha)}}(\cT_{\alpha}(\nu_{\theta}), \frac{1}{\phi})$ and $\kinf^{\cF}(\nu_{\theta}, \frac{1}{\phi})$ for the exponential distribution $\cE(\theta)$ with $\theta=\frac{1}{2}$, $\phi=\frac{1}{3}$. Left: $\alpha=5\%$. Right: $\rho=8$. Admissible regions for $\rho$ and $\alpha$ are shaded in green.}
	\label{fig:exponential_qds_condition}
\end{figure}

\subsection{Gaussian}

Let $\sigma>0$ and $\cF_{\sigma} = \left( \nu_{\theta}\right) _{\theta\in\mathbb{R}}$ with density $p_{\theta}(x)~=~\frac{1}{\sqrt{2\pi}\sigma} e^{-\frac{\left( x-\theta\right)^2}{2\sigma^2}}$. We recall some useful statistics of the SPEF of fixed variance Gaussian distributions.

\begin{lemma}[Some statistics of fixed variance Gaussian distributions]
	Let $\theta<\phi$ and $\alpha\in\left( 0, 1\right)$. We denote $\Phi(x)=\frac{1}{\sqrt{2\pi}}\int_{-\infty}^{x} e^{-\frac{y^2}{2}}dy$ the standard Gaussian cdf.
	\begin{enumerate}[(i)]
		\item $\bE_{\theta}[X] = \theta$.
		\item $q_{1-\alpha}(\nu_{\theta}) = \theta + \sigma\Phi^{-1}\left( 1-\alpha\right) $.
		\item $\bE_{\theta}\left[ X | X\geq q_{1-\alpha}(\nu_{\theta}) \right] = \theta + \frac{\sigma}{\alpha \sqrt{2\pi}}e^{-\frac{\Phi^{-1}\left( 1 - \alpha \right) }{2}}$.
		\item $\bE_{\theta}\left[ \left( \phi - X\right)_+ \right] = \left(\phi - \theta \right) \Phi\left( \frac{\phi - \theta}{\sigma}\right) + \frac{\sigma}{\sqrt{2\pi}} e^{-\frac{\left( \phi - \theta\right)^2 }{2\sigma^2}}$.
		\item $\kinf^{\cF_{\sigma}}(\nu_{\theta}, \phi) = \frac{\left( \phi-\theta\right)^2}{2\sigma^2}$.
	\end{enumerate}
\end{lemma}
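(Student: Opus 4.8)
The plan is to reduce every identity to a standard-normal computation via the location–scale relation $X \stackrel{d}{=} \theta + \sigma Z$ with $Z\sim\mathcal N(0,1)$, and then to lean on the single elementary fact that the standard Gaussian density $\varphi(x)=\frac{1}{\sqrt{2\pi}}e^{-x^2/2}=\Phi'(x)$ satisfies $\varphi'(x)=-x\varphi(x)$, so that $\int_a^{\infty} x\varphi(x)\,dx=\varphi(a)$ and $\int_{-\infty}^{b} x\varphi(x)\,dx=-\varphi(b)$. Parts \emph{(i)} and \emph{(ii)} are immediate: \emph{(i)} is just the mean of the location parameter, and for \emph{(ii)} one writes $\bP_\theta(X\le x)=\Phi\big(\tfrac{x-\theta}{\sigma}\big)$, sets this equal to $1-\alpha$, and inverts to get $q_{1-\alpha}(\nu_\theta)=\theta+\sigma\Phi^{-1}(1-\alpha)$.

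For \emph{(iii)} I would standardize to obtain $\bE_\theta[X\mid X\ge q_{1-\alpha}(\nu_\theta)]=\theta+\sigma\,\bE[Z\mid Z\ge a]$ with $a=\Phi^{-1}(1-\alpha)$. Since $\bP(Z\ge a)=\alpha$, the truncated mean is $\bE[Z\mid Z\ge a]=\frac{1}{\alpha}\int_a^\infty z\varphi(z)\,dz=\frac{\varphi(a)}{\alpha}$, yielding $\theta+\frac{\sigma}{\alpha\sqrt{2\pi}}e^{-(\Phi^{-1}(1-\alpha))^2/2}$ (the exponent being the \emph{square} of $\Phi^{-1}(1-\alpha)$). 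For \emph{(iv)} I would split $\bE_\theta[(\phi-X)_+]=\int_{-\infty}^{\phi}(\phi-x)p_\theta(x)\,dx$, substitute $z=\tfrac{x-\theta}{\sigma}$ to get $(\phi-\theta)\int_{-\infty}^{b}\varphi(z)\,dz-\sigma\int_{-\infty}^{b} z\varphi(z)\,dz$ with $b=\tfrac{\phi-\theta}{\sigma}$, and evaluate: the first integral is $(\phi-\theta)\Phi(b)$ and the second is $-\sigma\big(-\varphi(b)\big)=\sigma\varphi(b)$ by the same primitive, giving exactly $(\phi-\theta)\Phi\big(\tfrac{\phi-\theta}{\sigma}\big)+\frac{\sigma}{\sqrt{2\pi}}e^{-(\phi-\theta)^2/(2\sigma^2)}$.

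Finally, for \emph{(v)} I would invoke that $\cF_\sigma$ is a SPEF, so the infimum defining $\kinf^{\cF_\sigma}(\nu_\theta,\phi)$ over distributions of the family with mean exceeding $\phi$ is attained on the boundary at the member with mean exactly $\phi$ (the constraint being active since $\theta<\phi$), reducing $\kinf$ to $\KL(\nu_\theta,\nu_\phi)$; the KL divergence between two Gaussians sharing variance $\sigma^2$ is the classical $\frac{(\phi-\theta)^2}{2\sigma^2}$. None of these steps is deep: the whole lemma is routine Gaussian bookkeeping, and the only care needed is in the truncated and partial moments of \emph{(iii)}–\emph{(iv)}, where the single primitive $\int x\varphi(x)\,dx=-\varphi(x)$ does all the work, so there is no genuine obstacle to anticipate.
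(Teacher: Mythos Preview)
Your proposal is correct and takes essentially the same approach as the paper, which merely says the results follow from ``straightforward integral calculations'' for (i)--(iv) and from the SPEF property for (v); you have simply supplied the details the paper omits. You are also right that the exponent in (iii) should be $(\Phi^{-1}(1-\alpha))^2/2$ rather than $\Phi^{-1}(1-\alpha)/2$ --- this is a typo in the statement, not a flaw in your argument.
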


The proof is similar to that of the previous lemma; in particular \emph{(v)} uses the fact that $\cF_{\sigma}$ forms a SPEF. Results are reported in Figure~\ref{fig:gaussian_qds_condition}. The lighter right tail of Gaussian distributions, compared to that of exponential distributions, results in much less stringent conditions on $\alpha$ and $\rho$; in other words, Gaussian distributions are "easier" to summarize with the truncation and conditional Value-at-Risk operator.

\begin{figure}[!ht]
	\centering
	\begin{subfigure}[b]{0.49\textwidth}
		\centering
		\includegraphics[height=0.22\textheight]{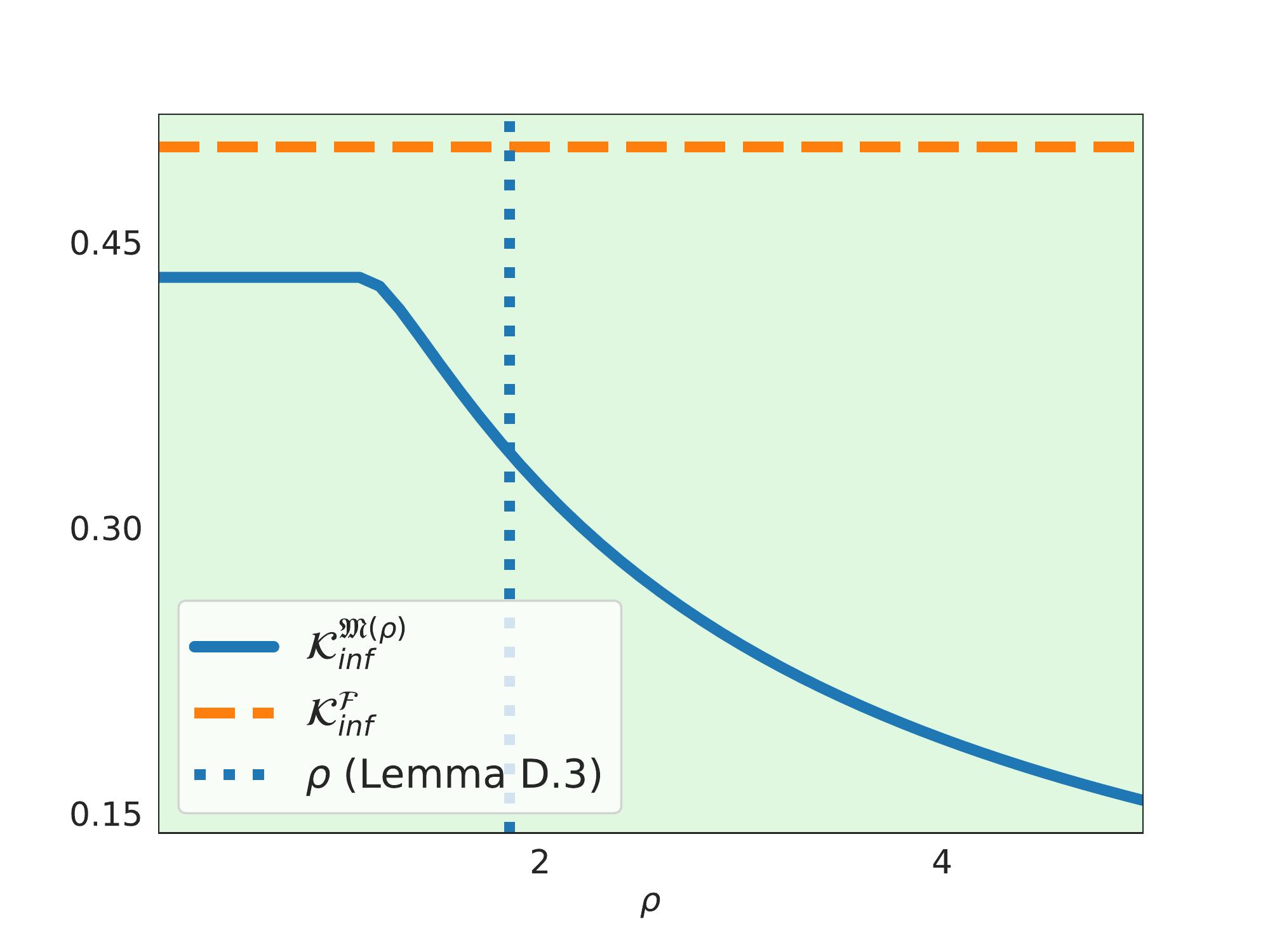}
	\end{subfigure}
	\begin{subfigure}[b]{0.49\textwidth}
		\centering
		\includegraphics[height=0.22\textheight]{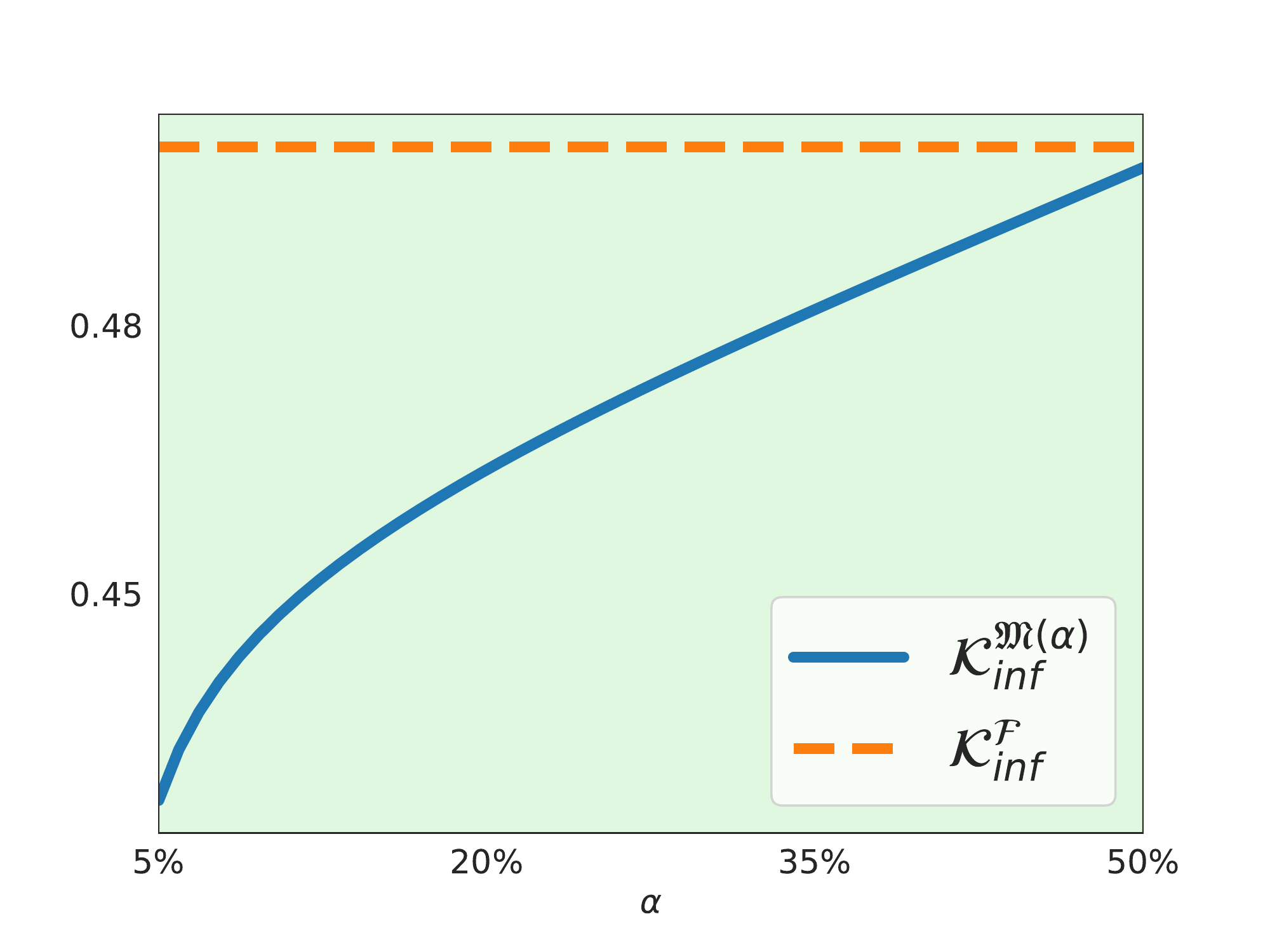}
	\end{subfigure}
	\caption{Comparison of $\kinf^{\mathfrak{M(\rho, \alpha)}}(\cT_{\alpha}(\nu_{\theta}), \phi)$ and $\kinf^{\cF}(\nu_{\theta}, \phi)$ for the Gaussian distribution $\cN(\theta, \sigma^2)$ with $\theta=0$, $\sigma=1$, $\phi=1$. Left: $\alpha=5\%$. Right: $\rho=1$. Admissible regions for $\rho$ and $\alpha$ are shaded in green.}
	\label{fig:gaussian_qds_condition}
\end{figure}

\section{Additional Experiments}\label{app::additional_xp}

We present in this section various experimental results for the DS algorithms. First, we expand on the empirical analysis of the DSSAT bandit problem introduced in Section~\ref{sec::xp} and compare various competitor algorithms. In particular, we show that each of the three settings we introduced, namely bounded with unknown but detectable upper bound (BDS), unbounded with a quantile condition (QDS) and robust (RDS), are eligible assumptions for simulated grain yields, providing new modeling tools for the practitioner.

Going further, we detail another setting for which QDS and RDS apply but that is outside the scope of SPEF algorithms: the Gaussian mixtures distributions. This may be of practical relevance as Gaussian mixtures can be used to estimate arbitrary densities. 

We also test the sensitivity of the DS algorithms to their exploration bonus. For BDS, we consider a toy bandit problem with uniform arms and provide heuristics to tune the hyperparameter $\rho$ that complete the theoretical recommendations. For RDS, we follow the experimental setting of the robust UCB for light-tailed distributions presented in \cite{ashutosh2021bandit} and show the superiority of the Dirichlet Sampling approach for different bonus functions $\rho_n$.

Finally, we note that a by-product of our analysis of Boundary Crossing Probabilities provides an asymptotic result on the empirical $\kinf$ operator, at the root of the regret analysis of Dirichlet Sampling. We illustrate it on a few classical families of distributions.

\subsection{Summary of competitor algorithms}

We first present in Table~\ref{table:optimal_algo_comparison} details on the algorithms we study in this section, with the hypothesis they make on the distributions of the arms and the knowledge they require. This is a non-exhaustive list, and for a more detailed (but still non-exhaustive) list we refer the reader to Section~\ref{sec::introduction}. Except for UCB1 and Binarized TS (which are classical benchmarks), we choose the other competitors because they target asymptotic optimality in the families of distributions they consider.

\begin{table}[H]
	\caption{Comparison of competitor bandit algorithms matching the Burnetas \& Katehakis bound for various assumptions on the arm distribution $\nu$. Elements listed as parameters are considered prior knowledge and are used within the algorithm.}
	\label{table:optimal_algo_comparison}
	\centering
	\begin{tabular}{lll}
		\cmidrule(r){1-3}
		Algorithm     & Scope for optimality & Algorithm parameters \\
		\midrule
		\begin{tabular}[c]{@{}l@{}}UCB1\\ \cite{auer2002finite}\end{tabular} & \begin{tabular}[c]{@{}l@{}}$\sigma$-sub-Gaussian\\(not optimal)\end{tabular}& $\sigma$ \\
		\begin{tabular}[c]{@{}l@{}}Binarized TS\\ \cite{TS12AG}\end{tabular} & \begin{tabular}[c]{@{}l@{}}$\text{Supp}(\nu)\subset [b, B]$\\(not optimal)\end{tabular}& $b$, B\\
		
		\begin{tabular}[c]{@{}l@{}}kl-UCB\\ \cite{KL_UCB}\end{tabular} & SPEF $\left( \nu_{\theta}\right)_{\theta\in\Theta}$& $\KL(\nu_{\theta}, \nu_{\theta'})$     \\
		\begin{tabular}[c]{@{}l@{}}Empirical KL-UCB\\ \cite{KL_UCB}\end{tabular}&$\text{Supp}(\nu)\subset \left(b, B\right]$& $B$\\
		\begin{tabular}[c]{@{}l@{}}IMED\\ \cite{Honda15IMED}\end{tabular}&\begin{tabular}[c]{@{}l@{}}SPEF $\left( \nu_{\theta}\right)_{\theta\in\Theta}$\\ $\bE_{X\sim\nu}[e^{\lambda X}] < +\infty$\end{tabular}& $\KL(\nu_{\theta}, \nu_{\theta'})$\\
		\begin{tabular}[c]{@{}l@{}}Empirical IMED\\ \cite{Honda15IMED}\end{tabular}&\begin{tabular}[c]{@{}l@{}}$\text{Supp}(\nu)\subset \left(-\infty, B\right]$\\ $\bE_{X\sim\nu}[e^{\lambda X}] < +\infty$\end{tabular}& $B$\\
		\begin{tabular}[c]{@{}l@{}}RB-SDA\\ \cite{SDA}\end{tabular}& SPEF $\left( \nu_{\theta}\right)_{\theta\in\Theta}$& Non-parametric\\
		\begin{tabular}[c]{@{}l@{}}TS\\ \cite{TS_1933} \cite{korda13TSexp}\end{tabular}& SPEF $\left( \nu_{\theta}\right)_{\theta\in\Theta}$& Suitable SPEF prior/posterior\\
		\begin{tabular}[c]{@{}l@{}}NPTS\\ \cite{Honda}\end{tabular}& $\text{Supp}(\nu)\subset \left[b, B\right]$& $B$\\
		\bottomrule
	\end{tabular}
\end{table}

\subsection{DSSAT bandit}\label{subsec:dssat_full}

In this section, we present additional competitors (see Table~\ref{table:optimal_algo_comparison}) on the DSSAT bandit problem, a 7-armed stochastic bandit where each arm corresponds to a simulated dry grain yield for a given planting date (see Figure~\ref{fig:dssat_distr_all}).

\begin{figure}[!ht]
	\centering
	\includegraphics[width=\textwidth]{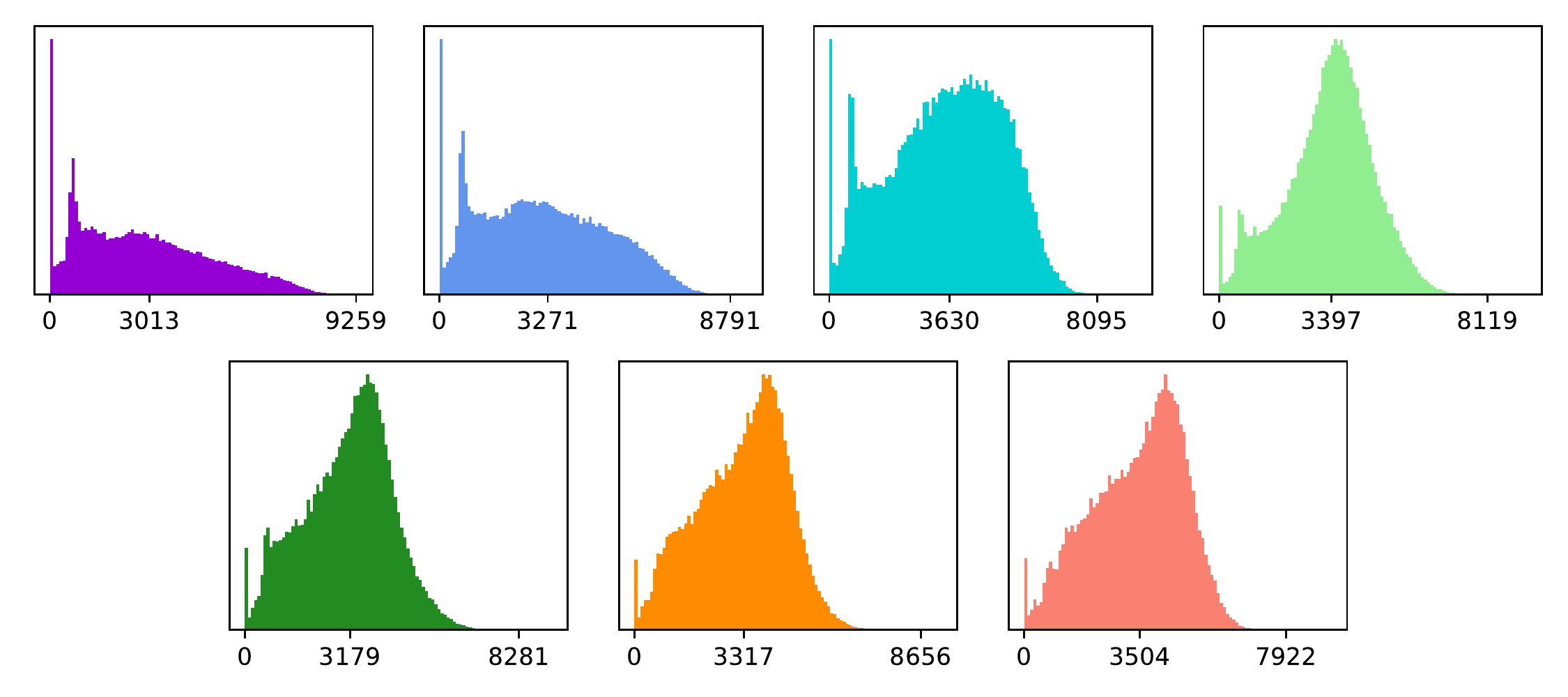}
	\caption{Distribution of simulated dry grain yield (kg/ha) for seven different planting dates over $10^6$ samples. Reported on the x-axis are the distribution minimum, mean and maximum values. The optimal arm is the third one (mean 3630 kg/ha).}
	\label{fig:dssat_distr_all}
\end{figure}

Assuming yield distributions are bounded, one can use classical algorithms such as UCB1 \citep{auer2002finite} or Thompson Sampling with Beta prior using the binarization trick introduced in \citep{TS12AG}. These algorithms enjoy logarithmic regret without the optimal rate of \citep{burnetas96LB}.

Other bounded algorithms include empirical IMED \citep{Honda15IMED} and NPTS \citep{Honda}. The former is based on the calculation of $\kinf$ indices inspired by the Burnetas-Katehakis lower bound. We distinguish IMED, which relies on a SPEF assumption to explicitly compute the $\kinf$ and therefore falls short of the scope of DSSAT, from empirical IMED, which solves the convex optimization problems defined by the $\kinf$ of the empirical distribution of each arm and requires boundedness. 

Such bounded algorithms require the explicit knowledge of an upper bound on the support of the arms distributions. To represent the fact that a tight bound is sometimes unknown to the practitioner (uncertain environment, possibility of yet unobserved black swan events...) we run two variants of the above algorithms, one with the exact maximum yield across all simulated data, which we believe is a strong prior information, and one with the same bound inflated by $50\%$, which we deem a conservative estimate.

Finally, RB-SDA \citep{SDA} is a recent sub-sampling algorithm based on a similar round-based structure as our DS algorithms. Its optimality is only established under tail conditions satisfied by some SPEF; in particular, despite its appealing regret growth on this specific instance of DSSAT, it enjoys none of the theoretical guarantees of the previous algorithms and is shown for empirical comparison only. Note that although it has been analyzed under strong parametric assumptions, the algorithm itself is non-parametric, and in particular is agnostic to the choice of the upper bound.

We run the three instances of DS algorithms, which we believe capture three aspects of the DSSAT problem that are of practical interest: the boundedness without the need to know the bound a priori (BDS), the robustness in face of unknown distributions (RDS) or the assumption that the conditional Value-at-Risk at some defined level $\alpha$ is a meaningful summary of tail statistics (QDS). All three compares similarly with the other optimal algorithms using the exact upper bound, RDS being the overall winner, and significantly outperform their conservative counterparts.

Results are reported in Figure~\ref{fig:dssat_bandit_full} and Table~\ref{table:dssat_full_regret}. As expected, UCB1 and binarized TS perform poorly on the DSSAT problem, hinting that this particular bandit instance is not easy and requires more sophisticated methods. RB-SDA achieves good performances but exhibits larger dispersion than other methods ($95\%$ quantile is $0.99\times 10^6$, standard deviation is $0.26\times 10^6$), meaning that some runs in the Monte Carlo simulations suffer high regret; we interpret this as evidence that RB-SDA operates outside its theoretical scope here and is therefore not backed by strong regret guarantees. The DS algorithms achieve similar or even slightly better regret than empirical IMED and NPTS using the prior knowledge of the true upper bound. However the latter two suffer from using the conservative upper bound in place of the true bound. Note that contrary to RB-SDA, empirical IMED and NPTS remain theoretically sound in both the prior knowledge and  conservative case, but the larger bound drives the exploration-exploitation balance towards more exploration than is optimal. Finally, the tuning of the $\rho$ bonus in the DS algorithms is done using plausible heuristics (see Appendix \ref{subsec:xp_bds}) and have not been optimized to suit this particular problem.

\begin{table}
	\begin{center}
		\caption{Regret on DSSAT bandit at $T=10^4$, over $N=5000$ independent simulations. Scale = $10^6$.}\label{table:dssat_full_regret}
		\begin{tabular}{cccc}
			Algorithm & 5\% quantile &  Mean ($\pm$ standard deviation)& 95\% quantile\\
			\hline
			UCB1 & $1.56$ & $1.74 \pm 0.11$ & $1.92$ \vspace{1.2mm} \\
			UCB1 (conservative) & $2.00$ & $2.13 \pm 0.08$ & $2.26$ \vspace{1.2mm} \\
			TS Binarization & $0.72$ & $1.36 \pm 0.46$ & $2.20$ \vspace{1.2mm} \\
			TS Binarization (conservative) & $0.94$ & $1.70 \pm 0.50$ & $2.57$ \vspace{1.2mm} \\
			Empirical IMED & $0.23$ & $\textbf{0.38} \pm 0.13$ & $\textbf{0.58}$ \vspace{1.2mm} \\
			Empirical IMED (conservative) & $0.29$ & $0.42 \pm 0.10$ & $0.60$ \vspace{1.2mm} \\
			NPTS & $0.30$ & $0.46 \pm 0.14$ & $0.69$ \vspace{1.2mm} \\
			NPTS (conservative) & $0.39$ & $0.55 \pm 0.13$ & $0.77$ \vspace{1.2mm} \\
			RB-SDA & $\textbf{0.20}$ & $0.42 \pm 0.26$ & $0.99$ \vspace{1.2mm} \\
			BDS & $0.31$ & $0.47 \pm 0.13$ & $0.68$ \vspace{1.2mm} \\
			RDS & $0.22$ & $\textbf{0.38} \pm 0.16$ & $0.63$ \vspace{1.2mm} \\
			QDS & $0.25$ & $0.41 \pm 0.14$ & $0.64$ \vspace{1.2mm} \\
		\end{tabular} \label{fig:dssat_full_table}
	\end{center}
\end{table}

\begin{figure}[!ht]
	\centering
	\begin{subfigure}[b]{0.49\textwidth}
		\centering
		\includegraphics[height=0.22\textheight]{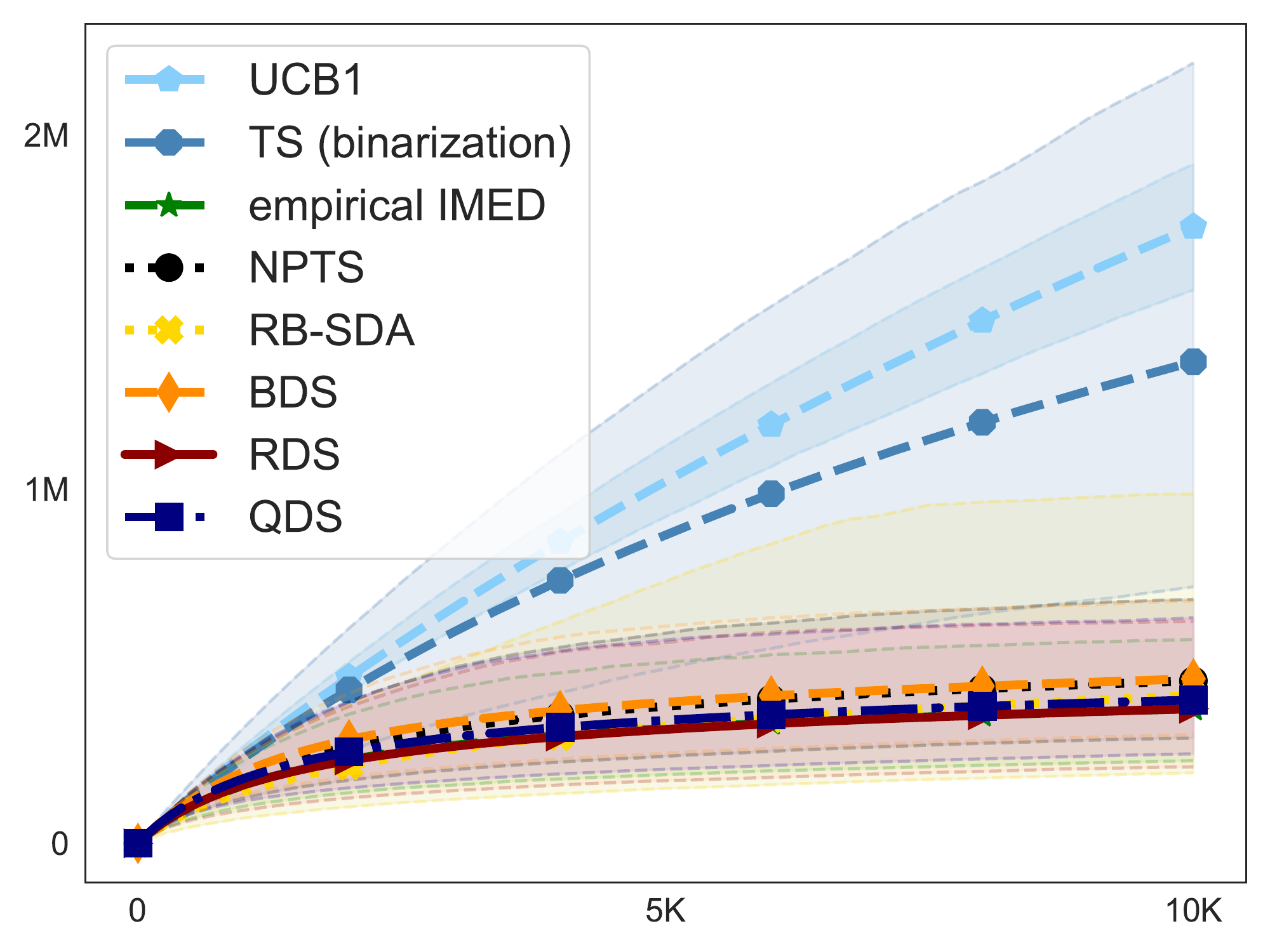}
	\end{subfigure}
	\begin{subfigure}[b]{0.49\textwidth}
		\centering
		\includegraphics[height=0.22\textheight]{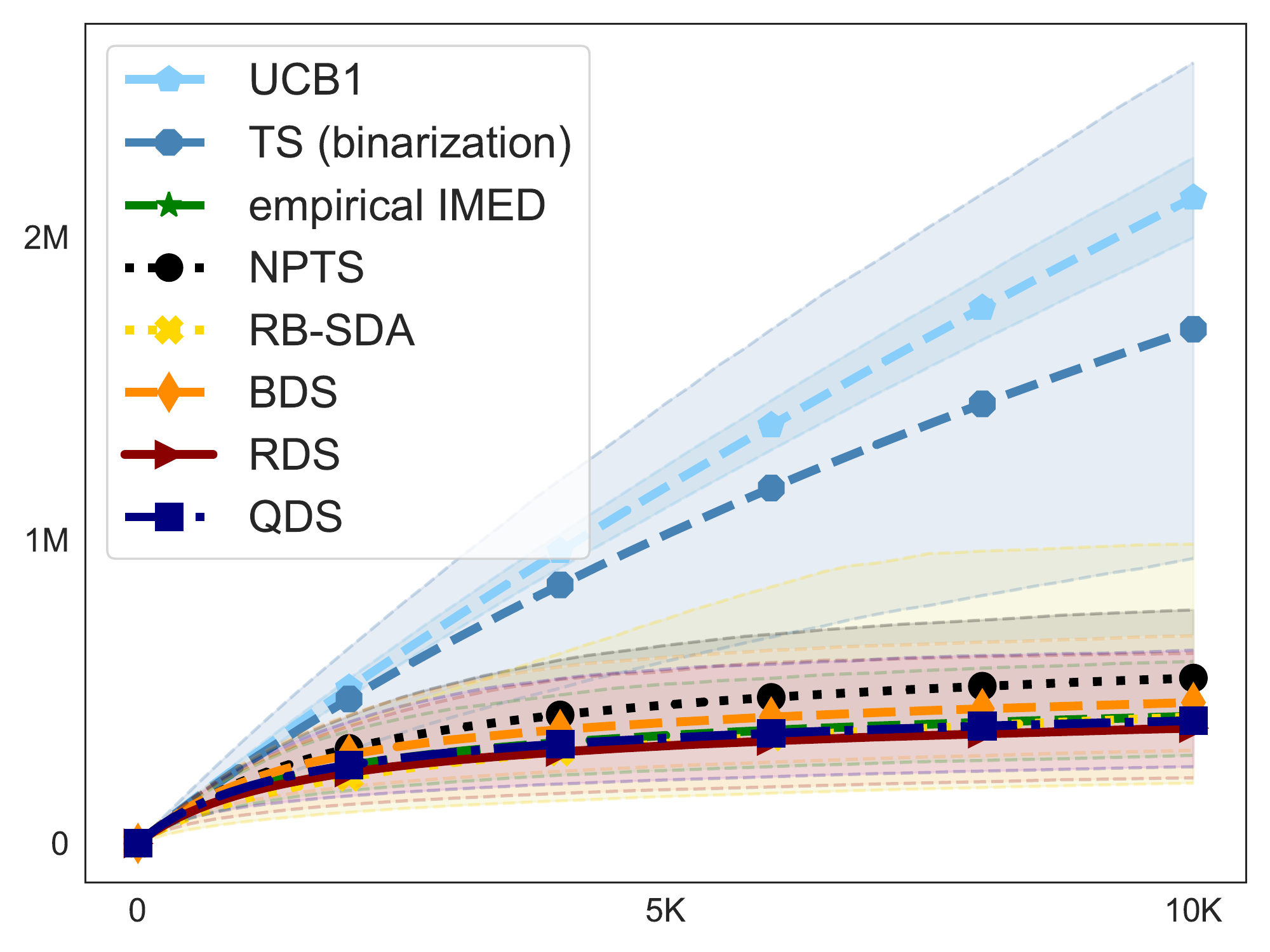}
	\end{subfigure}
	\caption{Average regret on $5000$ simulations and horizon $T=10^4$. Dashed lines correspond to 5\%-95\% regret quantiles. UCB1, Binarized Thompson Sampling, Empirical IMED and NPTS are run with exact upper bounds around $1.5\times 10^4$ kg/ha (left) and the conservative upper bound $1.5\times 10^4$ kg/ha (right). BDS: $\rho=4$. RDS: $\rho_n = \sqrt{\log(1\!+\!n)}$. QDS: $\rho=4, \alpha=5\%$.}
	\label{fig:dssat_bandit_full}
\end{figure}

\subsection{Gaussian Mixture}\label{subsec:gaussian_mixture}

Many real-world situations (loss profile of a portfolio of financial assets, crop yields, statistics of heterogeneous populations...) exhibit multimodal distributions. The Gaussian mixture model is perhaps the simplest example of such distributions and is ubiquitous in many areas of machine learning and engineering (speech recognition, clustering...), in particular as a nonparametric model for kernel density estimation. Still, to the best of our knowledge, it escapes the scope of current optimal bandit methods as it is neither bounded nor SPEF. Thanks to the different sets of assumptions in which they operate, both RDS and QDS are eligible algorithms to tackle the problem of sequential decision-making in a Gaussian mixture environment, at the cost of slightly larger-than-logarithmic regret and slightly lower $\kinf$ rate respectively. 

We consider two arms distributed as a $50\%$-$50\%$ independent mixture of $\cN(-0.3, 0.5^2)$ and $\cN(1.3, 0.5^2)$ and a $10\%$-$80\%$-$10\%$ independent mixture of $\cN(-1.5, 0.5^2)$, $\cN(0.6, 0.5^2)$ and $\cN(2.5, 0.5^2)$. Note that both mixtures have total variance equal to $0.5^2$. Due to the lack of theoretically grounded benchmark, we run three SPEF algorithms (kl-UCB, IMED and Thompson Sampling) assuming the arms belong to the SPEF of Gaussian distributions with fixed variance $0.5^2$. This is an example of \emph{model misspecification}.

We run RDS with $\rho_n = \sqrt{\log(1\!+\!n)}$, which matches the asymptotic growth rate of the maximum of i.i.d Gaussian samples, and QDS with $\alpha=5\%$, $\rho=4$; we recall that Appendix~\ref{app::QDS_fam_examples} shows empirical evidence that the quantile condition required by QDS holds for a large variety of $\alpha$ and $\rho$ in the case of Gaussian tails. Note that the use of QDS in this context is technically out of scope of Theorem~\ref{th::QDS} since Gaussian mixtures are not lower bounded; we believe however that this is an artifact of our proof technique that could lifted with a finer analysis.

Results are reported in Figure~\ref{fig:gaussian_mixture}. Both RDS and QDS outperform other existing methods; in particular, among the misspecified SPEF algorithms, only IMED exhibit comparable regret growth. As this bandit problem is complicated (small optimality gap, non-SPEF distributions), all algorithms have a relatively large variance.

\begin{figure}[H]
	\centering
	\begin{subfigure}[b]{0.49\textwidth}
		\centering
		\includegraphics[height=0.22\textheight]{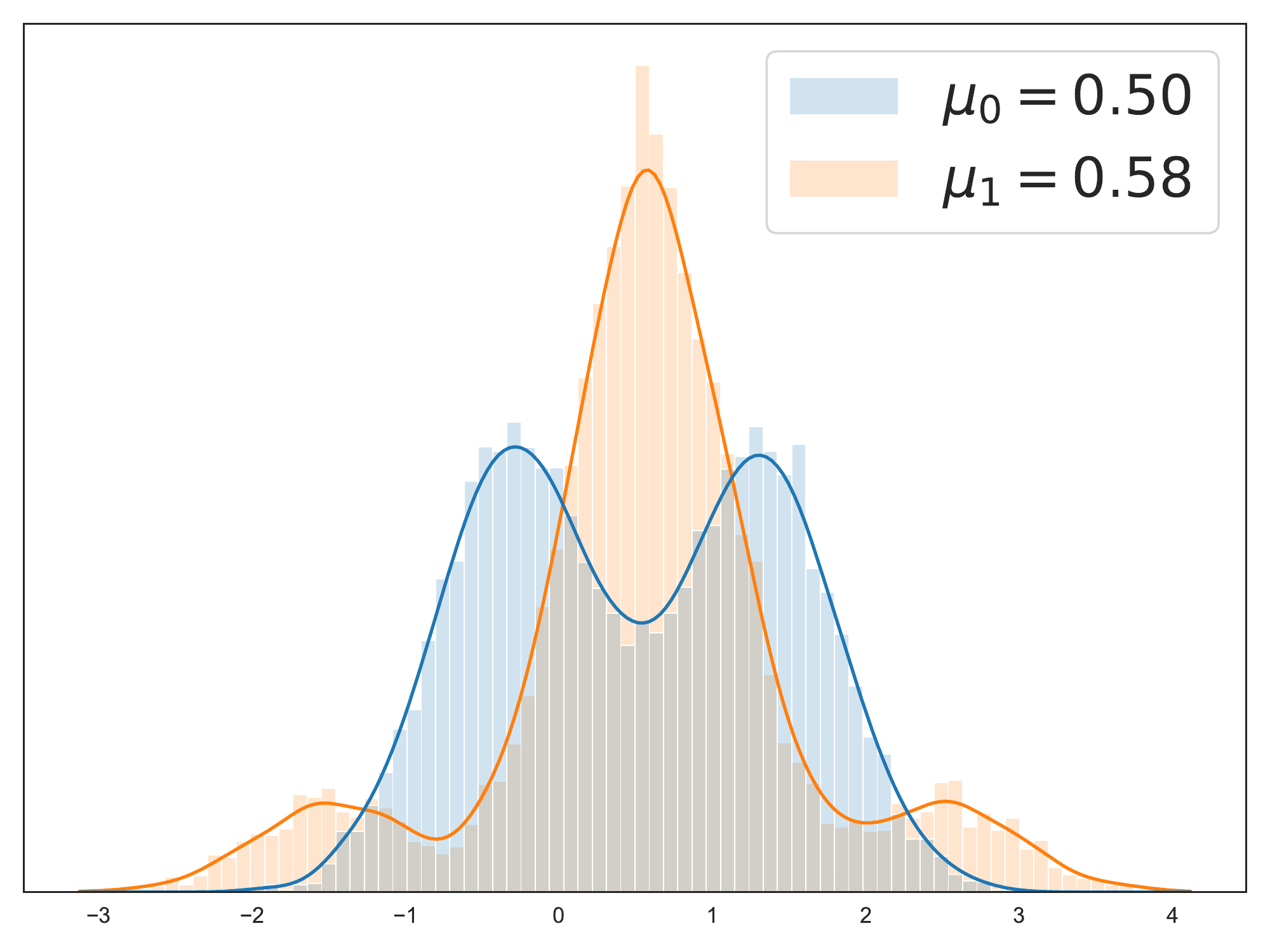}
	\end{subfigure}
	\begin{subfigure}[b]{0.49\textwidth}
		\centering
		\includegraphics[height=0.22\textheight]{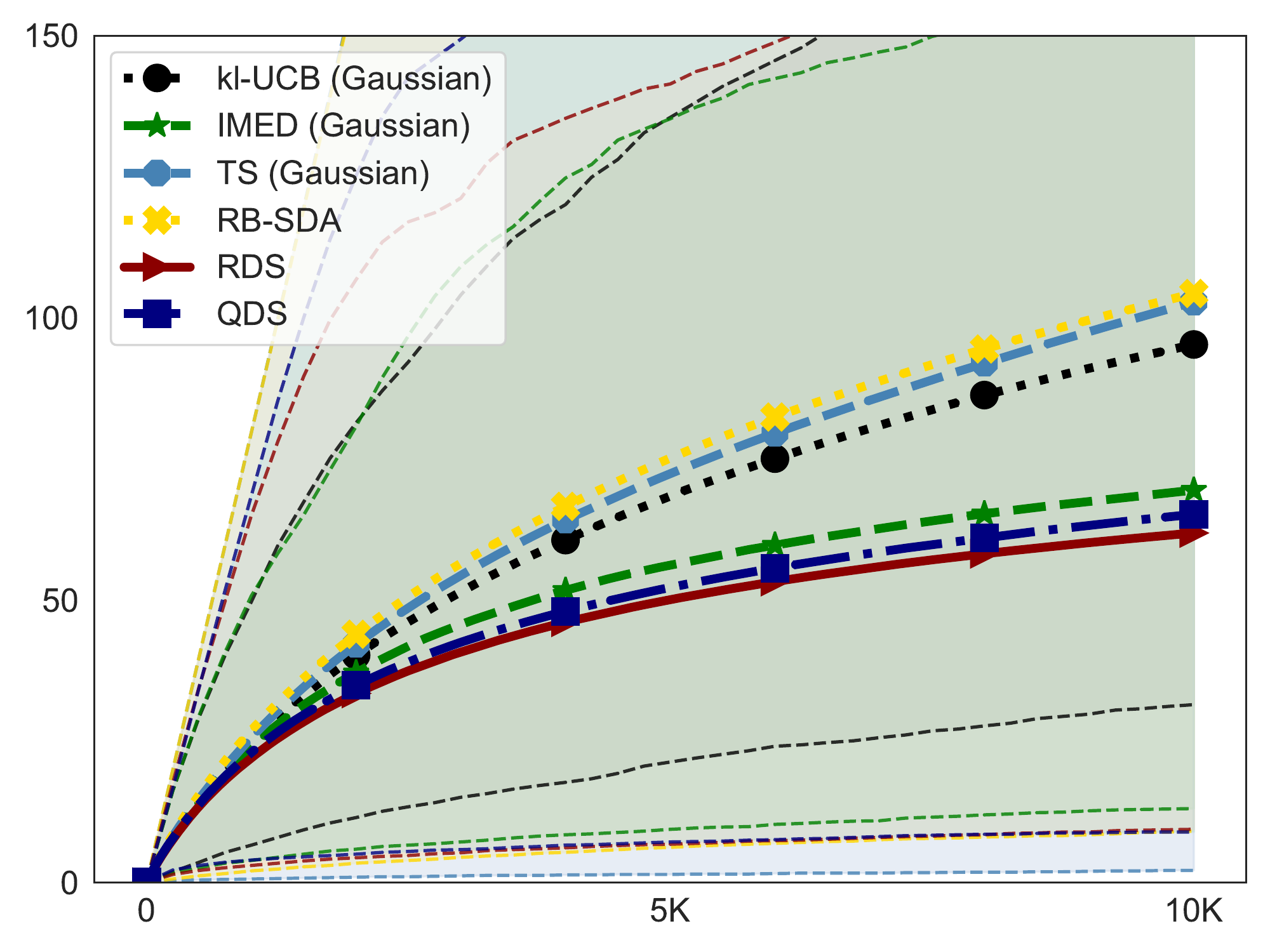}
	\end{subfigure}
	\caption{Left: Gaussian mixture arms ($10^4$ samples each). Right: average regret on $5000$ simulations and horizon $T=10^4$. kl-UCB, IMED and Thompson Sampling are run assuming Gaussian arms with same variance as the mixtures. RDS: $\rho_n = \sqrt{\log(1\!+\!n)}$. QDS: $\rho=4, \alpha=5\%$.}
	\label{fig:gaussian_mixture}
\end{figure}

\subsection{BDS parameters sensitivity}\label{subsec:xp_bds}

We study the sensitivity of BDS to its parameter $\rho$. Theorem~\ref{th::BDS} suggests to scale the exploration bonus $B_{\rho, \gamma}$ as $\rho=-1/\log(1-p)$, which is a proxy of an upper bound of $1/\left( 1 - F(\mu_1)\right)$ in Lemma~\ref{lem::cond_fixed_bonus}. We believe this bonus to be rather conservative when $p$ is small and the distributions considered exhibit little skewness; as an example, if a distribution is such that at most $25\%$ of its mass is located to the right of the optimal mean reward $\mu^*$, $\rho \approx 4$ should be a suitable tuning.

To investigate this, we consider a toy bandit instance with two arms following uniform distributions on $[0, 1]$ and $[0.2, 0.9]$ respectively (note that the upper bound is different for each arm yet the distribution of mass near their respective bounds is the same, thus fitting the setting of BDS). These distributions are shown in Figure~\ref{fig:bds_sensi}, and in particular their means are $0.5$ and $0.55$ respectively. For $\gamma=0.1$, we compute the expected regret of BDS obtained with the theoretical tuning $\rho=-1/\log(1-p)\simeq 9.5$, and compare it with other choices of $\rho$. Figure~\ref{fig:bds_sensi} shows that only the most extreme tuning $\rho=50$ exhibits significant, albeit still sublinear, regret. Small deviations from the theoretical tuning yields similar regret, the heuristic $\rho=4$ discussed above being slightly better, which tends to confirm our belief that the analysis of Theorem $\ref{th::BDS}$ can be sharpened. Note that the exploration incentive given by $\rho$ is necessary since smaller values (e.g $\rho=0.1$) tends to accumulate more regret.

\begin{figure}[!ht]
	\centering
	\begin{subfigure}[b]{0.49\textwidth}
		\centering
		\includegraphics[height=0.22\textheight]{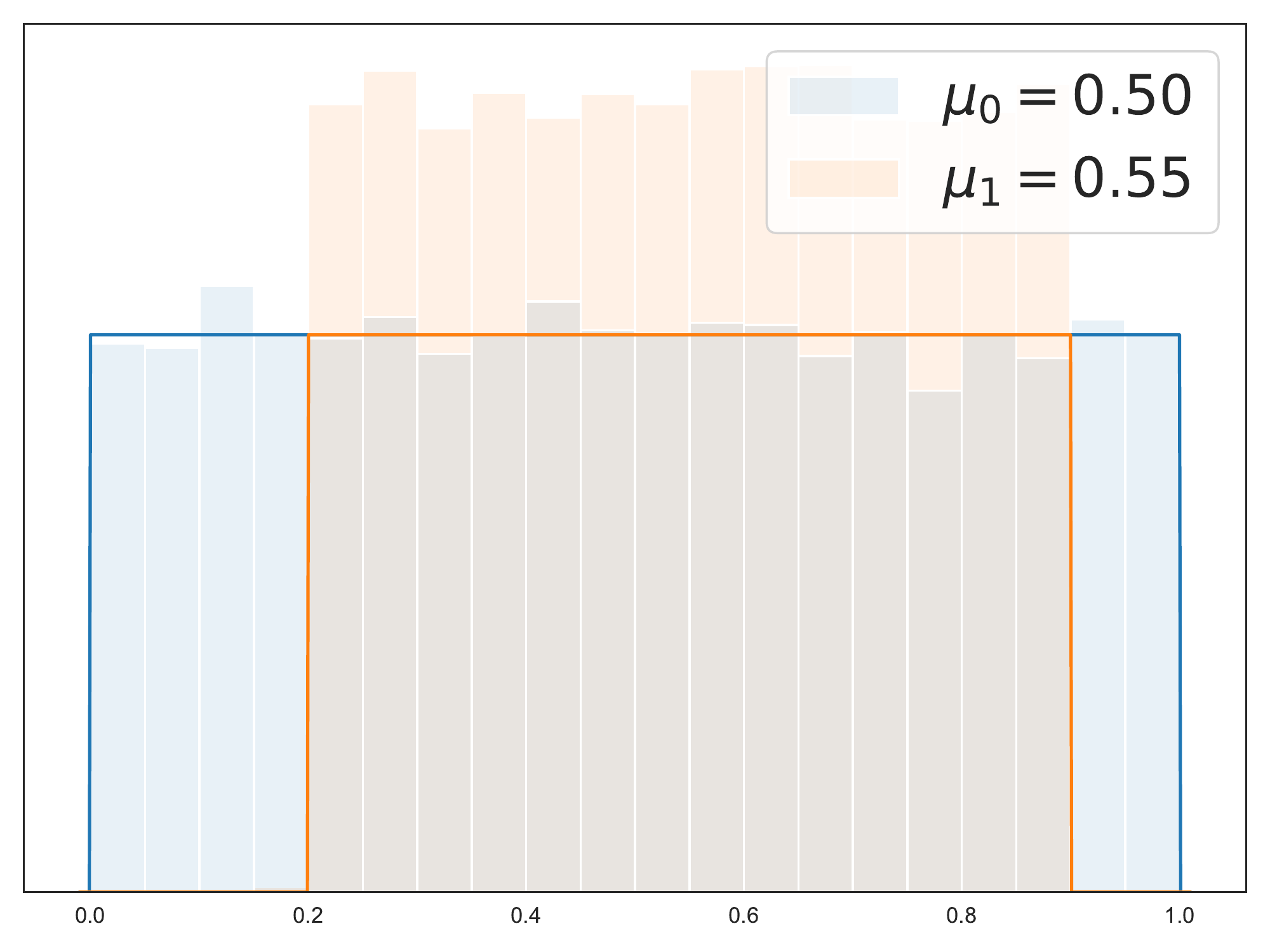}
	\end{subfigure}
	\begin{subfigure}[b]{0.49\textwidth}
		\centering
		\includegraphics[height=0.22\textheight]{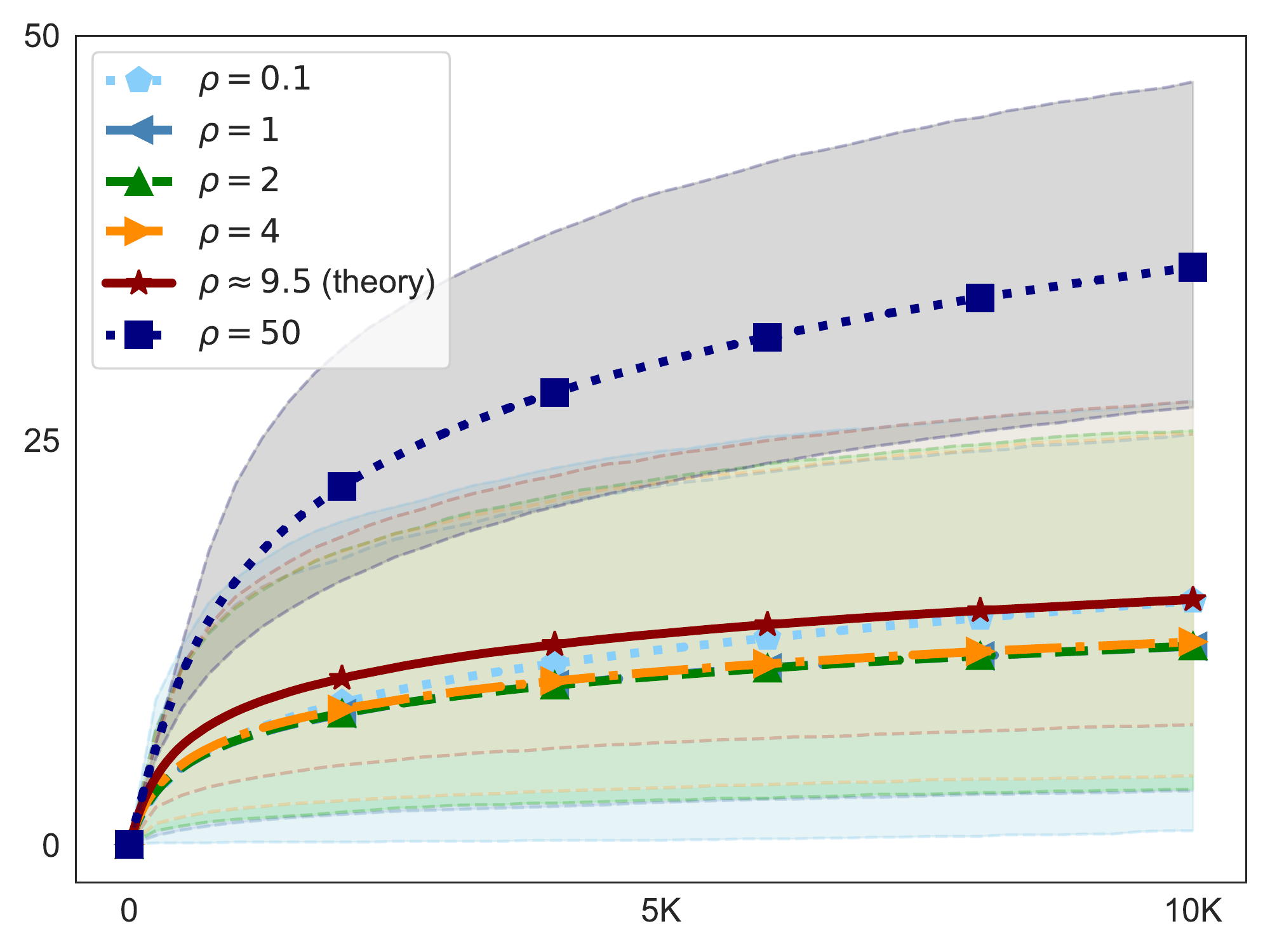}
	\end{subfigure}
	\caption{Left: bandit with two uniform arms $\cU(0, 1)$ and $\cU(0.2, 0.9)$ ($10^4$ samples each). Right: average regret on $5000$ simulations and horizon $T=10^4$ of BDS for various values of $\rho$.}
	\label{fig:bds_sensi}
\end{figure}

\subsection{Robustness for light-tailed bandits: comparison with R-UCB-LT}\label{subsec:xp_rucblt}

The study of statistically robust bandit algorithms is fairly recent, and as such is yet to have well-established benchmarks. \cite{ashutosh2021bandit} introduce R-UCB-LT, an adaptation of the standard sub-Gaussian UCB to enforce robustness w.r.t light-tailed distributions (as defined in Appendix \ref{app::notations}). We reproduce the setting of their experiment, namely two Gaussian arms $\cN(1, 1)$ and $\cN(2, 3)$, and compare several variants of both R-UCB-LT and RDS against a misspecified UCB1 (the misspecification takes the form of an overly optimistic $1$-sub-Gaussian assumption, while the second arm is only $\sqrt{3}$-sub-Gaussian). Both R-UCB-LT and RDS rely on a slowly growing exploration bonus, denoted respectively by $f$ and $\rho$; we run both algorithms with $f$ and $\rho$ equal to $\log^2$, $\log$ and $\sqrt{\log}$.

Results are reported in Figure~\ref{fig:rds_rucblt}. As expected, the misspecified UCB1 exhibits much faster regret growth than the robust algorithms. However, RDS seems to outperform R-UCB-LT, the best average regret being achieved by RDS with $\rho_n = \sqrt{\log(1\!+\!n)}$ and $\rho_n=\log(1\!+\!n)$. Furthermore, the regret to RDS appears to be somewhat monotonic (slightly increasing) with respect to the hyperparameter $\rho$, and the best results are achieved by the one matching the asymptotic growth rate of the maximum of a i.i.d Gaussian samples, as recommended by Theorem~\ref{th::RDS}. On the other hand, the best version of R-UCB-LT is obtained with $f \approx \log$ (for which we do not find a theoretical intuition) and the performance gap is significant when other bonuses are considered. We also tested R-UCB-LT with powers of $\log \log$ with similar results; we do not report these curves for the readability of the figures. In light of these results, RDS seems less sensitive to its parameter choice than R-UCB-LT, which is another sort of robustness guarantee.

\begin{figure}[!ht]
	\centering
	\begin{subfigure}[b]{0.49\textwidth}
		\centering
		\includegraphics[height=0.22\textheight]{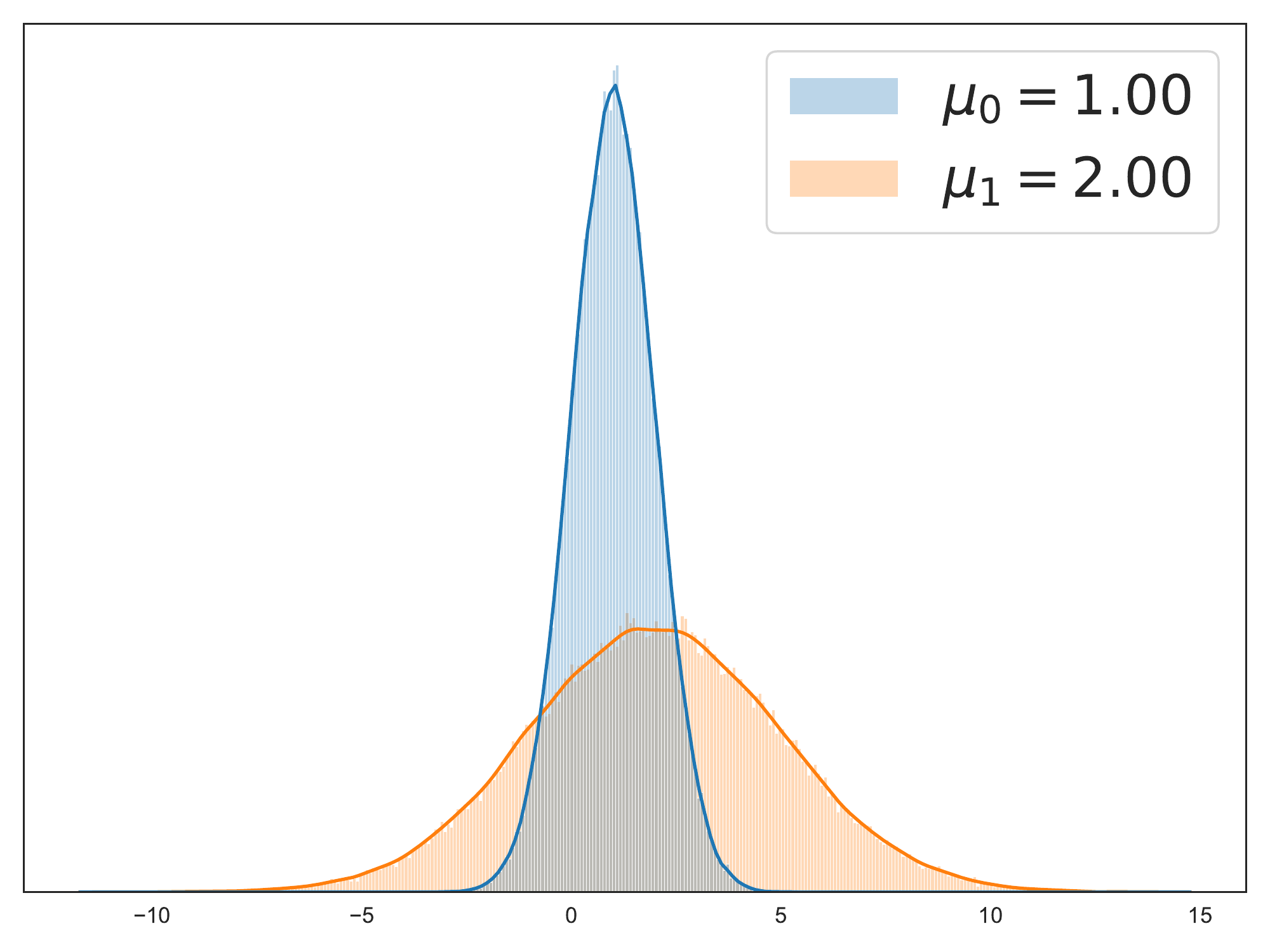}
	\end{subfigure}
	\begin{subfigure}[b]{0.49\textwidth}
		\centering
		\includegraphics[height=0.22\textheight]{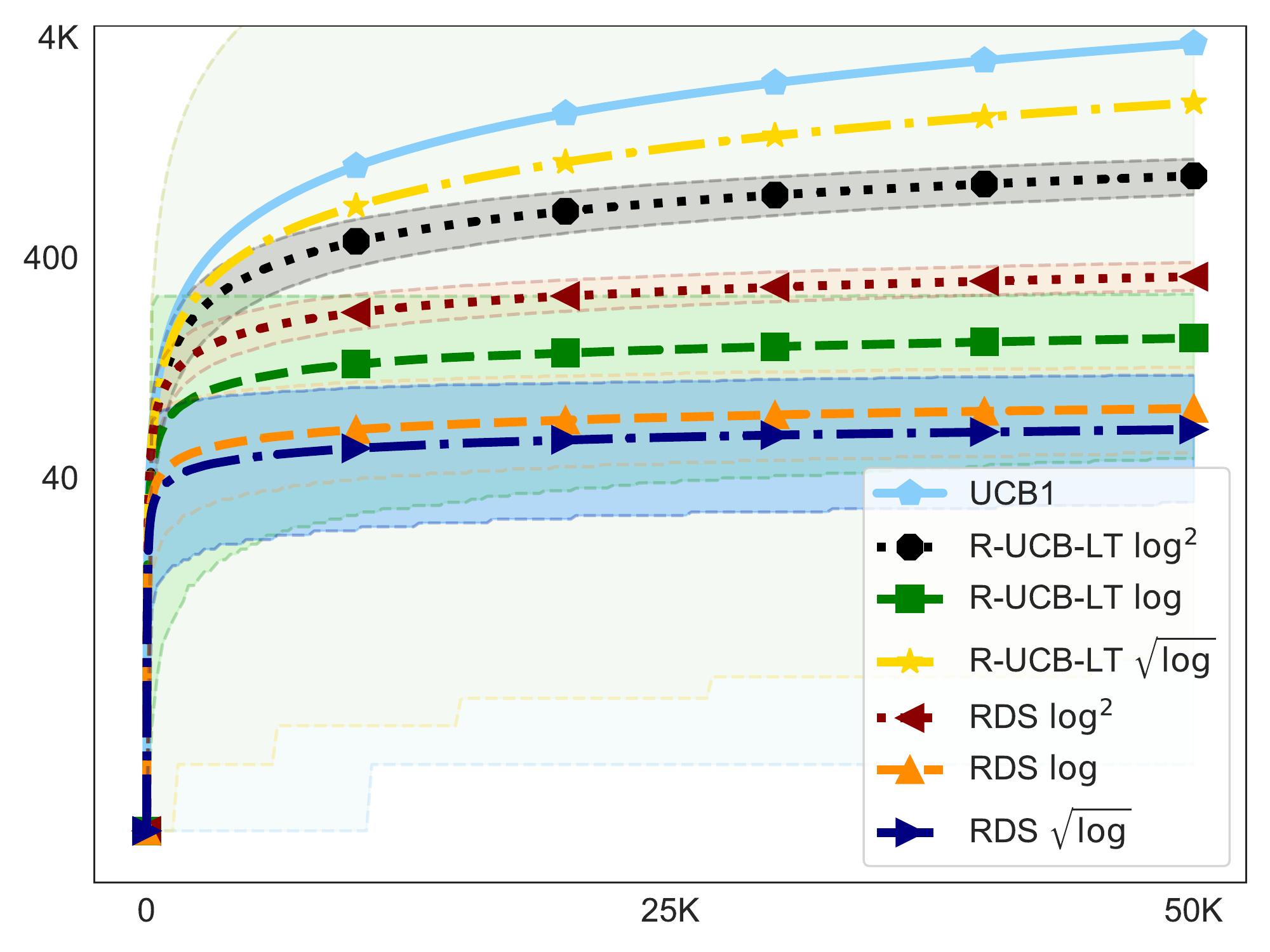}
	\end{subfigure}
	\caption{Left: Gaussian arms $\cN(1, 1)$ and $\cN(2, 3)$ ($5\times10^4$ samples each). Right: average regret (in log scale) on $5000$ simulations and horizon $T=5\times 10^4$. UCB1 is run assuming a $1$-sub-Gaussian instance.}
	\label{fig:rds_rucblt}
\end{figure}

\subsection{Asymptotic behavior of empirical $\mathcal{K}_{\text{inf}}$}\label{subsec:xp_kinf}

A critical part of the regret analysis of DS algorithms relies on the control of the BCP by the empirical $\kinf^{\bar \cX_{n}}$ operator, which we recall is calculated over distributions bounded by $\bar \cX_{n}=\max_{i\in\{1, \dots, n\}} X_i$ for a given set of observations $\cX=(X_1, \dots, X_{n})$. However, the regret lower bound \cite{burnetas96LB} involves $\kinf^{\cF}$, calculated over distributions belonging to a base family $\cF$. The analysis of NPTS \citep{Honda} considers only distributions with bounded support, for which both operators can be made arbitrarily close in the relevant topology. We show that it is essentially the only favorable case and provide empirical evidence on the limit behavior of  $\kinf^{\bar \cX_{n}}$ for standard unbounded distributions.

Combining intuitions from Lemma \ref{lem::LB_BCP} and Lemma \ref{lem::concentration_with_kinf} we obtain the following control of the empirical $\kinf^{\bar \cX_{n}}$.

\begin{lemma}[Asymptotic behaviour of $\kinf^{\bar \cX}$]\label{lem::asympt_behaviour_emp_kinf}
	Consider a set $\cX_{n}=\left( X_1, \dots, X_{n}\right)\in\R^{n}$ and a target value $\mu \in \R$, and denote $\bar \cX_{n}=\max_{i\in\{1, \dots, n\}} X_i$ and $\nu_{\cX_n}$ the empirical distribution associated to $\cX_{n}$. Assume that $\bar \cX_{n} \geq g(n)$. Then, 
	\[\kinf^{\bar \cX_{n}}\left( \nu_{\cX_n}, \mu\right) = \cO\left( \frac{1}{g(n)}\right)\;.\]
\end{lemma}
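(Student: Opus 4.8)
The plan is to sandwich the boundary crossing probability (BCP) of the empirical distribution $\nu_{\cX_n}$ between the two bounds already established, and to read off from this the constraint they jointly impose on $\kinf^{\bar{\cX}_n}(\nu_{\cX_n},\mu)$. Concretely, I would apply Lemma~\ref{lem::concentration_with_kinf} to the set $\cX_n$, with its own maximum $\bar{\cX}_n$ playing the role of the support upper bound, which gives the upper bound
\[
\bP_{w \sim \cD_n}\left(\sum_{i=1}^n w_i X_i \geq \mu\right) \;\leq\; e^{-n\, \kinf^{\bar{\cX}_n}(\nu_{\cX_n}, \mu)}\,,
\]
and then Lemma~\ref{lem::LB_BCP}, which under the hypothesis $\bar{\cX}_n \geq g(n)$ supplies the matching lower bound
\[
\bP_{w \sim \cD_n}\left(\sum_{i=1}^n w_i X_i \geq \mu\right) \;\geq\; \exp\left(-n\, \frac{\bar{\Delta}_n^+}{g(n)-\mu}\right)\,,
\]
where $\bar{\Delta}_n^+$ is the empirical positive gap $\tfrac{1}{n}\sum_{i:\, X_i < \bar{\cX}_n}(\mu - X_i)^+$. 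The point is that both bounds decay exponentially in $n$ at comparable speed, so comparing them transfers information about the BCP into information about $\kinf^{\bar{\cX}_n}$.

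Chaining the two inequalities gives $\exp\!\left(-n\,\bar{\Delta}_n^+/(g(n)-\mu)\right) \leq e^{-n\,\kinf^{\bar{\cX}_n}(\nu_{\cX_n},\mu)}$; taking logarithms and dividing by $n$ collapses everything to
\[
\kinf^{\bar{\cX}_n}(\nu_{\cX_n}, \mu) \;\leq\; \frac{\bar{\Delta}_n^+}{g(n)-\mu}\,,
\]
valid for every $n$ large enough that $g(n) > \mu$. I want to stress that the \emph{orientation} of the two bounds is the crux: the concentration lemma alone is only an upper bound on the BCP and says nothing about $\kinf$, but once it is paired with the anti-concentration lemma it becomes an upper bound on $\kinf$. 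Tracking these directions (and the harmless index shift between the $n$- and $(n+1)$-point normalisations inherited from the two lemmas, which only costs a multiplicative factor tending to $1$) is the main bookkeeping hurdle.

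It then remains to check that the prefactor is genuinely $\cO(1)$. Since $(\mu-X_i)^+ \geq 0$, dropping the restriction $X_i < \bar{\cX}_n$ only enlarges the sum, so $\bar{\Delta}_n^+ \leq \tfrac{1}{n}\sum_{i=1}^n (\mu-X_i)^+ = \bE_{\nu_{\cX_n}}[(\mu-X)^+]$, which is a bounded empirical average (converging to $\bE_\nu[(\mu-X)^+]<\infty$ for data drawn from a fixed light-tailed $\nu$) and plays the role of the constant absorbed into the $\cO$. Finally, as $g(n)\to\infty$ we have $g(n)-\mu \sim g(n)$, whence
\[
\kinf^{\bar{\cX}_n}(\nu_{\cX_n}, \mu) \;=\; \cO\!\left(\frac{1}{g(n)}\right)\,,
\]
which is exactly the claim. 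In short, there is no hard analytic estimate to perform here: the entire content is the two-sided bracketing of the BCP, and the only care needed is in the signs of the inequalities and in certifying that the empirical positive gap stays bounded.
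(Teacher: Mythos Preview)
Your proof is correct and follows exactly the route the paper indicates: the paper merely states that the result follows by ``combining intuitions from Lemma~\ref{lem::LB_BCP} and Lemma~\ref{lem::concentration_with_kinf}'' without spelling out the details, and your sandwich argument on the BCP (with the correct orientation of the inequalities) is precisely how those two lemmas combine. The only minor addition worth noting is that the paper elsewhere (proof of Theorem~\ref{th::RDS}) invokes the direct bound $\kinf^{M}(\nu,\mu)\leq (\mu-\bE_\nu[X])/(M-\mu)$ from \cite{HondaTakemura10}, which yields the same conclusion without passing through the BCP, but your derivation is equally valid and self-contained.
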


In particular, if the distribution of $X_i$ is unbounded, $g(n)\xrightarrow[n\rightarrow +\infty]{} +\infty$ and $\kinf^{\bar \cX_{n}}\left( \nu_{\cX_n}, \mu\right) \xrightarrow[n\rightarrow +\infty]{} 0$. This shows that the $\kinf$ operator is not continuous w.r.t the family over which it is defined in the sense that the following assertions are mutually exclusive:

\begin{enumerate}[(i)]
	\item $\cF$ contains an unbounded distribution $\nu$ and $\kinf^{\cF}(\nu, \mu)>0$,
	\item $\kinf^{\bar \cX_{n}}\left( \wh{\nu}_{n}, \mu\right) \xrightarrow[n\rightarrow +\infty]{} \kinf^{\cF}(\nu, \mu)$.
\end{enumerate}

A direct consequence of this is that it makes impossible a direct generalization of NPTS to unbounded distributions while preserving logarithmic regret, forcing us to either let go of logarithmic guarantees (RDS) or assume a quantile condition and use a truncation operator to recover the continuity between the empirical $\kinf^{\bar \cX_{n}}$ and $\kinf^{\cF}$ (QDS).

Lemma~\ref{lem::asympt_behaviour_emp_kinf} only provides a control of $\kinf^{\bar \cX_{n}}$ by $\frac{1}{g(n)}$, not an symptotic equivalent; we believe however this control to be quite tight. To sharpen our intuition, we compute $\kinf^{\bar \cX_{n}}$ for various sample sizes $n$ on classical SPEF (exponential and Gaussian with fixed variance, Bernoulli), using the dual formulation of \cite{HondaTakemura10}. For the unbounded SPEF (Figure~\ref{fig:xp_kinf_unbounded}), we see the empirical $\kinf^{\bar \cX_{n}}$ decreases away from $\kinf^{\cF}$ with $n$; by contrast, the Bernoulli distribution (Figure~\ref{fig:xp_kinf_bounded}) shows no significant deviation from $\kinf^{\cF}$.

Furthermore, we empirically validate the relation between $\kinf^{\bar \cX_{n}}$ and the growth rate $g$ of the maximum of $n$ i.i.d samples. Indeed, we have $g(n)\approx \log n$ for exponential distributions and $g(n)\approx \sqrt{\log n}$ for Gaussian distributions, we therefore expect $\log \kinf^{\bar \cX_{n}}(\nu_{\cX_n}, \mu) \approx -\log \log n$ and $\log \kinf^{\bar \cX_{n}}(\wh{\nu}_n, \mu) \approx -\frac{1}{2}\log \log n$ respectively. Figure~\ref{fig:xp_kinf_lin_trend} shows the outcome of the least squares regression of $\log \kinf^{\bar \cX_{n}}(\nu_{\cX_n}, \mu)$ on $\log \log n$, which recovers approximately the expected slopes. Again, the case of the Bernoulli SPEF shows no significant dependency on $n$ as $g(n)\approx 1$.

\begin{figure}[!ht]
	\centering
	\begin{subfigure}[b]{0.49\textwidth}
		\centering
		\includegraphics[height=0.2\textheight]{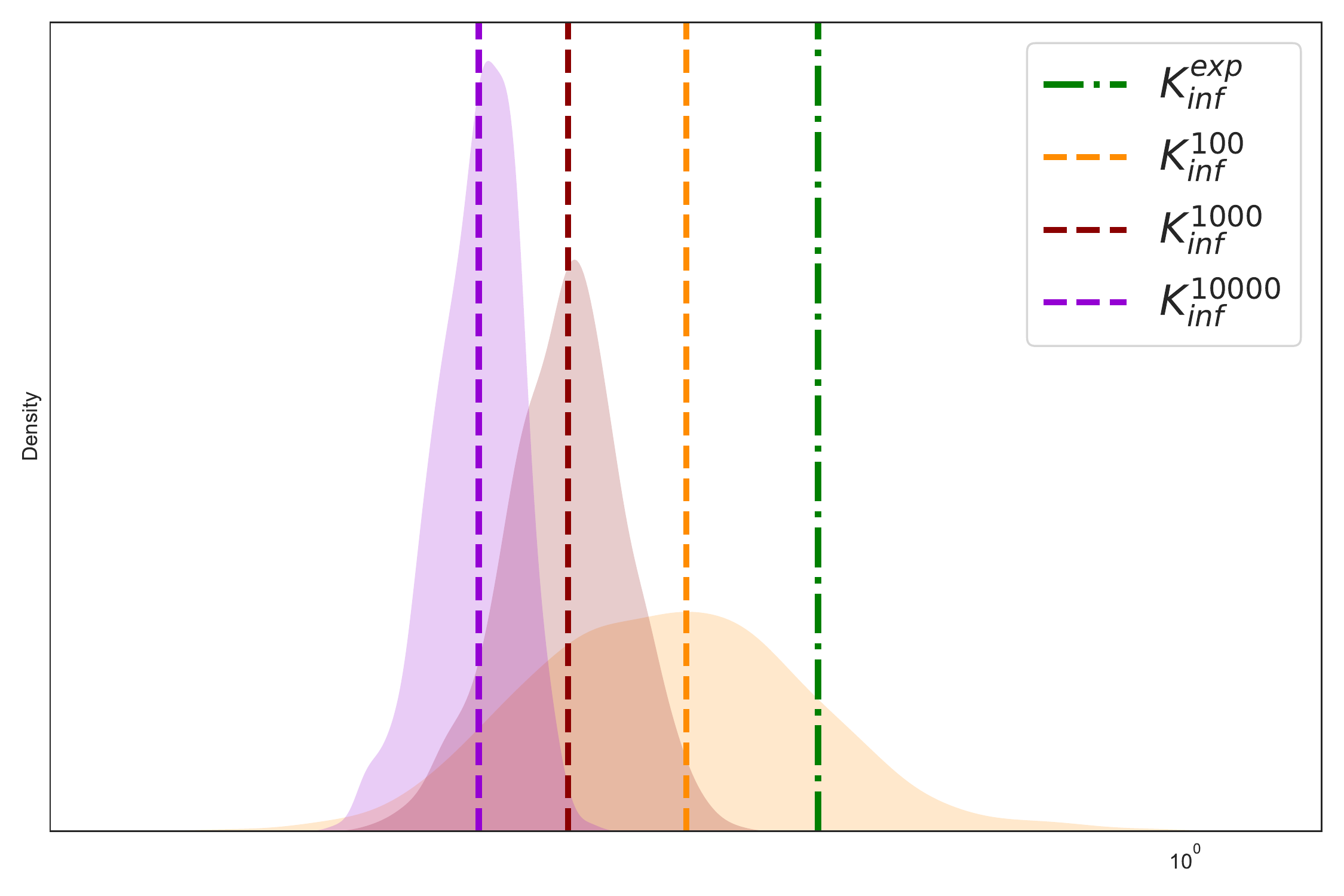}
	\end{subfigure}
	\begin{subfigure}[b]{0.49\textwidth}
		\centering
		\includegraphics[height=0.2\textheight]{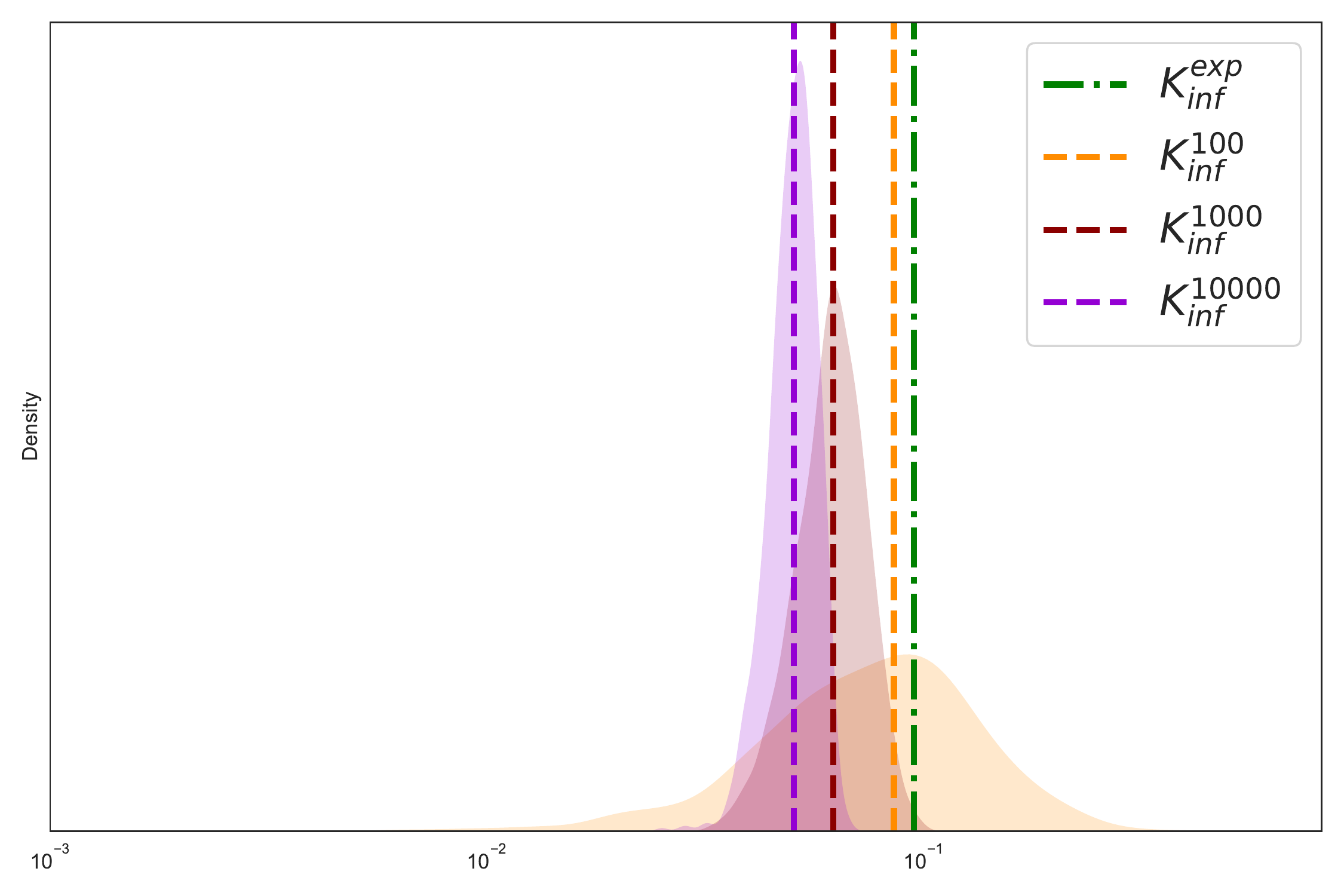}
	\end{subfigure}
	\caption{$\kinf^{\cF}(\nu, \mu^*)$ w.r.t the corresponding SPEF and empirical $\kinf^{\bar \cX_{n}}(\wh{\nu}_{n}, \mu)$ for sample size $n=10^2, 10^3, 10^4$, and $\mu^*=3$, averaged over $1000$ simulations (fitted density are shown in the background). Left: Gaussian $\nu=\cN(2, 1)$. Right: exponential $\nu=\cE(\frac{1}{2})$.}
	\label{fig:xp_kinf_unbounded}
\end{figure}

\begin{figure}[!ht]
	\centering
	\begin{subfigure}[b]{0.49\textwidth}
		\centering
		\includegraphics[height=0.2\textheight]{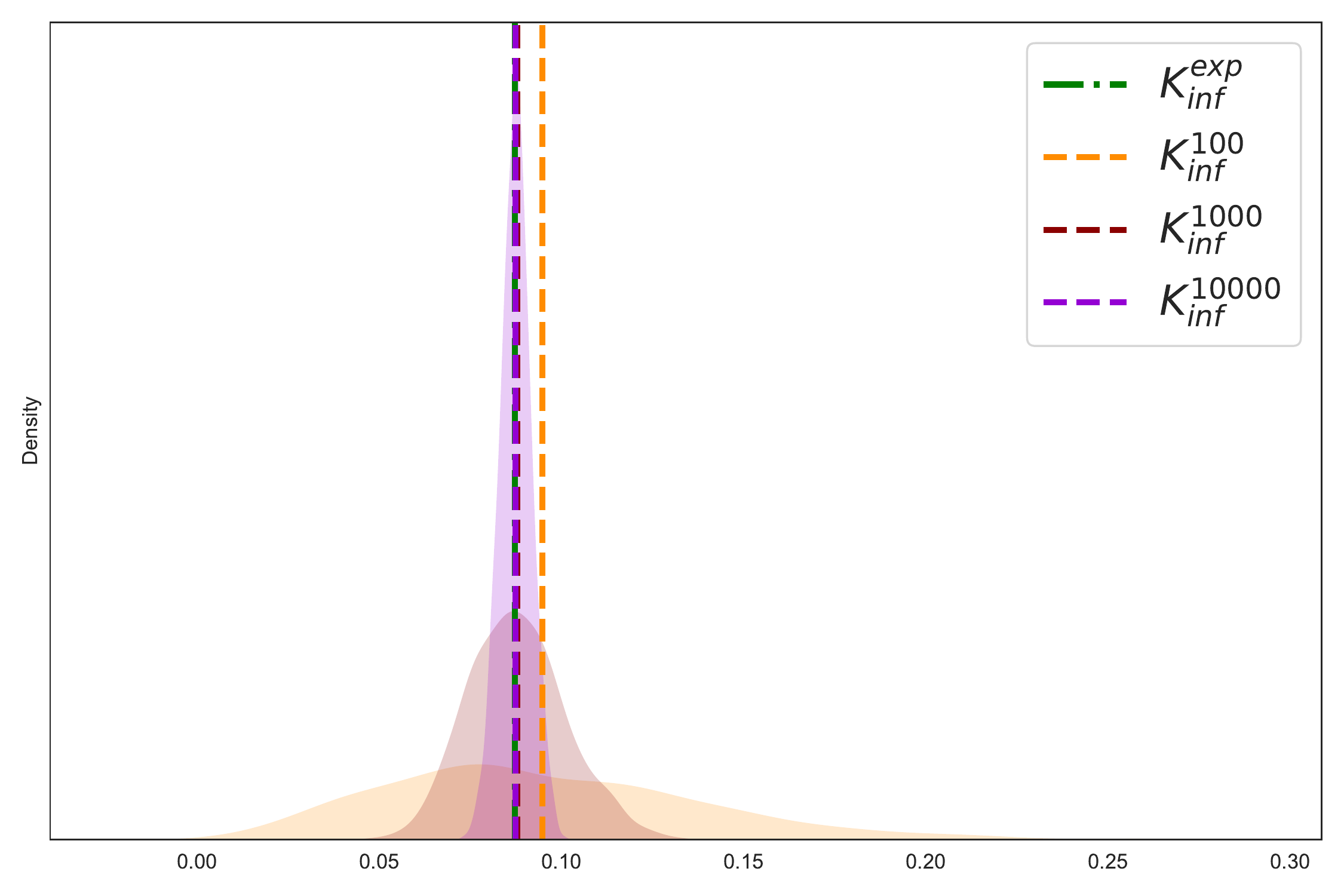}
	\end{subfigure}
	\caption{$\kinf^{\cF}(\nu, \mu^*)$ w.r.t the Bernoulli SPEF and empirical $\kinf^{\bar \cX_{n}}(\wh{\nu}_{n}, \mu)$ for sample size $n=10^2, 10^3, 10^4$, averaged over $1000$ simulations (fitted density are shown in the background).}
	\label{fig:xp_kinf_bounded}
\end{figure}

\begin{figure}[!ht]
	\centering
	\begin{subfigure}[b]{0.7\textwidth}
		\centering
		\includegraphics[height=0.3\textheight]{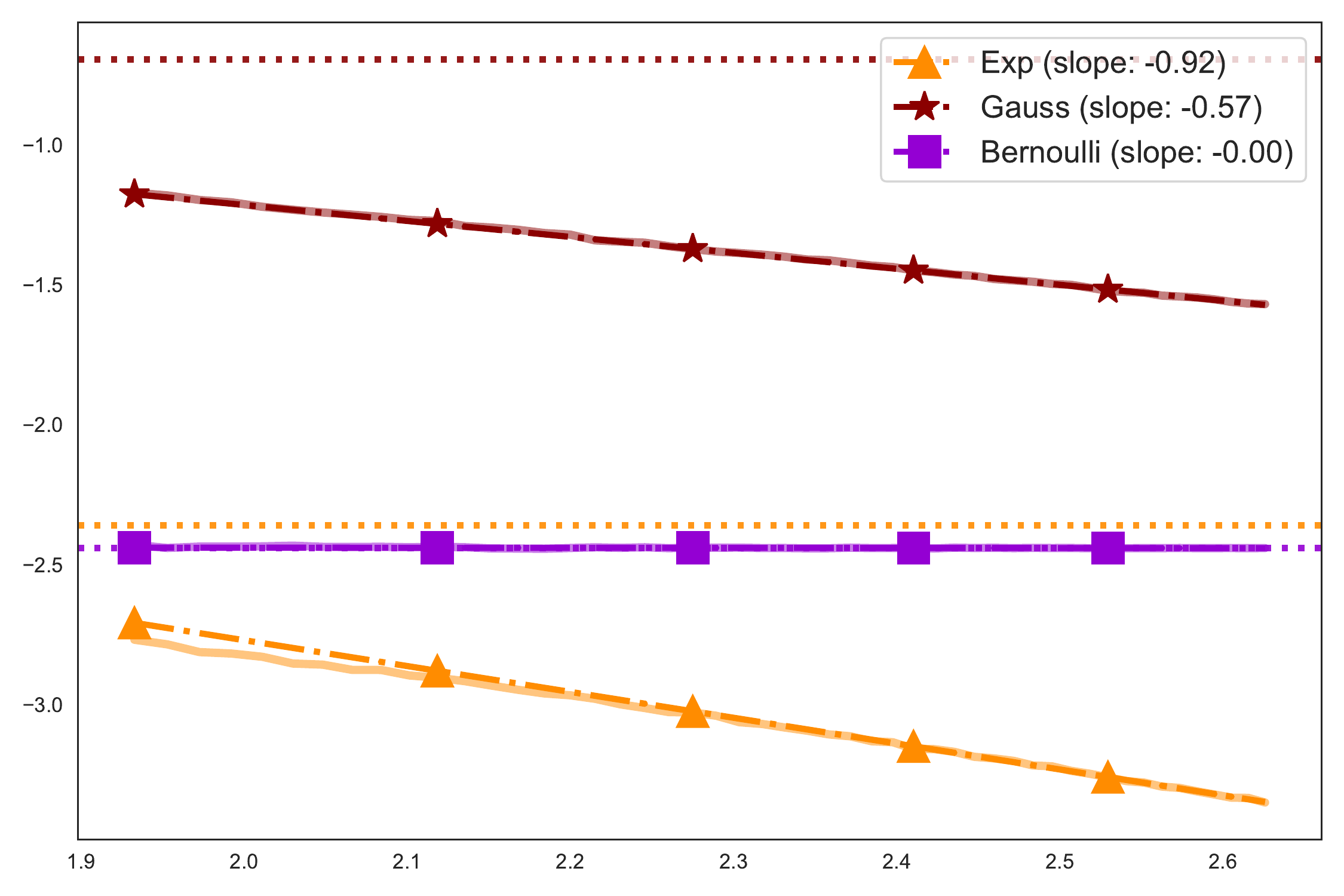}
	\end{subfigure}
	\caption{Linear fit (dashed) of $\log \kinf^{\bar \cX_{n}}(\wh{\nu}_n, \mu)$ (solid) on $\log\log n$ and resulting slopes. Dotted lines correspond to $\log \kinf^{\cF}(\nu, \mu)$ for the corresponding SPEF $\cF$. X-axis: $\log\log n$, Y-axis: $\log \kinf$.}
	\label{fig:xp_kinf_lin_trend}
\end{figure}

\end{document}